\documentclass{article} 
\usepackage{iclr2027_conference,times}

\usepackage{amsmath,amsfonts,bm}

\def\eqref#1{equation~\ref{#1}}

\def\1{\bm{1}}

\DeclareMathAlphabet{\mathsfit}{\encodingdefault}{\sfdefault}{m}{sl}
\SetMathAlphabet{\mathsfit}{bold}{\encodingdefault}{\sfdefault}{bx}{n}

\newcommand{\E}{\mathbb{E}}

\usepackage{hyperref}
\usepackage{url}
\usepackage{amsthm}
\usepackage{algorithm}
\usepackage{algpseudocode}
\usepackage{amssymb} 
\usepackage{graphicx}
\usepackage{xcolor}
\usepackage{cleveref}
\usepackage{hyperref}

\algrenewcommand\algorithmicrequire{\textbf{Input:}}
\algrenewcommand\algorithmicensure{\textbf{Output:}}

\newtheorem{theorem}{Theorem}
\newtheorem{lemma}{Lemma}

\newtheorem{definition}{Definition}
\newtheorem{remark}{Remark}

\title{Near-Optimal Reinforcement Learning with Multi-Step Transition Lookahead}

\author{Corentin Pla \\
CREST, ENSAE \\
Criteo AI Lab \\
FairPlay Joint Team \\
\texttt{c.pla@criteo.com} \\
\And
Hugo Richard \\
Criteo AI Lab \\
FairPlay Joint Team \\
\texttt{h.richard@criteo.com} \\
\And
Marc Abeille \\
Criteo AI Lab \\
FairPlay Joint Team \\
\texttt{m.abeille@criteo.com} \\
\And
Vianney Perchet \\CREST, ENSAE \\
Criteo AI Lab \\
FairPlay Joint Team \\
\texttt{v.perchet@criteo.com} \\
}

\newcommand{\midsum}{%
  \mathop{\vcenter{\hbox{\scalebox{0.7}{$\displaystyle\sum$}}}}\limits
}

\iclrfinalcopy 
\begin{document}

\maketitle

\begin{abstract}
We study reinforcement learning (RL) with transition look-ahead, where the agent may observe which states would be visited upon playing any sequence of $\ell$ actions before deciding its course of action. Although look-ahead can substantially improve achievable performance, \citet{pla2026on} showed that optimal planning with multi-step transition look-ahead is NP-hard. However, this hardness was established using a discount factor close to one. It was therefore unknown whether the problem remains hard for every discount factor, and whether near-optimal planning can nevertheless be performed efficiently. We resolve both questions. First, we show that for every fixed discount factor, exact planning remains NP-hard. Second, we introduce a randomized polynomial-time approximation scheme for every fixed look-ahead depth. Third, we extend our approach to account for unknown transitions.  We empirically validate the soundness of our results on the wind-farm storage-control benchmark of \citet{lu2025reinforcementlearningimperfecttransition}, showing that our approach, optimally accounting for $\ell$-step look-ahead information, offers substantially better performance than existing algorithms.
\end{abstract}

\section{Introduction}

Reinforcement learning (RL) studies how an agent should act in a dynamic environment to maximize cumulative reward, accounting for both immediate gains and the long-term consequences of its decisions \citep{sutton2018reinforcement}. We consider stationary Markov decision processes (MDPs), whose reward function and transition kernel do not vary over time. RL with look-ahead extends this classical framework by providing the agent with predictive information before it acts. In the transition look-ahead setting, the agent observes, at each decision time, which states would be reached by every action sequence of length at most $\ell$. Such information may arise from real-time forecasts, collaborative navigation systems, or high-fidelity simulators and world models. We refer to \citet{pla2026on,merlis2024reinforcementlearninglookaheadinformation,merlis2024valuerewardlookaheadreinforcement,lu2025reinforcementlearningimperfecttransition} for further motivation and examples. Look-ahead can substantially improve achievable performance, but exploiting it requires policies and planning methods that account explicitly for this richer information structure. Standard RL algorithms do not provide such mechanisms directly, while simply discarding the
additional information can be strictly suboptimal.

\paragraph{Related work.}

The idea of augmenting reinforcement learning with look-ahead information has recently received increasing attention. \citet{merlis2024reinforcementlearninglookaheadinformation} introduced a pseudo-polynomial algorithm for one-step transition look-ahead in the finite-horizon setting, while \citet{merlis2024valuerewardlookaheadreinforcement} studied general look-ahead horizons for \emph{reward} look-ahead, in which rewards are revealed ahead of time, focusing on  worst-case bounds on the ratio between the optimal values with and without look-ahead. Unfortunately,~\citet{pla2026on} ruled out the possibility of leveraging such extra information with reasonable computational cost: for perfect transition look-ahead, planning is polynomial-time solvable for $\ell=1$ but becomes NP-hard for every $\ell\geq 2$. To sidestep this computational barrier,~\citet{lu2025reinforcementlearningimperfecttransition} adopt a batched decision protocol: after observing an $\ell$-step forecast, the agent commits to an entire block of actions before receiving a new forecast. This effectively reduces each block to a single decision and brings the problem back to a one-step look-ahead framework. However, it differs from the original rolling look-ahead setting, where the agent can revise its decisions as the prediction window shifts, and can therefore be strictly suboptimal. Whether near-optimal planning under genuine multi-step transition look-ahead can be performed efficiently thus remains open.

The hardness result also connects to the broader literature on the computational complexity of MDP planning. Classical planning in stationary MDPs is polynomial-time solvable under the discounted criterion, with foundational complexity results due to \citet{papadimitriou1987complexity}; subsequent work has further investigated the complexity of MDP planning under different objectives and horizons \citep{mundhenk2000complexity,littman2013complexitysolvingmarkovdecision,balaji2018complexity,chen2017lowerboundcomputationalcomplexity}. Beyond these classical formulations, changes in the information structure can fundamentally alter computational complexity. For instance, \citet{walsh2009learning} show that planning with delayed feedback can become NP-hard because of the exponential blow-up of the augmented state space, while partial observability leads to even stronger hardness results \citep{papadimitriou1987complexity}. Transition look-ahead exhibits a complementary phenomenon: providing additional information also induces a large augmented state space and makes exact planning intractable (\cite{pla2026on}). This raises a natural algorithmic question: can such planning problems nevertheless be approximated efficiently without explicitly constructing the corresponding augmented MDP?

Two related paradigms provide useful starting points. First, \citet{kearns2002sparse} introduced sparse sampling, which computes near-optimal actions by exploring only a randomly sampled portion of the full look-ahead tree. Its complexity, however, is exponential in the effective horizon $(1-\gamma)^{-1}$. Second, sample-average approximation replaces intractable expectations with a fixed empirical sample and solves the resulting finite optimization problem \citep{kleywegt2002sample}. Adapting this principle to transition look-ahead raises two challenges that have not been addressed: obtaining polynomial dependence on $(1-\gamma)^{-1}$ for fixed look-ahead depth $\ell$, and ensuring that the precomputed solution remains accurate for every look-ahead window that may be observed at deployment.

Transition look-ahead can be viewed as prediction revealed before the agent acts. This connects our setting to the growing literature on algorithms with predictions. This perspective has recently been explored in several sequential decision-making settings. \citet{NEURIPS2024_9dff3b83} design a learning-augmented controller for LQR with latent perturbations, where accurate predictions lead to near-optimal performance while robustness to prediction errors is retained. \citet{lyu2025efficientlysolvingdiscountedmdps} study discounted MDPs equipped with predictions of the transition matrix and show that such predictions can reduce sample complexity, while \citet{li2023blackboxadvicelearningaugmentedalgorithms} establish consistency--robustness trade-offs when the advice is provided in the form of predicted $Q$-values in non-stationary MDPs. Another line of work studies MDPs with exogenous information or dynamics. \citet{pla2026minimaxpacboundslearning} consider discounted MDPs with i.i.d.\ exogenous contexts that are revealed before the agent acts and derive minimax PAC guarantees that exploit this structure, while \citet{maran2026learningmarkovdecisionprocesses} study MDPs with markovian exogenous contexts and show that this structure can substantially improve learning guarantees.

\paragraph{Contribution.}

First, we refine the results of~\citet{pla2026on} by showing that the known NP-hardness does not rely on a large effective horizon: for every fixed rational discount factor $\gamma\in(0,1)$, exact planning remains NP-hard for $\ell\geq 2$ (Theorem~\ref{thm:fixed-gamma-hardness}). Second, for every fixed look-ahead depth $\ell$, we give a randomized polynomial-time approximation scheme (RPTAS) that constructs, with high probability, a uniformly near-optimal policy (Theorem \ref{thm:main-RPTAS}). We then extend our approach to unknown transitions with a regret learning algorithm (Theorem \ref{thm:dolar-regret}). Finally, on the wind-farm storage-control benchmark of \citet{lu2025reinforcementlearningimperfecttransition}, our planning algorithm achieves substantially better performance than their planning algorithm.

\section{Setting and objectives}
\label{sec:perfect-lookahead-setting}
\subsection{Markov Decision Processes }
We study finite tabular Markov decision processes (MDPs) $\mathcal{M}=(\mathcal{S},\mathcal{A},P,r), $ where $\mathcal{S}$ is a finite state space, $\mathcal{A}$ is a finite action space,  $P(s'\mid s,a)$ denotes the probability of reaching state $s' \in \mathcal{S}$ after taking action $a \in \mathcal{A}$ in state $s \in \mathcal{S}$, and $r:\mathcal{S}\times\mathcal{A}\to[0,1]$ is the reward function. A (possibly randomized) stationary memoryless policy is a mapping $\pi:\mathcal{S}\to\Delta(\mathcal{A}),$ where $\Delta(\mathcal{A})$ denotes the simplex over $\mathcal{A}$. Under such a policy, the interaction evolves as follows: at each time $t\in\mathbb{N}$, the system is in state $s_t\in\mathcal{S}$, the agent selects an action $a_t\sim \pi(\cdot\mid s_t)$, receives reward $r(s_t,a_t)$, and the next state is sampled according to $s_{t+1}\sim P(\cdot\mid s_t,a_t).$

We consider the standard discounted-return objective. For a discount factor $\gamma\in(0,1)$ and an initial state $s\in\mathcal{S}$, the value of a policy $\pi$ is $V^\pi(s) = \mathbb{E}^\pi\!\left[ \sum_{t=0}^{\infty}\gamma^t r(s_t,a_t) \,\middle|\, s_0=s \right].$ The optimal discounted value function is defined by $V^*(s) = \sup_{\pi} V^\pi(s),\  \forall s\in\mathcal{S},$ where the supremum is taken over stationary memoryless policies. The optimal value function $V^*$ is the unique solution to the Bellman optimality equations: $V^*(s) = \max_{a\in\mathcal{A}} \Bigl\{ r(s,a)+\gamma\sum_{s'\in\mathcal{S}}P(s'\mid s,a)\,V^*(s') \Bigr\},  \forall s \in \mathcal{S}.$ 

\subsection{Transition look-ahead}
We now formalize the extra information provided by the look-ahead in terms of state observability and provide an augmented MDP construction that allows us to embed this problem into the standard evaluation framework introduced above. In the remainder of this paper, we focus on the perfect look-ahead case.  The extension to noisy look-ahead is given in~\Cref{app:noisy-lookahead}.

\subsubsection{Look-ahead and state observability}
A convenient generative view of the transition kernel is the following. At each time $t$, for every state-action pair $(s,a)$, the environment independently draws a potential successor $\Theta_t(s,a)\sim P(\cdot\mid s,a)$, collecting these draws defines a random transition table $\Theta_t:\mathcal S\times\mathcal A\to\mathcal S,$ taking values in
$\Omega:=\mathcal S^{\mathcal S\times\mathcal A}$. Its distribution $ L$ is therefore
\begin{equation}
L(\theta) := \hspace{1mm} \mathbb{P}\left(\Theta_t= \theta \right) =\!\!\!\!\!\prod_{(s,a)\in\mathcal S\times\mathcal A} P\bigl(\theta(s,a) \mid s,a\bigr).
\label{eq:Q}
\end{equation}
In a standard MDP, $\Theta_t$ is hidden from the agent and, after action $A_t$ is selected, the realized transition is simply $S_{t+1}=\Theta_t(S_t,A_t).$ Transition look-ahead changes the information available before acting. At time $t$, the environment reveals some of the tables $\Theta_t,\Theta_{t+1},\ldots$ to the agent.

\begin{definition}[$\ell$-step transition look-ahead]
\label{def:perfect-transition-lookahead}
Let $\ell\geq1$. At each decision time $t$, an agent with $\ell$-step transition look-ahead observes, before choosing $A_t$, the window 
\begin{equation}
C_t^\ell
:=
(\Theta_t,\Theta_{t+1},\ldots,\Theta_{t+\ell-1})
\in\Omega^\ell.
\label{eq:perfect-window}
\end{equation}
\end{definition}

This window encodes the complete depth-$\ell$ transition tree rooted at the current state. Indeed, starting from $s_0=S_t$, any action sequence $a_0,\ldots,a_{k-1}$, with $k\leq\ell$, determines the trajectory
\begin{equation*}
s_{j+1} = \Theta_{t+j}(s_j,a_j), \qquad j=0,\ldots,k-1.
\end{equation*}
Thus, $C_t^\ell$ reveals the outcome of every action sequence of length at most $\ell$ before the first action is chosen. In particular, for $\ell=1$, the agent knows the successor $\Theta_t(S_t,a)$ of every action $a\in\mathcal A$. After one transition, the oldest table is discarded and a fresh independent table is revealed, so that $C^\ell_{t+1} = (\Theta_{t+1},\ldots,\Theta_{t+\ell-1},\Theta_{t+\ell}),$ with $\Theta_{t+\ell}\sim L$ independent of the previous tables. Hence, the pair $(S_t,C_t^\ell)$ evolves as a Markov process on $\mathcal S\times\Omega^\ell$, allowing the transition look-ahead problem to be treated as a standard discounted MDP on this augmented state space.

\subsubsection{Policies and value functions}

A stationary policy with $\ell$-step transition look-ahead acts on the augmented state and is therefore a mapping $\pi:\mathcal S\times\Omega^\ell\to\Delta(\mathcal A).$ Its value at an observed state-window pair $(s,c) \in \mathcal{S}\times \Omega^\ell $, with $c=(\theta_0,\ldots,\theta_{\ell-1})$, is $V_\ell^\pi(s,c) := \mathbb E^\pi\left[ \sum_{t=0}^{\infty}\gamma^t r(S_t,A_t) \,\middle|\, S_0=s,\ C_0^\ell=c \right].$

We denote the optimal value by $V_\ell^\star(s,c) := \sup_\pi V_\ell^\pi(s,c).$ For any bounded function $V:\mathcal S\times\Omega^\ell\to\mathbb R$, define the Bellman optimality operator $(\mathcal T_\ell V)(s,\theta_0,\ldots,\theta_{\ell-1}) := \max_{a\in\mathcal A} \left\{ r(s,a) + \gamma \mathbb E_{\Theta\sim L} \left[ V\bigl( \theta_0(s,a), \theta_1,\ldots,\theta_{\ell-1}, \Theta \bigr) \right] \right\}.$ The operator $\mathcal T_\ell$ is a $\gamma$-contraction, so $V_\ell^\star$ is its unique fixed point. The corresponding optimal action-value function is $Q_\ell^\star(s,\theta_0,\ldots,\theta_{\ell-1},a) := r(s,a) + \gamma \mathbb E_{\Theta\sim L} \left[ V_\ell^\star\bigl( \theta_0(s,a), \theta_1,\ldots,\theta_{\ell-1}, \Theta \bigr) \right].$ An optimal policy is then obtained by choosing $\pi_\ell^\star(s,\theta_0,\ldots,\theta_{\ell-1}) \in \arg\max_{a\in\mathcal A} Q_\ell^\star(s,\theta_0,\ldots,\theta_{\ell-1},a).$
\section{Hardness of exact planning}

Our first result strengthens the known hardness of multi-step transition look-ahead planning of \cite{pla2026on} by showing that it persists for every fixed discount factor.
\begin{theorem}[Fixed-discount hardness of transition look-ahead planning]
\label{thm:fixed-gamma-hardness}
Fix an integer $\ell\geq 2$ and a rational discount factor
$\gamma\in(0,1)$. Given a finite MDP $\mathcal M$, an initial state
$s_0 \in \mathcal{S}$, and a rational threshold $\theta$, deciding whether there exists a
policy with perfect $\ell$-step transition look-ahead such that $\mathbb E_{C\sim L^{\otimes\ell}} \bigl[V_\ell^{\pi}(s_0,C)\bigr] \geq\theta $ is NP-hard.
\end{theorem}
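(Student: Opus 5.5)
The plan is a polynomial-time reduction from an NP-complete problem in which each instance is encoded into a single, essentially one-shot episode. Because the episode is short, the discount factor only rescales a constant number of reward terms, which will let us handle any fixed rational $\gamma$. We first treat $\ell=2$ and then pad to larger $\ell$: extra look-ahead tables that concern only irrelevant steps, or chains of deterministic dummy states, leave the optimal value unchanged. For the source problem we take 3-SAT, or equivalently a max-coverage/independent-set style problem, in the spirit of the reduction of \citet{pla2026on}. The construction is as follows.

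\begin{itemize}
\item[(i)] From $s_0$, a single action leads deterministically to a clause-selection stage. There, the table $\Theta_0$ sends the agent to a uniformly random clause state $c_j$.
\item[(ii)] At the next step the agent picks a variable action, i.e.\ a literal of $c_j$. The random table $\Theta_1$ then sends it either to a rewarding state or to a trap.
\item[(iii)] Whether it goes to the rewarding state or the trap is governed by random assignment bits. These bits are encoded as independent coordinates of $\Theta_1$ on variable states $x_i$, each with a $1/2$--$1/2$ successor distribution.
\end{itemize}

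With $\ell=2$ look-ahead, at $s_0$ the agent sees both $\Theta_0$ and $\Theta_1$, so it knows the future clause and the random bits. The key design goal is to force the agent's decision to commit to a consistent truth assignment before the randomness becomes useful. The mechanism for that is the following: the only decision exploiting the look-ahead happens at one state, and by stationarity the policy must act identically there across windows that differ only in coordinates it cannot distinguish at decision time.

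In practice we will adapt the exact gadget of \citet{pla2026on}, whose NP-hardness holds for $\gamma$ close to $1$. Their proof uses $\gamma$ near $1$ only to make the contributions of later stages comparable to immediate rewards. Here we replace that by reward scaling: all rewards of stage $k$ are divided by $\gamma^{k}$, and then normalized into $[0,1]$ by a global factor. This normalization is polynomial because the number of stages is a constant $K$, so it costs at most a factor $\gamma^{-K}$, which is fixed once $\gamma$ is fixed. Terminal absorbing states get reward $0$, so the infinite tail vanishes, and the value becomes a finite rational combination of stage rewards with $\gamma$-powers cancelled.

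We then compute $\mathbb E_{C}[V_\ell^{\pi}(s_0,C)]$ for the optimal policy as an explicit function of the maximum number of simultaneously satisfiable clauses, or of the independence number. The threshold $\theta$ is set at the value achieved when every clause is satisfied. It is rational and of polynomial bit size: the probabilities are products of polynomially many rationals with small denominators, and if that is problematic we instead use uniform distributions over polynomial-size supports.

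\textbf{Main obstacle.} The hard step is the soundness direction: showing that a look-ahead policy cannot "cheat" by using window information to do better than any single assignment. Here the stationarity of the policy and the fact that it revisits variable states under different windows must be exploited. The difficulty is that without $\gamma\to1$ we cannot rely on many repetitions averaging out. My first attempt would be to keep the construction one-shot and to make each variable's choice a decision at a variable state reached with the relevant coordinate not yet revealed, since it lies $\ell+1$ steps ahead. That forces a window-independent, deterministic-in-distribution choice, so the optimal policy's value decomposes as an average over clauses of a per-assignment payoff. Verifying that the $\ell$-step window never reveals those coordinates, while still making the reduction's completeness direction work, is where I expect most of the care to go.
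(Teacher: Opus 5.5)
There is a genuine gap: the soundness mechanism you rely on does not exist in this model. The pair $(S_t,C_t^\ell)$ is a \emph{fully observed} MDP, so an optimal policy is greedy with respect to $Q_\ell^\star$ at each augmented state separately. Stationarity only says that the action is a function of what is observed; it creates no coupling between the actions at $(x_i,c)$ and at $(x_i,c')$.

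If the clause identity matters for the reward, it must be carried by the physical state, and then the policy sees it and may act differently for each clause. If it is not carried, the reward cannot depend on it. The ``consistent truth assignment'' constraints of POMDP-style reductions need reward-relevant hidden state, which is absent here. Hiding a coordinate beyond depth $\ell$ only makes the agent average over it inside $Q_\ell^\star$.

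In your concrete gadget the assignment bits already lie in the window when the agent stands at $c_j$. It therefore simply picks a literal that is true under the revealed bits, and the value is a quantity such as $\gamma^{O(1)}\cdot\tfrac78$, independent of satisfiability. A one-shot episode also discards the ingredient that makes the paper's instance hard: the revisited root $s_0$. There, the stopping rule lets the agent effectively select among exponentially many candidate $k$-sets.

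You also misdiagnose why \citet{pla2026on} needed $\gamma$ near $1$. In their gadget the agent plays $\mathsf{wait}$ at $s_0$ until an exponentially rare tuple appears, with probability $\rho=n_G^{-k}$ per step. The time at which the payoff is collected is therefore random and unbounded. Since rewards $r(s,a)$ are stationary, no ``divide stage-$k$ rewards by $\gamma^{k}$'' rescaling is available. With fixed $\gamma$ and zero waiting reward, the optimal stopping rule accepts mediocre tuples early and the YES/NO gap collapses.

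The missing idea is to keep the waiting gadget and set $r(s_0,\mathsf{wait})=b:=(1-\gamma)T$ with $T:=\gamma^{\ell+1}U$, so that waiting forever is worth exactly $T$.
In the NO case, every commit value is at most $T$, and $M:=\max_c V_\ell^\star(s_0,c)$ satisfies $M\le\max\{T,\,b+\gamma M\}$, which forces $M\le T$.
In the YES case, waiting for the target tuple yields at least $T+\alpha\gamma^{\ell+1}\Delta$ with $\alpha=\rho/(1-\gamma(1-\rho))$. This $\alpha$ is exponentially small but has polynomial bit length.
Hence the threshold $T+\tfrac12\alpha\gamma^{\ell+1}\Delta$ separates the two cases for every fixed rational $\gamma$.
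Your delay-chain padding for $\ell>2$ is fine and matches the paper's.
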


\begin{proof}[Proof sketch]
We sketch the proof for $\ell = 2$ and refer the reader to \Cref{proof:fixed-gamma-hardness} for a formal derivation.  The proof is almost identical to that of~\cite{pla2026on}. We reduce from the following expected-maximum subset problem of \citet{mehta2020hittinghighnotessubset}. Given independent, nonnegative,
finite-support random variables $X_1,\ldots,X_n$, an integer $k$ and a threshold $U \in \mathbb{Q}$, deciding whether there exists a set $S \subset [n]$ such that 
$|S| = k$ and $\E[\max_{i \in S} X_i] > U$ is NP-hard.

Given the input $(X_i)_{i \in [n]}, k, U$ we show that there exists an MDP with look-ahead $\ell = 2$ of size polynomial in the size of the input (in particular in $n$ and the support size)  such that finding the optimal policy is equivalent to finding the optimal set $S^*$ that maximizes  $\E[\max_{i \in S} X_i]$ over the set $\{S: S \subset [n], |S| = k \}$. The MDP is the following. At the root state $s_0$, the agent can either choose $\mathsf{wait}$ and stay at
$s_0$, or choose $\mathsf{go}$ and go to $s_1$. At $s_1$, there are $k$ actions
$\mathsf{pick}_1,\ldots,\mathsf{pick}_k$, each transitioning independently and
uniformly to one of the states $s^2_1,\ldots,s^2_n$ associated with the random
variables. Consequently, the look-ahead window available at $s_0$ reveals a
candidate set of states  $\Sigma=\{q_1,\ldots,q_k\} \subset \{s^2_1, \dots, s^2_n\}$ before the agent decides
whether to leave $s_0$. Choosing $\mathsf{wait}$ reveals a fresh independent
tuple while staying at $s_0$. If the agent chooses $\mathsf{go}$, then at $s_1$
the look-ahead reveals the payoffs associated with $\Sigma$, allowing it to select
the largest one. 
The rest of the MDP is chosen such that the payoffs associated with  $s^2_1, \dots, s^2_n$ have the same 
distribution as $X_1, \dots, X_n$. If the optimal strategy is to wait until the best subset appears, then 
solving the MDP is the same as solving the expected-maximum subset problem.

However, if waiting does not give any reward and if the discount is low enough, it may be more rewarding to stop before seeing the optimal set.
This is why the NP-hardness result in~\cite{pla2026on} requires the discount factor to be exponentially close to one so that the cost of waiting is negligible.Instead, in our proof, we set $T:=\gamma^{\ell+1}U$ and assign
the $\mathsf{wait}$ action a reward of $(1-\gamma)T$, so that
waiting exactly preserves the threshold value $T$:
$(1-\gamma)T+\gamma T=T$. The rest of the proof follows~\cite{pla2026on}.

\end{proof}

\section{Near-optimal planning}
\label{subsec:rptas-offline-online-complexity}
\subsection{Algorithm}

Despite this hardness of exact planning, near-optimal planning
remains tractable: for every fixed look-ahead depth, we provide a randomized
polynomial-time approximation scheme (RPTAS). Our algorithm proceeds in two
phases. Offline, it samples a finite collection of transition tables and uses
them to build a finite state space. Value iteration is then run on this
sampled state space. Online, the agent combines the precomputed values with the
actual look-ahead window it observes in order to select an action. 

\paragraph{Offline phase.}
The offline phase constructs a finite proxy MDP by replacing the distribution
$L$ of transition tables with an empirical distribution. Given a dictionary size
$N$, sample  $\Theta^1,\ldots,\Theta^N\overset{\mathrm{i.i.d.}}{\sim}L$ and define $
\widehat L_N:=\frac1N\sum_{j=1}^N\delta_{\Theta^j}.$ Under $\widehat L_N$, every
look-ahead window consists of tables from the dictionary and can therefore be
identified with their indices. The proxy MDP consequently has the finite state
space $\mathcal{D}_N:=\mathcal S\times[N]^\ell$ where $(s,i_0,\ldots,i_{\ell-1})$
represents the augmented state $(s,\Theta^{i_0},\ldots,\Theta^{i_{\ell-1}})$.
From this state, action $a$ leads to
$\bigl(\Theta^{i_0}(s,a),i_1,\ldots,i_{\ell-1},J\bigr), \ \
J\sim\operatorname{Unif}([N]).$ Thus, the current table determines the next
state, the remaining tables are shifted forward, and the new table is sampled
uniformly from the dictionary.

Algorithm~\ref{alg:rptas-preprocessing} approximately solves this proxy MDP by
running $K$ value-iteration steps on $\mathcal{D}_N$, starting from $V^0=0$, and
returns $V^K:\mathcal{D}_N\to[0,V_{\max}], \ V_{\max}=\frac1{1-\gamma}.$ If
$V_N^\star$ denotes the optimal value function of the proxy MDP, contraction of
its Bellman operator gives $\lVert V^K-V_N^\star\rVert_\infty \leq \gamma^K
V_{\max}.$ Hence, $K\geq \log(1/\eta)/(-\log\gamma)$ iterations suffice to
achieve error at most $\eta V_{\max}$. Each iteration takes $O(|\mathcal{S}||\mathcal{A}|N^{\ell+1})$
operations, and the resulting offline representation
$D=(\Theta^1,\ldots,\Theta^N,V^K)$ is computed once, without enumerating the
original space $\Omega^\ell$.

\paragraph{Online phase.}
At decision time, the agent observes its current state $s \in \mathcal{S}$ and an arbitrary look-ahead window
$c=(\theta_0,\ldots,\theta_{\ell-1})\in\Omega^\ell$. Since these tables need not belong to the sampled dictionary, the proxy value $V^K$ cannot be evaluated directly at $(s,c)$. The key observation is that, after $\ell$ transitions, every table in the current window has been shifted out and replaced by fresh tables, which are represented using the empirical distribution $\widehat L_N$. At that point, the resulting window belongs to the proxy state space $\mathcal D_N$, and $V^K$ provides an appropriate terminal value.

Algorithm~\ref{alg:rptas-query} therefore solves an $\ell$-step planning problem with terminal value $V^K$. This problem is solved by backward induction because $V^K$ is available only at the end of the horizon. For $m\in\{1,\ldots,\ell\}, u \in \mathcal{S}, i_0, \ldots, i_{m-1} \in [N]$, the quantity $U^{V^K}_{m,c}(u,i_0,\ldots,i_{m-1})$ represents the continuation value at step $m$, when the remaining tables from the observed window are $\theta_m,\ldots,\theta_{\ell-1}$ and the first $m$ newly revealed tables are represented by $i_0,\ldots,i_{m-1}$. Formally, $U^{V^K}_{m,c}$ is defined recursively for $m\in\{1,\ldots,\ell\}$. The case $m = \ell$ is given by 
\begin{equation}
U^{V^K}_{\ell,c}(u,i_0,\ldots,i_{\ell-1}):= V^K(u,i_0,\ldots,i_{\ell-1}),
\end{equation}
for any $u \in \mathcal{S}, i_0, \ldots, i_{\ell-1} \in [N]$.

Then, for $m\in \{1, \ldots, \ell-1\}, u \in \mathcal{S}, i_0, \ldots, i_{m-1} \in [N]$, we define
\begin{equation}
U^{V^K}_{m,c}(u,i_0,\ldots,i_{m-1}) := \max_{a\in\mathcal A} \left\{ r(u,a) + \frac{\gamma}{N} \sum_{i_m=1}^N U^{V^K}_{m+1,c} \bigl(\theta_m(u,a),i_0,\ldots,i_m\bigr) \right\}.
\label{eq:uk}
\end{equation}
Here, $\theta_m$ determines the next physical state, while the newly entering table is drawn uniformly from the dictionary. Importantly, each maximization is performed after conditioning on the tables available at that stage, so the recursion accounts for future adaptation rather than committing in advance to an open-loop action sequence. Finally, the score at state $s \in \mathcal{S}$, look-ahead window $c = (\theta_0, \ldots, \theta_{\ell-1}) \in \Omega^\ell$ and action $a\in\mathcal{A}$ is defined by
\begin{equation}
\widetilde Q_{V^K}(s,c,a) := r(s,a) + \frac{\gamma}{N} \sum_{i_0=1}^N U^{V^K}_{1,c}\bigl(\theta_0(s,a),i_0\bigr),
\end{equation}
and the policy selects $\pi_D(s,c)\in \arg\max_{a\in\mathcal A} \widetilde Q_{V^K}(s,c,a).$
Only this first action is executed; at the next decision time, the procedure is repeated using the shifted look-ahead window.

\begin{algorithm}[H]
\caption{RPTAS-offline}
\label{alg:rptas-preprocessing}
\begin{algorithmic}[1]
\Require MDP $\mathcal M=(\mathcal S,\mathcal A,P,r)$, look-ahead depth
$\ell$, dictionary size $N$, iterations $K$
\Ensure Policy representation $D$
\State Sample $\Theta^1,\ldots,\Theta^N
\overset{\mathrm{i.i.d.}}{\sim}L$
\State Set $\mathcal D_N\gets\mathcal S\times[N]^\ell$
\State Initialize $V^0(s,i_0,\ldots,i_{\ell-1})\gets0$
for all $(s,i_0,\ldots,i_{\ell-1})\in\mathcal D_N$
\For{$k=0,\ldots,K-1$}
    \ForAll{$(s,i_0,\ldots,i_{\ell-1})\in\mathcal D_N$}
        \State \refstepcounter{equation}\label{eq:vi-finite-statespace}%
        $\displaystyle V^{k+1}(s,i_0,\ldots,i_{\ell-1})\gets\max_{a\in\mathcal A} \{ r(s,a)+\frac{\gamma}{N}\midsum_{j=1}^N   V^k\bigl(    \Theta^{i_0}(s,a),i_1,\ldots,i_{\ell-1},j   \bigr)    \}$
        \hfill\textup{(\theequation)}
    \EndFor
\EndFor
\State $D\gets(\Theta^1,\ldots,\Theta^N,V^K)$
\State \Return $D$
\end{algorithmic}
\end{algorithm}
\vspace{-\intextsep}
\begin{algorithm}[H]
\caption{RPTAS-online}
\label{alg:rptas-query}
\begin{algorithmic}[1]
\Require $D=(\Theta^1,\ldots,\Theta^N,V^K)$, current state $s$, observed window $c=(\theta_0,\ldots,\theta_{\ell-1})$
\Ensure Action $\pi_D(s,c)$
\State
$U^{V^K}_{\ell,c}(u,i_0,\ldots,i_{\ell-1}) \gets V^K(u,i_0,\ldots,i_{\ell-1})$
for all $(u,i_0,\ldots,i_{\ell-1})\in\mathcal D_N$

\For{$m=\ell-1,\ldots,1$}
    \ForAll{$(u,i_0,\ldots,i_{m-1})
    \in\mathcal S\times[N]^m$}
        \State
$U^{V^K}_{m,c}(u,i_0,\ldots,i_{m-1})
\gets
\displaystyle\max_{a\in\mathcal A}
\{
r(u,a)+\frac{\gamma}{N}
{\midsum_{i_m=1}^N}
U^{V^K}_{m+1,c}\bigl(\theta_m(u,a),i_0,\ldots,i_m\bigr)
\}$
    \EndFor
\EndFor

\ForAll{$a\in\mathcal A$}
    \State
    $\widetilde Q_{V^K}(s,c,a)\gets\displaystyle r(s,a)+\frac{\gamma}{N}\midsum_{i_0=1}^N U^{V^K}_{1,c}\bigl( \theta_0(s,a),i_0 \bigr)$
\EndFor

\State \Return
$\displaystyle
\pi_D(s,c)
\gets
\arg\max_{a\in\mathcal A}\widetilde Q_{V^K}(s,c,a)$
\end{algorithmic}
\end{algorithm}

\subsection{Theoretical Guarantees}
\label{subsec:rptas-analysis}
\begin{theorem}[RPTAS with perfect $\ell$-step look-ahead]
\label{thm:main-RPTAS}
Fix $\ell\geq 2$ and $\varepsilon,\delta\in(0,1)$. For $N=\widetilde O(\frac{\ell |\mathcal{S}| |\mathcal{A}|}{\varepsilon^2(1-\gamma)^4})$ and $K=\widetilde O(\frac{1}{1-\gamma })$, \Cref{alg:rptas-preprocessing,alg:rptas-query} compute a policy $\pi$ such that
\begin{equation}
\mathbb{P}\Bigl( V_\ell^\pi(s,c) \geq V_\ell^\star(s,c)-\varepsilon V_{\max}, \qquad \forall (s,c)\in\mathcal S\times\Omega^\ell\Bigr) \geq 1-\delta.
\label{eq:main-policy-guarantee}
\end{equation}
For every fixed $\ell$, computing and storing the policy, as well as selecting an action at each decision time, require time and memory polynomial in
$|\mathcal{S}|$, $|\mathcal{A}|$, $\varepsilon^{-1}$, $\log(1/\delta)$, and $(1-\gamma)^{-1}$.
\end{theorem}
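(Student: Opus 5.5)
Write $\widehat{\mathcal T}_N$ for the Bellman operator of the proxy MDP on $\mathcal D_N$, and $V_N^\star$ for its fixed point. The plan is to compare the true optimal $Q$-function $Q_\ell^\star(s,c,a)$ with the online score $\widetilde Q_{V^K}(s,c,a)$ uniformly over all $(s,c,a)$, including windows $c\notin$ dictionary. Once we have $\sup_{s,c,a}|\widetilde Q_{V^K}-Q_\ell^\star|\le \epsilon_0$, the greedy policy $\pi_D$ satisfies the standard bound $V_\ell^{\pi_D}\ge V_\ell^\star-\frac{2\epsilon_0}{1-\gamma}$. To see this, note that $\pi_D$ is greedy w.r.t.\ a $Q$ within $\epsilon_0$ of $Q^\star_\ell$ on the augmented MDP, so $\mathcal T^{\pi_D}_\ell V^\star_\ell\ge V^\star_\ell-2\epsilon_0$, and then iterate. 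We therefore choose $\epsilon_0\asymp \varepsilon(1-\gamma)V_{\max}=\varepsilon$.

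\textbf{Step 1 (unrolling).} Unroll the true Bellman equation $\ell$ times from $(s,c)$. The first $\ell$ physical transitions use the observed tables $\theta_0,\dots,\theta_{\ell-1}$. The newly entering tables $\Theta'_0,\dots,\Theta'_{\ell-1}\sim L$ appear in the window, and the value at depth $\ell$ is $V_\ell^\star(u,\Theta'_0,\dots,\Theta'_{\ell-1})$. Define the analogous depth-$\ell$ backward induction $U^{W}_{m,c}$ with the true expectation $\mathbb E_{L}$ in place of $\frac1N\sum$ and terminal $W$. Then $Q_\ell^\star=\widetilde Q$ computed with $(L,V^\star_\ell)$, whereas the algorithm uses $(\widehat L_N,V^K)$. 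Since the recursion is $1$-Lipschitz in the terminal function (with a factor $\gamma^\ell$), we split
\[
|\widetilde Q_{V^K}-Q^\star_\ell|\le \underbrace{|\widetilde Q^{\widehat L_N}_{V^K}-\widetilde Q^{\widehat L_N}_{V_N^\star}|}_{\le \gamma^{K}V_{\max}}+|\widetilde Q^{\widehat L_N}_{V_N^\star}-\widetilde Q^{L}_{V^\star_\ell}|.
\]
The first term is handled by $K=\widetilde O(1/(1-\gamma))$.

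\textbf{Step 2 (empirical vs.\ true).} The second term is the crux. We must control (i) $\|V_N^\star-V^\star_\ell\|$ on windows made of dictionary tables, where $V^\star_\ell$ restricted to $\mathcal D_N$ is compared with the proxy fixed point, and (ii) replacing $\mathbb E_L$ by the empirical mean in the $\ell$ inner averages. Both reduce to bounding, uniformly over the relevant arguments, deviations of the form $\bigl|\frac1N\sum_j f(\Theta^j)-\mathbb E_L f(\Theta)\bigr|$, where $f$ is a value function that itself depends on the sample. The key structural fact I would exploit is that a table $\Theta^j$ enters the window but is only \emph{used} through $\Theta^j(u,a)$ for the single $(u,a)$ visited at that step. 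Moreover, the optimal value depends on a window only through the $\ell$ coordinates. I would use the standard trick from model-based RL with generative models: write the error as $\gamma(\widehat L_N-L)$ applied to value functions and use a simulation-lemma/contraction argument, $\|V^\star_N-V^\star_\ell\|\le\frac{\gamma}{1-\gamma}\sup|(\widehat L_N-L)V|$. To break the dependence between the sample and the value function, I would use either a leave-one-out/absorbing-MDP decoupling, or a covering argument. For the covering, note that the relevant $V$'s are bounded by $V_{\max}$ and determined by the restriction of $P$ to each $(s,a)$, which gives $\log$ covering numbers of order $\ell|\mathcal S||\mathcal A|\log(\cdot)$. Hoeffding plus a union bound then gives deviation $V_{\max}\sqrt{\ell|\mathcal S||\mathcal A|\log(\cdot/\delta)/N}$. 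Multiplying by $\frac{1}{1-\gamma}$ from the contraction and requiring the result to be $\le \varepsilon(1-\gamma)V_{\max}$ yields $N=\widetilde O\bigl(\ell|\mathcal S||\mathcal A|/(\varepsilon^2(1-\gamma)^4)\bigr)$, matching the statement. Uniformity over arbitrary $c\in\Omega^\ell$ comes for free: the observed tables enter deterministically, and the only random averages are over dictionary indices, with functions ranging over the same covered class.

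\textbf{Step 3 (complexity).} This is bookkeeping. Offline cost is $K\cdot O(|\mathcal S||\mathcal A|N^{\ell+1})$ time and memory $O(|\mathcal S|N^\ell+N|\mathcal S||\mathcal A|)$. Online cost is $O(\ell|\mathcal S||\mathcal A|N^{\ell})$ per decision. With $N,K$ as above, both are polynomial for fixed $\ell$. I expect the main obstacle to be Step 2's decoupling of the sample-dependent value function from the empirical averages uniformly over all (uncountably many in effect) windows. I would first try the covering argument over the class of value functions parametrized by the proxy's dependence on at most $\ell$ visited entries, and fall back on an $\ell$-fold leave-one-out argument if the covering becomes too lossy.
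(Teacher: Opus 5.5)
Your skeleton is sound: the approximate-greedy conversion, the $\gamma^{K}V_{\max}$ computation error via Lipschitzness of the backward recursion in its terminal array, and a Hoeffding union bound whose log-cardinality is of order $\ell|\mathcal S||\mathcal A|\log|\mathcal S|$, which gives the stated $N$. However, Step~2, which you yourself call the crux, is left open and misdiagnosed. You assume the concentrated functions must depend on the sample, and you plan to decouple them by a covering argument. Your justification is that the relevant value functions are ``determined by the restriction of $P$ to each $(s,a)$''. That is false for the proxy model. $\widehat L_N$ is the empirical law of $N$ whole tables, not a product of row laws. Look-ahead values depend on the joint law of a table's entries, because all of $\Theta(u,a)$, $a\in\mathcal A$, are revealed before the agent maximizes over $a$. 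So $V^\star_N$ and $\widehat V^\star$ are not indexed by any low-dimensional kernel. Indexing them by dictionaries costs log-cardinality $N|\mathcal S||\mathcal A|\log|\mathcal S|$, which grows with $N$ and defeats the union bound. The leave-one-out alternative is not developed.

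The missing idea is that no decoupling is needed: every deviation can be taken for the true, sample-independent $V_\ell^\star$. Evaluate the operators at $V_\ell^\star$, not at the empirical fixed point: $\|\widehat V^\star-V_\ell^\star\|_\infty\le\frac{1}{1-\gamma}\|\widehat{\mathcal T}_\ell V_\ell^\star-\mathcal T_\ell V_\ell^\star\|_\infty$, which is your own simulation-lemma inequality with $V=V_\ell^\star$. Then the only functions to concentrate form the fixed class $\mathcal F=\{\Theta\mapsto V_\ell^\star(u,\theta_{1:\ell-1},\Theta)\}$, of cardinality at most $|\mathcal S||\Omega|^{\ell-1}$. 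The sample enters only through window coordinates, e.g.\ $\theta_{1:\ell-1}=(\Theta^{i_1},\dots,\Theta^{i_{\ell-1}})$ on dictionary windows. This is harmless because the Hoeffding bound holds simultaneously for all $(u,\theta_{1:\ell-1})\in\mathcal S\times\Omega^{\ell-1}$.

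With this, your level-by-level split closes. On the event that all deviations over $\mathcal F$ are at most $\eta$, contraction of $\widehat{\mathcal T}_N$ gives $\|V^\star_N-R_NV_\ell^\star\|_\infty\le\frac{1}{1-\gamma}\|\widehat{\mathcal T}_NR_NV_\ell^\star-R_NV_\ell^\star\|_\infty\le\frac{\gamma\eta}{1-\gamma}$. At level $m$ the true-law continuation is $V_\ell^\star(u,\theta_{m:\ell-1},\Theta^{i_0},\dots,\Theta^{i_{m-1}})$, so each inner average again concentrates a member of $\mathcal F$. The level errors therefore obey $D_m\le\gamma(D_{m+1}+\eta)$, and the total score error is at most $\frac{\gamma\eta}{1-\gamma}$.

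The paper organizes the same idea differently. It defines the ideal empirical model $\widehat V^\star$ on all of $\mathcal S\times\Omega^\ell$ and proves the exact identities $V^\star_N=R_N\widehat V^\star$ and $\widetilde Q_{V^\star_N}=\widehat Q^\star$ for every window. It then needs only the single comparison of $\widehat Q^\star$ with $Q_\ell^\star$.
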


\begin{proof}[Proof sketch for $\ell=2$]
We separate the proof into three steps: (i) approximating the distribution of
transition tables, (ii) solving the resulting empirical MDP on dictionary windows,
and (iii) extending this solution to an arbitrary observed window.

(i) Let $\widehat L_N=N^{-1}\sum_{j=1}^N\delta_{\Theta^j}$, and let
$\widehat V^\star$ and $\widehat Q^\star$ be the optimal value and
action-value functions when $L$ is replaced by $\widehat L_N$. In particular $\widehat V^\star$ is the fixed point of $\widehat{\mathcal T}_2$ defined by
\begin{equation}
(\widehat{\mathcal T}_2 V)(s,\theta_0,\theta_1)
=\max_{a\in\mathcal A}\left\{
r(s,a)+\frac{\gamma}{N}\sum_{j=1}^N
V\bigl(\theta_0(s,a),\theta_1,\Theta^j\bigr)
\right\}.
\end{equation}
Set $\eta:=\varepsilon(1-\gamma)^2V_{\max}/4$. The functions
$\Theta\mapsto V_2^\star(u,\theta_1,\Theta)$ take values in
$[0,V_{\max}]$, and there are at most $|\mathcal{S}||\Omega|=|\mathcal{S}|^{1+|\mathcal{S}||\mathcal{A}|}$ such functions.
Hence, if $N\geq \frac{V_{\max}^2}{2\eta^2}
\left(\log\frac{2}{\delta}+(1+|\mathcal{S}||\mathcal{A}|)\log |\mathcal{S}|\right),$
Hoeffding's inequality and a union bound give, with probability at
least $1-\delta$, $\sup_{u,\theta_1}\left|
\frac1N\sum_{j=1}^N V_2^\star(u,\theta_1,\Theta^j)
-\mathbb E_{\Theta\sim L}
  [V_2^\star(u,\theta_1,\Theta)]
\right|\leq\eta.$ On this event, contraction of the Bellman operators yields $\|\widehat V^\star-V_2^\star\|_\infty
\leq\frac{\gamma\eta}{1-\gamma},
\
\|\widehat Q^\star-Q_2^\star\|_\infty
\leq\frac{\gamma\eta}{1-\gamma}.$

(ii) Now let us apply the Bellman optimality equation in the proxy MDP whose state space is $\mathcal D_N=\mathcal S\times[N]^2$, where $(s,i_0,i_1)$ represents $(s,\Theta^{i_0},\Theta^{i_1})$.
\begin{align*}
    V^*_N(s,i_0,i_1)&=\max_{a\in\mathcal A}\left\{
r(s,a)+\frac{\gamma}{N}\sum_{j=1}^N
\widehat V^\star\bigl(
\Theta^{i_0}(s,a),\Theta^{i_1},\Theta^j
\bigr)\right\}\\& =\widehat{\mathcal{T}}_2\widehat V^\star(s,\Theta^{i_0},\Theta^{i_1})=\widehat V^\star(s,\Theta^{i_0},\Theta^{i_1})
\end{align*}

Thus, value iteration on $\mathcal D_N$ computes the empirical optimal value on every dictionary window. After $K$ iterations, $\|V^K-V_N^\star\|_\infty\leq\gamma^K V_{\max}. $

(iii) Let $c=(\theta_0,\theta_1)\in\Omega^2$. Unrolling the empirical Bellman
equation gives
\begin{align}
\widehat V^\star(s,\theta_0,\theta_1) = \max_{a_0\in\mathcal A} \Bigg\{ &r(s,a_0) +\frac{\gamma}{N}\sum_{i_0=1}^N \max_{a_1\in\mathcal A} \Bigg[ r\bigl(\theta_0(s,a_0),a_1\bigr) \nonumber\\ &\quad+ \frac{\gamma}{N}\sum_{i_1=1}^N \underbrace{\hat V^\star\Bigl( \theta_1\bigl(\theta_0(s,a_0),a_1\bigr), i_0,i_1 \Bigr)}_{(II)} \Bigg] \Bigg\}.
\label{eq:forward-extension-l2}
\end{align}
In particular, $(II)$  contains only dictionary tables. Hence, as proven at step 2, $ (II) = V_N^\star\bigl(\theta_1(s_1,a_1),i_0,i_1\bigr).$ Therefore, for a fixed root action $a_0$, the expression inside the outer maximum is exactly the score computed by the online recursion. Thus, $\widetilde Q_{V_N^\star}(s,c,a_0) = \widehat Q^\star(s,c,a_0).$ See Appendix \ref{proof:main-RPTAS} for the full proof and Appendix \ref{app:noisy-planning} for its extension to noisy look-ahead. 
\end{proof}

\section{Regret minimization}
\label{sec:regret-minimization}
We now consider the online setting in which the transition kernel $P$ is initially unknown. Note that each revealed transition table provides one sample from every row $P(\cdot\mid s,a)$, regardless of the chosen action. Learning $P$ therefore requires no exploration: we can repeatedly estimate the model and apply our RPTAS to it.  Following the $\gamma$-regret of \citet{liu2021regretboundsdiscountedmdps}, we apply this criterion to the augmented state $X_t=(S_t,C_t)$:
\begin{equation}
\operatorname{Reg}(T)
:=\sum_{t=0}^{T-1}\bigl[(1-\gamma)V_\ell^\star(X_t)-r_t\bigr],
\qquad r_t=r(S_t,A_t).
\label{eq:online-regret}
\end{equation}
The factor $1-\gamma$ puts the optimal discounted value on the scale of a one-step reward, so the regret compares this benchmark at each encountered state with the rewards actually collected. To control this regret, \textsc{RPTAS-Learn} periodically rebuilds the dictionary and value array used for planning, using an empirical transition model. Updates occur at times $M\in\{1,2,4,\ldots\}$. At each update, it uses all tables preceding the current window to form
\begin{equation}
\widehat P_M(s'\mid s,a) :=\frac1M\sum_{j=0}^{M-1} \mathbf 1\{\Theta_j(s,a)=s'\}, \qquad \widehat L_M :=\bigotimes_{s,a}\widehat P_M(\cdot\mid s,a).
\label{eq:online-product-model}
\end{equation}
This includes every state--action row, whether visited or not. The $\ell$ tables in the current window are used directly for action selection. We then run Algorithm~\ref{alg:rptas-preprocessing} on $\widehat{\mathcal M}=(\mathcal S,\mathcal A,\widehat P_M,r)$: draw $N_M$ independent tables from $\widehat L_M$ and perform $K_M$ ordinary value-iteration steps from zero. The resulting dictionary and value array are held fixed until the next update. At each decision, Algorithm~\ref{alg:rptas-query} combines them with the current window to select an action. Both planning routines are unchanged. Updating only at powers of two requires $O(\log(T+1))$ planner recomputations while keeping the active sample size within a factor of two of the elapsed time.
\begin{algorithm}[H]
\caption{\textsc{RPTAS-Learn}}
\label{alg:online-perfect-lookahead}
\begin{algorithmic}[1]
\Require $T$, $\gamma$, $\ell$, $r$, parameters $(N_M,K_M)$
\State Observe $X_0$ and execute a fixed action $A_0\in\mathcal A$
\For{$t=1,\ldots,T-1$}
    \State Observe $X_t=(S_t,C_t^\ell)$
    \If{$t\in\{2^q:q\in\mathbb N_0\}$}
        \State $M\gets t$
        \State Compute $\widehat P_M$ using
        \eqref{eq:online-product-model}
        \State $D\gets
        \Call{RPTAS-offline}{\widehat{\mathcal M},\ell,N_M,K_M}$
    \EndIf
    \State $A_t\gets\Call{RPTAS-online}{D,S_t,C_t^\ell}$
    \State Execute $A_t$ and receive $r_t$
\EndFor
\end{algorithmic}
\end{algorithm}
We use a fixed rule to break ties, so each planner defines a deterministic stationary policy on the augmented state space.

\begin{theorem}[Regret of \textsc{RPTAS-Learn}]
\label{thm:dolar-regret}
Fix $\ell\geq1$. At every update, choose $N_M=\left\lceil\frac{M}{(1-\gamma)^3}\right\rceil$ and $K_M=\left\lceil \frac1{1-\gamma}\log\frac{M^2}{1-\gamma} \right\rceil.$ For every $T\geq1$ and $\delta\in(0,1)$, with probability at least $1-\delta$,
\begin{equation}
\operatorname{Reg}(T) \leq\widetilde O_\ell\left( \sqrt{\frac{|\mathcal{S}||\mathcal{A}|T}{1-\gamma}} +\frac{|\mathcal{S}|^2|\mathcal{A}|}{(1-\gamma)^2} \right).
\end{equation}
\label{thm:learning}
\end{theorem}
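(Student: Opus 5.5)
We decompose the regret epoch by epoch and treat it as a sum of a model-estimation error and a martingale term. Let the update times be $M_q=2^q$, and let $\pi_q$ be the deterministic stationary policy used on the epoch $[M_q,2M_q)$. Fix the true optimal value $V^\star:=V^\star_\ell$. Write
\begin{equation*}
(1-\gamma)V^\star(X_t)-r_t=\bigl[(1-\gamma)V^\star(X_t)-(1-\gamma)V^{\pi_q}(X_t)\bigr]+\bigl[(1-\gamma)V^{\pi_q}(X_t)-r_t\bigr].
\end{equation*}
The first bracket is a policy suboptimality gap. We bound it uniformly over all augmented states, which is necessary because Theorem~\ref{thm:main-RPTAS} is uniform over $(s,c)$. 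For the second bracket we use $V^{\pi_q}(X_t)=r_t+\gamma\,\mathbb E[V^{\pi_q}(X_{t+1})\mid X_t]$. Rewriting $(1-\gamma)V^{\pi_q}(X_t)-r_t$ as $\gamma\bigl(V^{\pi_q}(X_t)-V^{\pi_q}(X_{t+1})\bigr)$ plus a martingale difference, the sum telescopes inside each epoch. This leaves $O(V_{\max})$ per epoch, so $O(V_{\max}\log T)$ overall, plus a martingale with increments bounded by $V_{\max}$.

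The Bellman identity requires that $\pi_q$ be fixed along the trajectory. It is, because the dictionary is frozen between updates and ties are broken deterministically. A naive Azuma bound would give $V_{\max}\sqrt{T}=\sqrt{T}/(1-\gamma)$, which is too large. To reach $\sqrt{T/(1-\gamma)}$ we use Freedman's inequality together with a law-of-total-variance bound: the sum of conditional variances of $V^{\pi_q}(X_{t+1})$ is $O(T/(1-\gamma))$, up to lower-order terms, as in \citet{liu2021regretboundsdiscountedmdps}. This step is where the $\sqrt{|\mathcal S||\mathcal A|}$ factor will need care.

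The main work, and the step we expect to be hardest, is bounding the suboptimality gap $\|V^\star-V^{\pi_q}\|_\infty$ on each epoch. We split it into three parts:
\begin{equation*}
V^\star-V^{\pi_q}=\underbrace{V^\star-\widehat V^\star}_{\text{model}}+\underbrace{\widehat V^\star-\widehat V^{\pi_q}}_{\text{planning}}+\underbrace{\widehat V^{\pi_q}-V^{\pi_q}}_{\text{model}}.
\end{equation*}
Here hats denote values in the look-ahead MDP with kernel $\widehat P_M$. The planning term follows from Theorem~\ref{thm:main-RPTAS} applied to $\widehat{\mathcal M}$. With $N_M=\lceil M/(1-\gamma)^3\rceil$ and $K_M$ as chosen, it is $\widetilde O\bigl(\sqrt{|\mathcal S||\mathcal A|/M}\bigr)V_{\max}$ scaled appropriately, and the $\gamma^{K_M}V_{\max}\le 1/M^2$ error is negligible.

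For the two model terms, a crude $\ell_1$ bound on $\widehat L_M$ versus $L$ would cost $|\Omega|$. Instead we use the simulation lemma on the augmented MDP. The expectation over the fresh table $\Theta$ of a fixed function equals the expectation over the product of rows. Replacing rows one at a time (a hybrid argument over the $|\mathcal S||\mathcal A|$ rows) gives an error of at most $\sum_{s,a}\|\widehat P_M(\cdot\mid s,a)-P(\cdot\mid s,a)\|_1\cdot V_{\max}$. That is too large, since it scales as $|\mathcal S|^{1.5}|\mathcal A|$.

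Our first attempt to avoid this is the following. For the fixed functions $V^\star$ and $V^{\pi}$ we need only one scalar concentration per window prefix. The relevant quantity $\mathbb E_{\widehat L_M}[V(u,\theta_1,\ldots,\Theta)]$ is a $U$-statistic-like average over the $M$ observed tables of the rows. If we pair the observed tables themselves, it is an empirical mean of i.i.d.\ tables. This yields Hoeffding plus a union bound with $\log|\Omega|=|\mathcal S||\mathcal A|\log|\mathcal S|$, as in step (i) of the sketch of Theorem~\ref{thm:main-RPTAS}, giving an error of order $V_{\max}\sqrt{|\mathcal S||\mathcal A|\log|\mathcal S|/M}$.

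Because $\pi_q$ depends on the data, we handle $V^{\pi_q}$ by bounding instead $\widehat V^{\pi_q}\le\widehat V^\star$ via the optimism-free sandwich. That is, we compare through $V^\star$ only, and control $\widehat V^{\pi_q}-V^{\pi_q}$ by a union bound over the finite set of deterministic policies that the planner can output. The log-cardinality of that set is polynomial, and this is the delicate point. If it fails, we fall back on the row-wise $\ell_1$ bound, which produces the $|\mathcal S|^2|\mathcal A|/(1-\gamma)^2$ lower-order term.

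Finally we sum the per-step gap $(1-\gamma)\cdot\widetilde O\bigl(V_{\max}\sqrt{|\mathcal S||\mathcal A|/M_q}\bigr)$ over the $M_q$ steps of epoch $q$ and over $q\le\log_2 T$. This gives $\widetilde O\bigl(\sqrt{|\mathcal S||\mathcal A|T}\bigr)$, improved to the stated $\sqrt{|\mathcal S||\mathcal A|T/(1-\gamma)}$ by the variance-aware argument above. Burn-in epochs with $M\lesssim|\mathcal S|/(1-\gamma)$ are charged trivially and produce the additive $|\mathcal S|^2|\mathcal A|/(1-\gamma)^2$ term. A union bound over epochs with $\delta/\log_2 T$ completes the argument.
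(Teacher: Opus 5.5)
Your split of the regret into a policy-loss term and a value-to-reward term matches the paper, and so does your treatment of the latter by epoch-wise telescoping plus Freedman with a total-variance bound. Invoking Theorem~\ref{thm:main-RPTAS} on $\widehat{\mathcal M}$, conditionally on the data, also gives a planning error of the right order.

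\textbf{First gap: the model-error term.} A Hoeffding bound of order $V_{\max}\sqrt{\Lambda/M}$ on $(\mathbb E_{\widehat L_M}-\mathbb E_L)[V_\ell^\star(u,\theta_{1:\ell-1},\cdot)]$, pushed through the simulation lemma, only gives $\|V_\ell^\star-\widehat V_M^\star\|_\infty\lesssim V_{\max}^2\sqrt{\Lambda/M}$. That is a normalized per-step loss of $V_{\max}\sqrt{\Lambda/M}$ and a total of order $\sqrt{|\mathcal S||\mathcal A|T}/(1-\gamma)$, which exceeds the claimed bound by a factor $\sqrt{V_{\max}}$. Your final accounting drops a factor $V_{\max}$ (you obtain $\sqrt{|\mathcal S||\mathcal A|T}$). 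You then invoke the martingale variance argument to ``improve'' this to a larger quantity, but that argument never acts on the model error.

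What is missing is a Bernstein bound for the product estimator with variance under $L$ (Lemma~\ref{lem:lazy-product-bernstein}). The one-step errors must also be unrolled along the \emph{true} augmented chain, so that the discounted variance bound $\sum_{h}\gamma^{h+1}\mathbb E^\pi_x[\sqrt{\operatorname{Var}(V_\ell^\star(\bar X_{h+1})\mid\bar X_h)}]\le\sqrt{2V_{\max}^3}$ applies (Lemma~\ref{lem:lazy-resolvent}). Together these give $V_{\max}^{3/2}\sqrt{\Lambda/M}$.

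Also, $\mathbb E_{\widehat L_M}[f]$ is not an empirical mean of the $M$ observed tables: it averages over all $M^{|\mathcal S||\mathcal A|}$ recombinations of rows, so ``pairing the tables'' does not make Hoeffding applicable. The paper handles this by rematching each row with an independent uniform permutation. For fixed permutations this yields $M$ i.i.d.\ tables of law $L$, and averaging over permutations recovers $\widehat L_M$; Jensen's inequality on the moment generating function then transfers the i.i.d.\ concentration.

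\textbf{Second gap: the data-dependent term $\widehat V^{\pi_q}-V^{\pi_q}$.} The inequality $\widehat V^{\pi_q}\le\widehat V^\star$ only sends you back to bounding $\widehat V^\star-V^{\pi_q}$. The mechanism you actually propose, a union bound over planner outputs, does not deliver the stated rate:
\begin{itemize}
\item The output is a function of the $N_M\approx MV_{\max}^3$ dictionary tables, so the naive log-cardinality grows linearly in $M$ and the bound becomes vacuous.
\item Even if you couple the dictionary draws to fixed independent uniforms, so that the policy depends on the data only through $\widehat P_M$, the log-cardinality is of order $|\mathcal S|^2|\mathcal A|\log M$. This inflates the leading term to $\sqrt{|\mathcal S|^2|\mathcal A|T/(1-\gamma)}$.
\item Your fallback is not lower order either. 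Even with the sharper product-law bound $\|\widehat L_M-L\|_1\lesssim\sqrt{|\mathcal S|^2|\mathcal A|/M}$, an $\ell_1$ simulation bound gives $V_{\max}^2\|\widehat L_M-L\|_1$. Summed over time this is a $\sqrt{T}$ term of order $\sqrt{|\mathcal S|^2|\mathcal A|T}/(1-\gamma)$, not an additive constant.
\end{itemize}

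The paper never forms $\widehat V^{\pi_M}$. It unrolls $\widehat V^\star_M-V_\ell^{\pi_M}$ along the true chain under $\pi_M$ and writes $\widehat V^\star_M=V_\ell^\star+(\widehat V^\star_M-V_\ell^\star)$ inside the model difference. This leaves three pieces:
\begin{itemize}
\item the greedy residual, at most $2\|\widetilde Q_M-\widehat Q^\star_M\|_\infty$;
\item the fixed-function error on $V_\ell^\star$, which holds simultaneously at every augmented state and action, so no union bound over policies is needed;
\item the cross-term $\gamma V_{\max}\|\widehat L_M-L\|_1\|\widehat V^\star_M-V_\ell^\star\|_\infty$, a product of two small quantities.
\end{itemize}
The cross-term is absorbed once $M\ge M_0=\lceil16V_{\max}^2\Lambda_0\rceil$, where $\gamma V_{\max}\|\widehat L_M-L\|_1\le1/2$ (Lemma~\ref{lem:lazy-policy-stability}). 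Charging the steps before this absorption threshold trivially is what produces the $|\mathcal S|^2|\mathcal A|/(1-\gamma)^2$ term. A burn-in of size $|\mathcal S|/(1-\gamma)$, as you propose, would not produce it.
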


Here $\widetilde O_\ell$ hides logarithmic factors in $|\mathcal{S}|,|\mathcal{A}|,T,1/\delta$ and factors depending on $\ell$. An update based on $M$ tables costs $O\bigl(M|\mathcal{S}||\mathcal{A}|+N_M|\mathcal{S}||\mathcal{A}|+K_M|\mathcal{S}|(|\mathcal{A}|+1)N_M^\ell\bigr)$, and each subsequent action query costs $O\bigl(\ell |\mathcal{S}|(|\mathcal{A}|+1)N_M^\ell\bigr)$. Thus, for fixed $\ell$, both costs are polynomial in $|\mathcal{S}|,|\mathcal{A}|,T$ and $(1-\gamma)^{-1}$.

\begin{proof}[Proof sketch]
At each deterministic update time $M$, the observed tables provide $M$ independent samples from every state--action row, exactly as under uniform sampling from a generative model. We therefore buid upon  the empirical-model comparison and absorption argument of \citet{azar2012samplecomplexityreinforcementlearning}, combining Bernstein concentration with a discounted variance bound. Applying this strategy to transition look-ahead requires controlling the product empirical law and the dictionary approximation; obtaining the $\gamma$-regret of \citet{liu2021regretboundsdiscountedmdps} additionally requires relating policy values to realised rewards.

The first difficulty is that $\widehat L_M$ is a product of empirical row distributions, rather than the empirical distribution of complete tables. By independently rematching the samples within each row, we obtain a Bernstein bound for its error on any fixed function, with variance under $L$ (Lemma~\ref{lem:lazy-product-bernstein}). Applied to continuations of $V_\ell^\star$, this bound combines with a discounted resolvent variance bound to control the propagation of model errors under each deployed policy (Lemma~\ref{lem:lazy-resolvent}). To handle the dependence of $\widehat V_M^\star$ on the samples, we decompose $\widehat V_M^\star
=
V_\ell^\star+(\widehat V_M^\star-V_\ell^\star).$, control the second part uniformly using a product-law KL bound (Lemma \ref{lem:lazy-confidence}), and absorb it in the Bellman comparison once $M\geq M_0=\widetilde O(|\mathcal S|^2|\mathcal A|/(1-\gamma)^2)$.

Conditionally on the observed tables, the dictionary is freshly
sampled from $\widehat L_M$. The known-model RPTAS analysis therefore controls its planning error, including at windows outside the dictionary. Together, these arguments bound the normalized loss of each deployed policy by $\widetilde O_\ell\!\left( \sqrt{|\mathcal S||\mathcal A|/[M(1-\gamma)]} +|\mathcal S||\mathcal A|/[M(1-\gamma)]+M^{-2} \right)$. Finally, the active policy is fixed between each lazy update. Its Bellman equation makes the value-to-reward differences telescope up to epoch boundaries; a variance-sensitive Freedman bound controls the remaining martingale. Summing the policy losses over epochs and bounding the initial $M<M_0$ steps trivially gives the claim.
The full proof is given in Appendix~\ref{proof:regret}.
\end{proof}

\section{Numerical studies}
We evaluate RPTAS on the wind-farm storage-control benchmark of \citet{lu2025reinforcementlearningimperfecttransition}. The wind farm incurs penalties when production falls short of its scheduled production, while a capacity-constrained battery can store surplus energy or compensate for later shortages. Choosing when to charge or discharge therefore depends critically on forecasts of future prices and production mismatches.

Our theory is presented for perfect look-ahead, whereas the benchmark provides noisy forecasts. As shown in \Cref{app:noisy-planning}, RPTAS extends to this setting by replacing each revealed successor with its posterior distribution conditional on the forecast. We use the same data, discretization, battery dynamics, and evaluation interval as \citet{lu2025reinforcementlearningimperfecttransition}. The MDP has $2{,}100$ states, 9 actions, and $\gamma=0.95$. At date $t$, the current price and mismatch are observed exactly, with predictions for $t+1,\ldots,t+\ell$, where $\ell\in\{0,1,2,3\}$.  Forecasts are perturbed as $\widehat x_u=x_u(1+\sigma\xi_u)$, where $x_u$ denotes either the electricity price or the production
mismatch at date $u$, and $\xi_u\sim\mathcal N(0,1)$ with $\sigma\in\{0,.05,.10,.20,.30,.40,.50,.60,.80,1\}$. Following their convention, noise is reported as $100\sigma\%$. RPTAS converts these forecasts into posterior kernels and uses $N=8$ dictionary samples.

We compare RPTAS with three controllers. \textsc{Bola} is the
zero-terminal, point-forecast controller used in the wind-farm
experiments of \citet{lu2025reinforcementlearningimperfecttransition}.
\textsc{Bola-Bayes} follows the method described in their paper: it
uses forecast posteriors and a Bayesian continuation value computed
offline. Both execute a block of $\ell+1$ actions before replanning.
We also include \textsc{Mpc}, which uses the same finite-horizon
objective and zero terminal value as \textsc{Bola}, but replans after
every transition. Comparing \textsc{Mpc} with \textsc{Bola} isolates
the effect of replanning. Comparing it with RPTAS shows why
replanning alone need not suffice: RPTAS also accounts for the value
beyond the forecast window through its dictionary value function.
All methods face the same realised trajectory, initial battery state,
constraints, and forecast-noise draws.

We report realised cost reduction relative to the no-prediction
controller used by \citet{lu2025reinforcementlearningimperfecttransition}.
The no-prediction controller plans as if the current price and
production mismatch remained fixed; it defines the grey reference
line at zero. The red dashed line instead shows the optimal
zero-look-ahead policy for our estimated MDP ($\ell=0$). Higher cost
reduction is better. Curves average ten common forecast-noise seeds,
with two-standard-error bands, see \Cref{app:wind-experiments} for more details on the implementation.

\begin{figure}[H]
\centering
\setlength{\abovecaptionskip}{0pt}
\includegraphics[width=\textwidth]{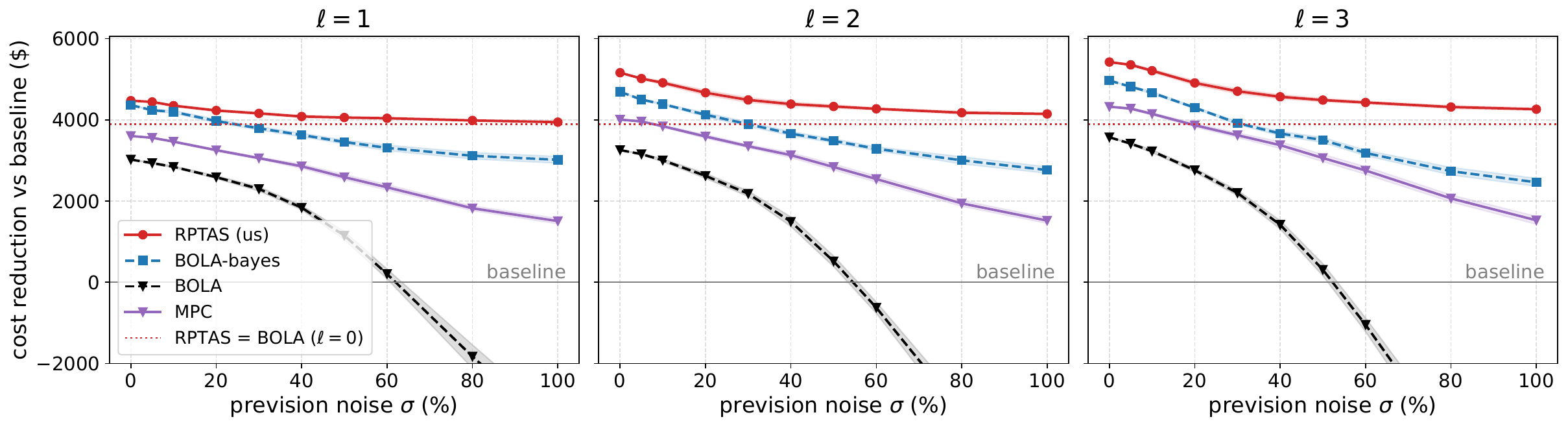}
\vspace{-0.3cm}
\caption{Realised cost reduction relative to the no-prediction baseline versus forecast noise $\sigma$, for $\ell\in\{1,2,3\}$. The grey and red lines denote the no-prediction and classical MDP ($\ell=0$) benchmarks, respectively. Curves show means over $10$ seeds, with $\pm2$ standard-error bands.}
\label{fig:cost-reduction}
\end{figure}

\section{Conclusion}

We investigated whether multi-step transition look-ahead can be used efficiently. Although exact planning is NP-hard for every fixed rational discount factor and every fixed depth $\ell\geq2$, a uniformly near-optimal policy can be computed in polynomial time for fixed $\ell$. Our algorithm samples a polynomial number $N$ of transition tables and runs value iteration on $\mathcal S\times[N]^\ell$, rather than on the exponentially large space $\mathcal S\times\Omega^\ell$. A backward recursion then uses the resulting values to act on \emph{any} observed look-ahead window, yielding a uniformly near-optimal policy with polynomial preprocessing and per-decision cost for fixed $\ell$. With unknown dynamics, the same representation yields a learning algorithm whose leading regret term is comparable to that of classical discounted RL. We empirically validate the soundness of our results on the wind-farm storage-control benchmark of \citet{lu2025reinforcementlearningimperfecttransition}, showing that our approach, optimally accounting for $\ell$-step look-ahead information, offers substantially better performance than existing algorithms. A natural direction is to move beyond the local corruption model and study forecasts whose errors are biased or correlated across state–action pairs and over time, as well as forecasts that reveal only partial information about future transitions. Also, in practice, such predictions may be continuous, high-dimensional, and generated by a learned world model. Extending the dictionary approach to this setting, while relating planning performance and regret to forecast quality, is an important step toward practical forecast-aware reinforcement learning.

\subsection*{AI use statement}
In this work, we used generative AI tools to implement code for our experiments. Additionally, we used generative AI tools to help draft parts of the manuscript. We have reviewed all AI-assisted work: LLM-generated code was verified and tested for correctness by the authors before being used in our experiments, and LLM-assisted text was checked and revised by the authors for accuracy and originality. We take responsibility for the final content of this work, including text, claims or artifacts produced with the aid of
generative AI.

\bibliography{iclr2027_conference}
\bibliographystyle{iclr2027_conference}

\appendix 

\section{Appendix}

\subsection{Proof of Theorem~\ref{thm:fixed-gamma-hardness}}
\label{proof:fixed-gamma-hardness}

We use the notation of Section~\ref{sec:perfect-lookahead-setting}: $\Omega=\mathcal S^{\mathcal S\times\mathcal A}$ is the space of one-step transition tables, $L$ is the product law \eqref{eq:Q} induced by the kernel $P$, and $V_\ell^\star$ denotes the optimal value function on the augmented state space $\mathcal S\times\Omega^\ell$. For a policy $\pi$ and an initial state $s_0\in\mathcal S$, define the values averaged over the initial look-ahead window,
\begin{equation}
v_{\ell,\gamma}^{\pi}(s_0) := \mathbb E_{C\sim L^{\otimes\ell}} \bigl[V_\ell^{\pi}(s_0,C)\bigr], \qquad v_{\ell,\gamma}^{\star}(s_0) := \mathbb E_{C\sim L^{\otimes\ell}} \bigl[V_\ell^{\star}(s_0,C)\bigr],
\label{eq:avg-initial-value}
\end{equation}
For a window $c=(\theta_0,\ldots,\theta_{\ell-1})\in\Omega^\ell$, a state $s\in\mathcal S$, and an action sequence $\sigma=(a_0,\ldots,a_{k-1})$ with $k\leq\ell$, we write $\phi_c(s,\sigma)\in\mathcal S$ for the state reached after playing $\sigma$ from $s$ under the window $c$, that is, the state $s_k$ obtained from the recursion
\begin{equation*}
s_0=s, \qquad s_{j+1}=\theta_j(s_j,a_j), \qquad j=0,\ldots,k-1.
\end{equation*}

\begin{definition}[Discounted Value Decision Problem ($\ell$-DVDP)]
\label{def:l-dvdp}
Given a finite MDP $\mathcal M=(\mathcal S,\mathcal A,P,r)$, a rational discount factor $\gamma\in(0,1)$, an initial state $s_0\in\mathcal S$, and a rational threshold $\theta$, decide whether there exists a policy $\pi$ with $\ell$-step transition look-ahead such that $v_{\ell,\gamma}^{\pi}(s_0)\geq\theta$, equivalently whether $v_{\ell,\gamma}^{\star}(s_0)\geq\theta$.
\end{definition}

Fix an integer $\ell\geq 2$ and a rational discount factor $\gamma\in(0,1)$. We build upon \citet{pla2026on} and exhibit a polynomial-time reduction from \textsc{Independent Set} on regular graphs to $\ell$-\textsc{DVDP} that is valid for this fixed $\gamma$; NP-hardness is therefore preserved even for $\gamma$ far from $1$. We use the following consequence of the reduction of \citet{mehta2020hittinghighnotessubset}; it is the same gap gadget as in the hardness proof of \citet{pla2026on}. The graph parameters are denoted $n_G$ and $m_G$.

\begin{lemma}[Expected-maximum gap]\label{lem:mehta-gap}
There is a polynomial-time reduction that maps an instance $(G,k)$ of
\textsc{Independent Set} on a regular graph $G=(V,E)$, with $n_G=|V|$
and $m_G=|E|$, to mutually independent, nonnegative, finite-support
random variables $(X_v)_{v\in V}$ with rational values and
probabilities of polynomial encoding length. All variables have the
same expectation $\mu$, and the following hold:
\begin{enumerate}
    \item If $G$ contains an independent set $S^\star$ of size $k$, then
    \begin{equation*}
        \mathbb E\!\left[\max_{v\in S^\star}X_v\right]
        \geq k\mu-\frac{2}{m_G}.
    \end{equation*}
    \item If $G$ contains no independent set of size $k$, then every
    $S\subseteq V$ with $|S|=k$ satisfies
    \begin{equation*}
        \mathbb E\!\left[\max_{v\in S}X_v\right]
        \leq k\mu-1.
    \end{equation*}
\end{enumerate}
\end{lemma}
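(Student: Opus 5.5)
The statement just given is Lemma~\ref{lem:mehta-gap}, which the paper attributes to the reduction of \citet{mehta2020hittinghighnotessubset}, the same gadget used in \citet{pla2026on}. My plan is therefore to reconstruct that reduction and check the two gap inequalities. I do not intend to build a new one.

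\textbf{Construction.} Let $G=(V,E)$ be $d$-regular, and index the edges $e_1,\ldots,e_{m_G}$. Fix a large base $\beta$ of polynomial bit-length; since $\beta^{j}$ has encoding length $O(j\log\beta)$, it suffices to take, for example, $\beta=2m_G^2$. Each variable $X_v$ is $0$ with most of its mass. For each incident edge $e_j\ni v$ it takes the value $\beta^{j}$ with probability $p_j:=\beta^{-j}/m_G$, roughly. The supports of distinct edges are disjoint.

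Then $\mathbb E[X_v]=\sum_{j:e_j\ni v}\beta^j p_j=d/m_G$. Hence all expectations are equal to a common $\mu$, using regularity. Rescaling by the factor $m_G$ gives gaps of size $1$, so I would take $\mu=d$, with values $m_G\beta^j$ and probabilities $\beta^{-j}/m_G$.

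\textbf{Key step: the expected maximum.} For a set $S$ with $|S|=k$, I would write
\[
\mathbb E\left[\max_{v\in S} X_v\right]=\sum_{v\in S}\mathbb E[X_v]-\Delta(S).
\]
Here $\Delta(S)$ is the loss from collisions, that is, from outcomes where two variables are nonzero simultaneously.

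For two \emph{distinct} edges, or two vertices not sharing an edge, simultaneous nonzero events have probability products such as $p_jp_{j'}$. Weighted by their values, these contribute at most about $\beta^{j}p_jp_{j'}\cdot m_G\le \beta^{-1}$-order terms. Summing over the polynomially many pairs gives a total below $2/m_G$, provided $\beta$ is large. This handles the case of an independent set $S^\star$: there $\Delta\le 2/m_G$, which gives item~1.

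For an adjacent pair $u,v$ in $S$ sharing the edge $e_j$, both variables take the \emph{same} value $m_G\beta^j$. The maximum then counts that value once instead of twice, and the loss is about $m_G\beta^jp_j^2$. With my normalization this is only of order $\beta^{-j}/m_G$, which is too small.

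So the main obstacle is to calibrate the gadget so that a shared edge costs at least $1$. Collisions at the same value must be likely enough; the usual fix is to correlate via the value scale. I would instead follow Mehta et al.: make the edge-$j$ atom in $X_u$ and in $X_v$ \emph{equal} values with a weight chosen so that the term $\Pr[\text{both}]\cdot\text{value}$ is $\ge1$, while distinct-edge cross terms decay geometrically in $\beta$. Concretely, I would take value $\beta^{2j}$ with probability $\beta^{-2j}$, per vertex, times polynomial factors. I would then verify, by conditioning on the largest nonzero atom (an ordering by $j$), that the expected maximum equals
\[
\sum_{v}\mu \;-\; \sum_{\text{edges inside } S}(\text{shared loss}) \;\pm\; O(1/m_G).
\]
Item~2 follows because, if no independent $k$-set exists, every $k$-set contains at least one edge and so loses at least $1$.

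The final checks are that the probabilities sum to at most $1$, that all encodings are polynomial, and that each $X_v$ puts its residual mass at $0$. I expect the delicate part to be exactly this calibration of the per-edge atoms. If my reconstruction fails, I would simply cite the lemma as stated in \citet{mehta2020hittinghighnotessubset} and \citet{pla2026on}, as the paper itself does.
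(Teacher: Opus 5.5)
The paper gives no proof of this lemma; it imports it from \citet{mehta2020hittinghighnotessubset}, as reused by \citet{pla2026on}. Your fallback of citing it therefore coincides with what the paper does. Your reconstruction, however, does not establish the lemma, and it breaks exactly at the calibration step you flagged.

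With value $\beta^{2j}$ and probability $\beta^{-2j}$ (times polynomial factors) per endpoint, a shared edge costs only $\beta^{2j}\cdot\beta^{-4j}\cdot\mathrm{poly}$. This is exponentially small for large $j$, so item~2 fails. The obstruction is structural rather than a matter of constants. Suppose every edge atom contributes the same amount $c$ to the mean, i.e.\ value $c/p_j$ with probability $p_j$. Then a shared edge $j$ costs $c\,p_j$, while two distinct edges $j\neq j'$ already cost $\min(c/p_j,c/p_{j'})\,p_jp_{j'}=c\min(p_j,p_{j'})$. Requiring the former to be at least $1$ for every $j$ forces $p_j\geq 1/c$ for all $j$. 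Then every cross term is also at least $1$, so independent and non-independent $k$-sets cannot be separated.

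The missing idea is to let the edge atoms contribute geometrically unequal amounts to the mean, and to restore the common mean by a separate device. For instance, give both endpoints of $e_j$ the atom $\beta^{2j}$ with probability $\beta^{-j}$. A shared edge then costs exactly $\beta^{2j}\beta^{-2j}=1$, while distinct edges interact with weight $\min(\beta^{2j},\beta^{2j'})\beta^{-j-j'}=\beta^{-|j-j'|}\leq 1/\beta$.

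Next, with $d$ the degree, set $\mu:=d\beta^{m_G}$. Add to each $X_v$ a padding atom at a huge common value $Y$ with probability $(\mu-\sum_{j:v\in e_j}\beta^j)/Y$. Each interaction involving a padding atom contributes at most $\beta^{m_G}\mu/Y$ (edge atom against padding) or $\mu^2/Y$ (padding against padding). Taking $\beta\geq k^2d^2m_G$ and $Y\geq k^2m_G(2d\beta^{m_G}\mu+\mu^2)$ keeps every number at polynomial bit-length.

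No conditioning on the largest atom is then needed.
\begin{itemize}
\item Item~1 follows from the pointwise inequality $\max_i x_i\geq\sum_i x_i-\sum_{i<i'}\min(x_i,x_{i'})$ for nonnegative reals. Each non-adjacent pair has $\mathbb E[\min(X_u,X_v)]\leq 2/(k^2m_G)$, so summing over pairs gives the bound.
\item Item~2 follows because any $k$-set $S$ contains an edge $e_j=\{u,v\}$. By subadditivity, $\mathbb E[\max_{w\in S}X_w]\leq\mathbb E[\max(X_u,X_v)]+(k-2)\mu=k\mu-\mathbb E[\min(X_u,X_v)]\leq k\mu-1$.
\end{itemize}
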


We may assume without loss of generality that $m_G\geq 3$: instances
with at most two edges can be decided in polynomial time. Set
\begin{equation*}
    \Delta := 1-\frac{2}{m_G}>0.
\end{equation*}

Define
\begin{equation*} B:= k\mu-1, \qquad M:= 1+\max\{0,-B\}, \qquad U:= M+B.
\end{equation*}
Then $U\geq 1$. Shift every random variable by the same deterministic
amount:
\begin{equation*}
    Y_v:= X_v+M.
\end{equation*}
For every nonempty $S\subseteq V$,
\begin{equation*}
    \mathbb E\!\left[\max_{v\in S}Y_v\right]
    =M+\mathbb E\!\left[\max_{v\in S}X_v\right].
\end{equation*}
Consequently, Lemma~\ref{lem:mehta-gap} becomes
\begin{align}
    \text{NO case:}\quad
    &\mathbb E\!\left[\max_{v\in S}Y_v\right]\leq U
    &&\text{for every }S\subseteq V,\ |S|=k,
    \label{eq:shifted-no}
    \\
    \text{YES case:}\quad
    &\mathbb E\!\left[\max_{v\in S^\star}Y_v\right]\geq U+\Delta
    &&\text{for some independent set }S^\star,\ |S^\star|=k.
    \label{eq:shifted-yes}
\end{align}
The shift is used only to ensure that all rewards introduced below are
nonnegative.

\paragraph{MDP construction.}
For each $v\in V$, write the finite support of $Y_v$ as
\begin{equation*}
    \operatorname{supp}(Y_v)=\{y_{v,1},\ldots,y_{v,N_v}\},
    \qquad
    \mathbb{P}(Y_v=y_{v,h})=p_{v,h}.
\end{equation*}

The state space of the MDP $\mathcal M_G$ is
\begin{equation}\label{eq:state-space}
\begin{split}
    \mathcal{S}
    ={}&\{s_0,s_1,s_T\}
    \cup \{d_1,\ldots,d_{\ell-2}\}
    \cup \{s_v:v\in V\}
    \\
    &\cup \{x_{v,h}:v\in V,\ h\in[N_v]\}.
\end{split}
\end{equation}
The delay states $d_1,\ldots,d_{\ell-2}$ are absent when $\ell=2$. The
action set is
\begin{equation}\label{eq:action-space}
    \mathcal{A}
    =\{\mathsf{wait},\mathsf{go},\mathsf{advance},\mathsf{claim},\mathsf{collect}\}
      \cup\{\mathsf{pick}_1,\ldots,\mathsf{pick}_k\}.
\end{equation}

Set
\begin{equation}\label{eq:T-b}
    T:= \gamma^{\ell+1}U,
    \qquad
    b:= (1-\gamma)T.
\end{equation}
The only nonzero rewards are
\begin{equation*}
    r(s_0,\mathsf{wait})=b,
    \qquad
    r(x_{v,h},\mathsf{collect})=y_{v,h}.
\end{equation*}
All rewards are nonnegative and bounded by
$\max_{v,h}y_{v,h}$.

The transition kernel is as follows.
\begin{enumerate}
    \item At the root $s_0$,
    \begin{equation*}
        P(s_0\mid s_0,\mathsf{wait})=1.
    \end{equation*}
    If $\ell=2$, action $\mathsf{go}$ moves deterministically to $s_1$;
    if $\ell\geq 3$, it moves deterministically to $d_1$.

    \item At a delay state $d_i$, action $\mathsf{advance}$ moves
    deterministically to $d_{i+1}$ when $i<\ell-2$, and from
    $d_{\ell-2}$ to $s_1$.

    \item At the selector state $s_1$, each action $\mathsf{pick}_j$,
    $j\in[k]$, has the uniform transition law
    \begin{equation}\label{eq:uniform-pick}
        P(s_v\mid s_1,\mathsf{pick}_j)=\frac{1}{n_G},
        \qquad v\in V.
    \end{equation}

    \item At a vertex state $s_v$, action $\mathsf{claim}$ samples the
    corresponding payoff state:
    \begin{equation*}
        P(x_{v,h}\mid s_v,\mathsf{claim})=p_{v,h},
        \qquad h\in[N_v].
    \end{equation*}

    \item At a payoff state $x_{v,h}$, action $\mathsf{collect}$ moves
    deterministically to $s_T$.

    \item The terminal state $s_T$ is absorbing. Every action not
    explicitly specified above moves deterministically to $s_T$ and
    gives reward zero.
\end{enumerate}

\paragraph{Candidate tuples and the value of committing.}
We call \emph{root augmented state} any augmented state of the form
$(s_0,c)$ with $c=(\theta_0,\ldots,\theta_{\ell-1})\in\Omega^\ell$. For
$j\in[k]$, define the length-$\ell$ action sequence
\begin{equation}\label{eq:sigma-j}
    \sigma_j
    :=
    (\mathsf{go},\underbrace{\mathsf{advance},\ldots,\mathsf{advance}}_{\ell-2\text{ times}},\mathsf{pick}_j),
\end{equation}
with the obvious interpretation $(\mathsf{go},\mathsf{pick}_j)$ when
$\ell=2$. The first $\ell-1$ actions of $\sigma_j$ deterministically
drive the system from $s_0$ to $s_1$, so there is a unique
$q_j(c)\in V$ such that
\begin{equation*}
    \phi_c(s_0,\sigma_j)=s_{q_j(c)},
    \qquad
    \text{namely }
    s_{q_j(c)}=\theta_{\ell-1}(s_1,\mathsf{pick}_j).
\end{equation*}
Write
\begin{equation}\label{eq:q-S}
    q(c):=(q_1(c),\ldots,q_k(c))\in V^k,
    \qquad
    S(c):=\{q_1(c),\ldots,q_k(c)\}.
\end{equation}
When $C\sim L^{\otimes\ell}$, the coordinates of $q(C)$ are
independent and uniform on $V$, because they arise from the distinct
state--action pairs $(s_1,\mathsf{pick}_j)$ of a single
product-distributed table, via \eqref{eq:uniform-pick}.

\begin{lemma}[Commit value and root recursion]\label{lem:root-recursion-fixed}
For every root augmented state $(s_0,c)$, the value obtained by
choosing $\mathsf{go}$ at $(s_0,c)$ and then acting optimally is
\begin{equation}\label{eq:commit-value}
    C(c)
    =\gamma^{\ell+1}
      \mathbb E\!\left[\max_{v\in S(c)}Y_v\right].
\end{equation}
Moreover,
\begin{equation}\label{eq:root-recursion-fixed}
    V_\ell^\star(s_0,c)
    =
    \max\left\{
        C(c),\;
        b+\gamma\,\mathbb E\!\left[V_\ell^\star(s_0,C')\mid c,\mathsf{wait}\right]
    \right\},
\end{equation}
where $C'=(\theta_1,\ldots,\theta_{\ell-1},\Theta)$ with
$\Theta\sim L$ independent of the past. Finally, after playing
$\mathsf{wait}$, the next candidate tuple $q(C')$ is an independent
fresh draw from the uniform distribution on $V^k$.
\end{lemma}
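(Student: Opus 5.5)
The plan is a direct unrolling of the Bellman optimality equation $V_\ell^\star=\mathcal T_\ell V_\ell^\star$ along the deterministic chain of the construction, using the window structure to see which table governs each transition. First I establish the root recursion \eqref{eq:root-recursion-fixed}. At $s_0$, the actions $\mathsf{advance}$, $\mathsf{claim}$, $\mathsf{collect}$ and $\mathsf{pick}_j$ all move deterministically to the absorbing zero-reward state $s_T$. Hence their value is $0$, which is dominated by $C(c)\ge 0$ since $Y_v\ge 1$. The action $\mathsf{wait}$ gives $b+\gamma\,\mathbb E[V_\ell^\star(s_0,C')]$ with $C'=(\theta_1,\ldots,\theta_{\ell-1},\Theta)$, because $\theta_0(s_0,\mathsf{wait})=s_0$ deterministically. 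The remaining action is $\mathsf{go}$, whose value is by definition $C(c)$.

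Next I compute $C(c)$. After $\mathsf{go}$, the agent moves through $d_1,\ldots,d_{\ell-2}$ to $s_1$ in $\ell-1$ deterministic steps with zero reward. Any deviation from $\mathsf{advance}$ leads to $s_T$ and gives value $0$, so without loss of generality the agent follows $\sigma_j$ up to $s_1$. At time $\ell-1$ the agent is at $s_1$, and the window is $(\theta_{\ell-1},\Theta_\ell,\ldots,\Theta_{2\ell-2})$. The first table is $\theta_{\ell-1}$, so choosing $\mathsf{pick}_j$ leads deterministically to $s_{q_j(c)}$, with zero reward. Other actions again give $0$.

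At time $\ell$, in state $s_v$, the only rewarding continuation is $\mathsf{claim}$, which leads to $x_{v,h}$ with $h$ drawn from the fresh table $\Theta_\ell$. Since $\ell\ge2$, the table $\Theta_\ell$ is already in the window at time $\ell-1$. The key point is that the payoff $y_{q_j,h}$ for each candidate $j$ is visible before the pick, via $\Theta_\ell(s_{q_j(c)},\mathsf{claim})$. After claiming, $\mathsf{collect}$ at time $\ell+1$ yields $y_{v,h}$, which is discounted by $\gamma^{\ell+1}$. Taking the best action at each stage, the optimal continuation at $s_1$ therefore picks $\arg\max_j y$, giving value $\gamma^{\ell+1}\max_{j}Y'_{q_j(c)}$. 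Here $Y'_{v}$ denotes the outcome read off $\Theta_\ell(s_v,\mathsf{claim})$. The draws $\Theta_\ell(s_v,\mathsf{claim})$ for distinct $v$ are independent, with law $p_{v,\cdot}$, by the product form \eqref{eq:Q}, and they are independent of $c$. Repeated indices in $q(c)$ collapse to the set $S(c)$. Taking expectation over $\Theta_\ell$ gives \eqref{eq:commit-value}. I expect this step to be the main obstacle. It requires careful bookkeeping of which table is active at which time, and in particular checking that the claim outcome is indeed revealed at time $\ell-1$ (this uses $\ell\ge2$). It also requires justifying that picking a $j$ attaining the maximum observed payoff is optimal. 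This follows by backward induction through the deterministic tail: the states $x_{v,h}$ have value $y_{v,h}$, the states $s_v$ under a known window have value $\gamma y$, and so on.

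Finally, for the fresh-draw claim, I note that $q(C')$ is determined by the last table of $C'$ evaluated at the pairs $(s_1,\mathsf{pick}_j)$. That last table is $\Theta\sim L$, which is independent of $c$. By \eqref{eq:uniform-pick} and the product structure, its coordinates are i.i.d.\ uniform on $V$, which completes the proof of Lemma~\ref{lem:root-recursion-fixed}.
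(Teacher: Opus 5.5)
Your proposal is correct and follows essentially the same route as the paper's proof. Both unroll the Bellman equation along the deterministic $\mathsf{go}/\mathsf{advance}$ chain, where $\theta_{\ell-1}$ fixes the picked vertex $q_j(c)$ and the fresh table $\Theta_\ell$ carries the payoffs (visible at $s_1$ because $\ell\ge 2$, hidden at the root); both argue that only $\mathsf{go}$ and $\mathsf{wait}$ matter at $s_0$, and both read $q(C')$ off the newly appended table. As in the paper, you implicitly restrict to windows $c$ in the support of $L^{\otimes\ell}$ (e.g.\ when you use $\theta_0(s_0,\mathsf{wait})=s_0$), which is all the reduction needs.
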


\begin{proof}
After $\mathsf{go}$, the process traverses the deterministic delay
chain and cannot return to $s_0$. There are no rewards on this part of
the trajectory. The system reaches $s_1$ exactly $\ell-1$ steps after
leaving $s_0$. At that time, the current $\ell$-step look-ahead window
contains, for every $j\in[k]$, both
\begin{equation*}
    s_{q_j(c)}
    \quad\text{and}\quad
    x_{q_j(c),h_j}
\end{equation*}
along the continuation $(\mathsf{pick}_j,\mathsf{claim})$, since
$\ell\geq 2$. Hence the realization of $Y_{q_j(c)}$ is known before the
action $\mathsf{pick}_j$ is selected. Note also that, by prefix
consistency of the look-ahead windows, the vertex reached under
$\mathsf{pick}_j$ at that time is exactly $q_j(c)$: the transition
$(s_1,\mathsf{pick}_j)$ is governed by the last table of the root
window, which is also the table governing the system when it actually
reaches $s_1$. For distinct vertices, these payoff samples are
independent; if the same vertex appears several times in $q(c)$, prefix
consistency makes the corresponding branches share the same sample from
the pair $(s_v,\mathsf{claim})$. Therefore the selector obtains the
largest realized value among the distinct variables indexed by $S(c)$.

Moreover, the payoff realizations are not observable at the root: the
payoff states $x_{v,h}$ lie at depth $\ell+1$ from $s_0$ under
$(\sigma_j,\mathsf{claim})$, one level beyond the depth-$\ell$ tree
revealed by $c$. Hence, conditionally on the root window, the variables
$(Y_v)_{v\in S(c)}$ have their unconditional product law.

Starting from $s_1$, the reward is collected two steps later: one
transition under $\mathsf{pick}_j$, one under $\mathsf{claim}$, and
then the immediate reward of $\mathsf{collect}$ at the payoff state.
The conditional value at $s_1$ is thus
\begin{equation*}
    \gamma^2\max_{v\in S(c)}Y_v.
\end{equation*}
The selector is reached $\ell-1$ steps after leaving $s_0$, so the
value at the root is
\begin{equation*}
    \gamma^{\ell-1}\,\mathbb E\!\left[\gamma^2\max_{v\in S(c)}Y_v\right]
    =\gamma^{\ell+1}\,\mathbb E\!\left[\max_{v\in S(c)}Y_v\right],
\end{equation*}
which proves \eqref{eq:commit-value}.

At $s_0$, the only potentially optimal actions are $\mathsf{go}$ and
$\mathsf{wait}$, since every other action leads to the absorbing state
$s_T$ with zero reward. The former has value $C(c)$. The latter gives
immediate reward $b$, returns to $s_0$, and then has continuation value
$\gamma\,\mathbb E[V_\ell^\star(s_0,C')\mid c,\mathsf{wait}]$. This
proves \eqref{eq:root-recursion-fixed}.

Finally, after $\mathsf{wait}$, the candidate transitions lie one level
beyond the old look-ahead tree: $q(C')$ is determined by the entries
$(s_1,\mathsf{pick}_j)$ of the newly revealed table $\Theta$, which is
independent of all previously observed tables. The augmented transition
kernel therefore samples them independently from
\eqref{eq:uniform-pick}, independently of the previous tuple. Hence
$q(C')$ is a fresh uniform draw from $V^k$. In particular, the current
window never reveals the next candidate tuple before $\mathsf{wait}$ is
played.
\end{proof}

\begin{proof}[Proof of Theorem~\ref{thm:fixed-gamma-hardness}]
We show that the map $(G,k)\mapsto(\mathcal M_G,s_0,\gamma,\theta)$,
with $\theta$ defined in \eqref{eq:theta} below, is a valid many-one
reduction from \textsc{Independent Set} to $\ell$-\textsc{DVDP}.

\paragraph{Soundness.}
Assume that $G$ is a NO instance of \textsc{Independent Set}. Fix a
root window $c\in\Omega^\ell$. The set $S(c)$ may contain fewer than
$k$ vertices because the tuple can have repetitions. Extend it to any
set $\widetilde S\supseteq S(c)$ of cardinality $k$. Since the expected
maximum is monotone under set inclusion (the $Y_v$ are nonnegative),
\eqref{eq:shifted-no} gives
\begin{equation}\label{eq:no-small-set}
    \mathbb E\!\left[\max_{v\in S(c)}Y_v\right]
    \leq
    \mathbb E\!\left[\max_{v\in\widetilde S}Y_v\right]
    \leq U.
\end{equation}
By \eqref{eq:commit-value} and \eqref{eq:T-b},
\begin{equation}\label{eq:no-commit}
    C(c)\leq \gamma^{\ell+1}U=T.
\end{equation}

Let
\begin{equation*}
    M:=\max\{V_\ell^\star(s_0,c):c\in\Omega^\ell\}.
\end{equation*}
The set $\Omega^\ell$ is finite, so the maximum exists. From
Lemma~\ref{lem:root-recursion-fixed} and \eqref{eq:no-commit},
\begin{equation*}
    M\leq \max\{T,\;b+\gamma M\}.
\end{equation*}
If $M>T$, then necessarily
\begin{equation*}
    M\leq b+\gamma M=(1-\gamma)T+\gamma M,
\end{equation*}
which implies $(1-\gamma)M\leq(1-\gamma)T$, contradicting $M>T$.
Therefore
\begin{equation}\label{eq:no-value-bound}
    M\leq T.
\end{equation}
In particular, after averaging over the initial root window,
\begin{equation}\label{eq:no-original-value}
    v^\star_{\ell,\gamma}(s_0)\leq T.
\end{equation}

\paragraph{Completeness.}
Assume that $G$ is a YES instance, and let
\begin{equation*}
    S^\star=\{v_1,\ldots,v_k\}
\end{equation*}
be an independent set satisfying \eqref{eq:shifted-yes}. Consider the
following stationary policy on the augmented state space:
\begin{itemize}
    \item at a root augmented state $(s_0,c)$, choose $\mathsf{go}$ if
    \begin{equation*}
        q(c)=(v_1,\ldots,v_k),
    \end{equation*}
    and choose $\mathsf{wait}$ otherwise;
    \item traverse the deterministic delay chain using
    $\mathsf{advance}$;
    \item at $s_1$, choose an index whose revealed payoff is maximal;
    \item use $\mathsf{claim}$ and then $\mathsf{collect}$.
\end{itemize}
This policy is well defined: the tuple $q(c)$ is observable at the
root, and the realized payoffs are observable at $s_1$, by
Lemma~\ref{lem:root-recursion-fixed}.

At every visit to $s_0$, the target tuple appears with probability
\begin{equation}\label{eq:rho}
    \rho:= n_G^{-k}.
\end{equation}
By the last part of Lemma~\ref{lem:root-recursion-fixed}, successive
tuples are independent. Let $\tau\in\{0,1,2,\ldots\}$ be the number of
waiting actions before the target tuple first appears. Then
\begin{equation*}
    \mathbb P(\tau=t)=(1-\rho)^t\rho,
\end{equation*}
and therefore
\begin{equation}\label{eq:alpha}
    \alpha
    := \mathbb E[\gamma^\tau]
    =\sum_{t=0}^\infty \rho(1-\rho)^t\gamma^t
    =\frac{\rho}{1-\gamma(1-\rho)}.
\end{equation}

The discounted return of this policy is
\begin{equation*}
    \sum_{t=0}^{\tau-1}\gamma^t b
    +\gamma^{\tau+\ell+1}\max_{v\in S^\star}Y_v.
\end{equation*}
The payoff samples generated after commitment are independent of the
waiting time (they are drawn from table entries never used to define
the tuples) and have the laws $(Y_v)_{v\in S^\star}$, with all vertices
of the target tuple distinct. Taking expectations and using
$b/(1-\gamma)=T$ gives
\begin{align}
    v^\pi_{\ell,\gamma}(s_0)
    &=T\bigl(1-\mathbb E[\gamma^\tau]\bigr)
      +\mathbb E[\gamma^\tau]\,
       \gamma^{\ell+1}\,
       \mathbb E\!\left[\max_{v\in S^\star}Y_v\right]
       \notag\\
    &=T+\alpha\left(
       \gamma^{\ell+1}\,
       \mathbb E\!\left[\max_{v\in S^\star}Y_v\right]-T
       \right)
       \notag\\
    &\geq T+\alpha\gamma^{\ell+1}\Delta,
    \label{eq:yes-value-bound}
\end{align}
where the last inequality follows from \eqref{eq:shifted-yes} and
$T=\gamma^{\ell+1}U$.

\paragraph{Decision threshold and polynomial encoding.}
Define
\begin{equation}\label{eq:theta}
    \theta
    :=
    T+\frac12\alpha\gamma^{\ell+1}\Delta.
\end{equation}
Because $\gamma>0$, $\rho>0$, and $\Delta>0$, the added term is
strictly positive. Equations \eqref{eq:no-original-value} and
\eqref{eq:yes-value-bound} imply
\begin{align*}
    G\text{ is a NO instance}
    &\quad\Longrightarrow\quad
    v^\star_{\ell,\gamma}(s_0)\leq T<\theta,
    \\
    G\text{ is a YES instance}
    &\quad\Longrightarrow\quad
    v^\star_{\ell,\gamma}(s_0)\geq T+\alpha\gamma^{\ell+1}\Delta>\theta.
\end{align*}
Thus $(G,k)\mapsto(\mathcal M_G,s_0,\gamma,\theta)$ is a valid many-one
reduction to $\ell$-\textsc{DVDP}.

It remains to verify polynomial size. The stochastic gadget in
Lemma~\ref{lem:mehta-gap} has polynomially many support points and
polynomial-bit rational values and probabilities. The MDP has
\begin{equation*}
    O\!\left(n_G+\sum_{v\in V}N_v+\ell\right)
\end{equation*}
states and $k+5$ actions. Since $\ell$ is fixed, this is polynomial.
The shift $L$, the quantities $U,T,b,\Delta$, and the threshold
$\theta$ are obtained by a constant number of rational arithmetic
operations. Moreover, $\rho=n_G^{-k}$ has encoding length
$O(k\log n_G)$, and \eqref{eq:alpha} therefore has polynomial encoding
length. For fixed rational $\gamma$, its encoding length is constant;
if $\gamma$ is supplied in binary, all constructed numbers still have
encoding length polynomial in the combined input size. This completes
the reduction and the proof.
\end{proof}

\subsection{Proof of Theorem~\ref{thm:main-RPTAS}}
\label{proof:main-RPTAS}

\paragraph{Notation and conventions.}
We use the notation of Section~\ref{sec:perfect-lookahead-setting}: $\Omega=\mathcal S^{\mathcal S\times\mathcal A}$, $L$ is the law \eqref{eq:Q} on $\Omega$, $\mathcal T_\ell$ is the Bellman optimality operator of the augmented MDP, and $V_\ell^\star$, $Q_\ell^\star$ are its optimal value and action-value functions on $\mathcal S\times\Omega^\ell$. We write $V_{\max}:=(1-\gamma)^{-1}$, so that every value function below takes values in $[0,V_{\max}]$, and $\theta_{k:k'}:=(\theta_k,\dots,\theta_{k'})$ for windows. Throughout, $s$ denotes the root state of a query, $u$ an intermediate state, $a$ the root action, $a'$ an intermediate action, $k$ a value-iteration index, $m\in\{1,\dots,\ell\}$ a level of the backward recursion, and $i,j\in[N]:=\{1,\dots,N\}$ dictionary indices. The statement of Theorem~\ref{thm:main-RPTAS} assumes $\ell\geq2$ because exact planning is already polynomial for $\ell=1$; the proof below is valid for every $\ell\geq1$.

The proof has three steps: (i) an \emph{ideal empirical model}, defined on the full space $\mathcal S\times\Omega^\ell$, is shown to be uniformly close to the true model; (ii) its optimal value is shown to coincide, on windows made of dictionary tables, with the fixed point of the finite value iteration of Algorithm~\ref{alg:rptas-preprocessing}; (iii) the backward recursion of Algorithm~\ref{alg:rptas-query} is shown to reproduce the empirical action scores exactly on \emph{any} window. 

\paragraph{(i)}
Sample once and for all
\begin{equation}
    \Theta^1,\dots,\Theta^N \overset{\mathrm{i.i.d.}}{\sim} L,
\label{eq:dictionary-sampling}
\end{equation}
and let
\begin{equation}
    \widehat{L}_N := \frac1N\sum_{j=1}^N\delta_{\Theta^j}
\end{equation}
denote the empirical distribution of the dictionary, as in Section~\ref{subsec:rptas-offline-online-complexity}. 
The ideal empirical Bellman operator acts on $V:\mathcal S\times\Omega^\ell\to\mathbb R$ by
\begin{align}
(\widehat{\mathcal T}_\ell V)(s,\theta_{0:\ell-1}) := \max_{a\in\mathcal A} \Bigl\{ r(s,a) +\frac\gamma N\sum_{j=1}^N V\bigl(\theta_0(s,a),\theta_{1:\ell-1},\Theta^j\bigr) \Bigr\}.
\label{eq:empirical-Bellman}
\end{align}
It is a $\gamma$-contraction for the sup norm on $\mathcal S\times\Omega^\ell$; let $\widehat V^\star$ be its unique fixed point. Recall the true action-value function and define its empirical counterpart:
\begin{align}
Q_\ell^\star(s,\theta_{0:\ell-1},a)
&=
r(s,a) + \gamma\,\mathbb E_{\Theta\sim L}\!\left[ V_\ell^\star\bigl(\theta_0(s,a),\theta_{1:\ell-1}, \Theta\bigr)\right],
\label{eq:true-Q}\\
\widehat Q^\star(s,\theta_{0:\ell-1},a)
&:=
r(s,a) + \frac{\gamma}{N}\sum_{j=1}^N \widehat V^\star\bigl(\theta_0(s,a),\theta_{1:\ell-1},\Theta^j\bigr).
\label{eq:empirical-Q}
\end{align}

To compare the two models, consider the function class
\begin{equation}
    \mathcal F := \bigl\{ \Theta\mapsto V_\ell^\star(u,\theta_{1:\ell-1},\Theta): u\in\mathcal S,\ \theta_{1:\ell-1}\in\Omega^{\ell-1} \bigr\},
\end{equation}
whose cardinality satisfies $|\mathcal F| \le |\mathcal{S}||\Omega|^{\ell-1} = |\mathcal S|^{1+(\ell-1)|\mathcal{S}||\mathcal{A}|}$. Crucially, $\log|\mathcal F|=\bigl(1+(\ell-1)|\mathcal S||\mathcal A|\bigr)\log|\mathcal S|$ is polynomial in $|\mathcal{S}|$ and $|\mathcal{A}|$ for fixed $\ell$.

\begin{lemma}[Uniform concentration]
\label{lem:uniform-Hoeffding}
For every $\eta>0$,
\begin{equation}
\mathbb P\Biggl( \sup_{u,\theta_{1:\ell-1}} \biggl| \frac1N\sum_{j=1}^N V_\ell^\star(u,\theta_{1:\ell-1},\Theta^j) - \mathbb E_{\Theta\sim L} \bigl[V_\ell^\star(u,\theta_{1:\ell-1},\Theta)\bigr] \biggr|>\eta \Biggr)
\le 2|\mathcal{S}||\Omega|^{\ell-1}
\exp\left(-\frac{2N\eta^2}{V_{\max}^2}\right).
\label{eq:uniform-Hoeffding-bound}
\end{equation}
\end{lemma}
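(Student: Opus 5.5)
The plan is to apply Hoeffding's inequality to each fixed function in $\mathcal F$ and then take a union bound over the finite index set $(u,\theta_{1:\ell-1})\in\mathcal S\times\Omega^{\ell-1}$. We never need concentration for random or data-dependent functions. The functions in $\mathcal F$ are defined through $V_\ell^\star$, which depends only on the true model $(P,r,\gamma)$ and not on the dictionary. So for each fixed index $(u,\theta_{1:\ell-1})$, the variables
\begin{equation*}
Z_j := V_\ell^\star(u,\theta_{1:\ell-1},\Theta^j),\qquad j=1,\dots,N,
\end{equation*}
are i.i.d., because the $\Theta^j$ are i.i.d.\ from $L$ by \eqref{eq:dictionary-sampling}. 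Their common mean is $\mathbb E_{\Theta\sim L}[V_\ell^\star(u,\theta_{1:\ell-1},\Theta)]$.

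The first step is to check that each $Z_j$ takes values in $[0,V_{\max}]$. Rewards lie in $[0,1]$, so every discounted return lies in $[0,\sum_t\gamma^t]=[0,V_{\max}]$, and hence so does $V_\ell^\star$ as a supremum of policy values. The second step applies the two-sided Hoeffding inequality for i.i.d.\ variables with range of length $V_{\max}$. For each fixed index it gives
\begin{equation*}
\mathbb P\Bigl(\Bigl|\tfrac1N\textstyle\sum_{j=1}^N Z_j-\mathbb E Z_1\Bigr|>\eta\Bigr)\le 2\exp\!\Bigl(-\tfrac{2N\eta^2}{V_{\max}^2}\Bigr).
\end{equation*}

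The third step handles the supremum. Because $\mathcal S$ and $\Omega$ are finite, the supremum over $(u,\theta_{1:\ell-1})$ is a maximum over at most $|\mathcal S||\Omega|^{\ell-1}$ indices. The event that the supremum exceeds $\eta$ is therefore the union of the per-index events. Summing the per-index bound over all indices yields exactly \eqref{eq:uniform-Hoeffding-bound}. When $\ell=1$ the tuple $\theta_{1:\ell-1}$ is empty, the factor $|\Omega|^{0}=1$, and the argument is unchanged.

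No step is a real obstacle. The only point needing care is that the function class must be fixed before sampling. This is why the class $\mathcal F$ is built from the true $V_\ell^\star$ rather than from $\widehat V^\star$, which depends on the dictionary. A second detail is that distinct indices may induce the same function; this only makes the union bound loose, which is harmless.
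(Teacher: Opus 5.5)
Your proposal is correct and matches the paper's proof: since $V_\ell^\star$ does not depend on the dictionary, each function in $\mathcal F$ is fixed and $[0,V_{\max}]$-valued, so you apply Hoeffding's inequality to each one and take a union bound over the at most $|\mathcal S||\Omega|^{\ell-1}$ functions. Your side remarks are also correct but not needed: the $\ell=1$ case works unchanged, and duplicate functions only loosen the union bound.
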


\begin{proof}
$V_\ell^\star$ is deterministic and does not depend on the sampled dictionary, so each function in $\mathcal F$ is fixed and takes values in $[0,V_{\max}]$. Apply Hoeffding's inequality to each of them and take a union bound over $\mathcal F$.
\end{proof}

Denote by $\mathcal E_\eta$ the complement of the event in \eqref{eq:uniform-Hoeffding-bound}, i.e.\ the event on which the supremum is at most $\eta$.

\begin{lemma}[Fixed-point stability]
\label{lemma:fixed-point-perturbation}
On $\mathcal{E}_\eta$,
\begin{equation}
\|\widehat V^\star-V_\ell^\star\|_\infty
\le
\frac{\gamma\eta}{1-\gamma}.
\end{equation}
\end{lemma}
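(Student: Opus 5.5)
The plan is the standard simulation-lemma argument: write $V_\ell^\star$ as an approximate fixed point of $\widehat{\mathcal T}_\ell$, then use that $\widehat{\mathcal T}_\ell$ is a $\gamma$-contraction. Throughout we work on the event $\mathcal E_\eta$.

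\textbf{Step 1: Bellman residual.} We show that
\begin{equation*}
\|\widehat{\mathcal T}_\ell V_\ell^\star-\mathcal T_\ell V_\ell^\star\|_\infty\le\gamma\eta .
\end{equation*}
Fix $(s,\theta_{0:\ell-1})$ and an action $a$. Compare the expressions inside the maxima of \eqref{eq:empirical-Bellman} and of $\mathcal T_\ell$ applied to $V_\ell^\star$. The rewards cancel, so the difference is
\begin{equation*}
\gamma\Bigl(\frac1N\sum_{j=1}^N V_\ell^\star\bigl(u,\theta_{1:\ell-1},\Theta^j\bigr)-\mathbb E_{\Theta\sim L}\bigl[V_\ell^\star\bigl(u,\theta_{1:\ell-1},\Theta\bigr)\bigr]\Bigr),
\qquad u=\theta_0(s,a).
\end{equation*}
The function $\Theta\mapsto V_\ell^\star(u,\theta_{1:\ell-1},\Theta)$ belongs to the class $\mathcal F$. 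On $\mathcal E_\eta$ this difference is therefore at most $\gamma\eta$ in absolute value, uniformly over $(s,\theta_{0:\ell-1},a)$. Since $|\max_a f(a)-\max_a g(a)|\le\max_a|f(a)-g(a)|$, the two maxima differ by at most $\gamma\eta$.

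\textbf{Step 2: contraction.} Since $V_\ell^\star=\mathcal T_\ell V_\ell^\star$ and $\widehat V^\star=\widehat{\mathcal T}_\ell\widehat V^\star$, we write
\begin{equation*}
\|\widehat V^\star-V_\ell^\star\|_\infty\le\|\widehat{\mathcal T}_\ell\widehat V^\star-\widehat{\mathcal T}_\ell V_\ell^\star\|_\infty+\|\widehat{\mathcal T}_\ell V_\ell^\star-\mathcal T_\ell V_\ell^\star\|_\infty\le\gamma\|\widehat V^\star-V_\ell^\star\|_\infty+\gamma\eta .
\end{equation*}
Both value functions are bounded by $V_{\max}$, so the norm is finite and we may rearrange. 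This gives $\|\widehat V^\star-V_\ell^\star\|_\infty\le \gamma\eta/(1-\gamma)$.

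\textbf{Main point of care.} The only subtle issue is in Step 1: the concentration must be applied to the fixed, sample-independent function $V_\ell^\star$, not to $\widehat V^\star$, which depends on the dictionary. This is exactly what $\mathcal E_\eta$ provides.

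We also check that the supremum covers every continuation that appears. The state $u=\theta_0(s,a)$ ranges over $\mathcal S$, and the window $\theta_{1:\ell-1}$ ranges over the whole of $\Omega^{\ell-1}$, including windows not built from dictionary tables. Hence the bound holds uniformly on all of $\mathcal S\times\Omega^\ell$, as the lemma states.
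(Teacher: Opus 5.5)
Your proposal is correct and follows the paper's proof essentially step for step. You bound the one-step residual $\|\widehat{\mathcal T}_\ell V_\ell^\star-\mathcal T_\ell V_\ell^\star\|_\infty\le\gamma\eta$ on $\mathcal E_\eta$ using the fixed class $\mathcal F$ and the $1$-Lipschitz maximum, then conclude with the triangle inequality, the $\gamma$-contraction of $\widehat{\mathcal T}_\ell$, and rearrangement.
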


\begin{proof}
Fix $(s,\theta_{0:\ell-1})$. For every $a\in\mathcal A$, the bracketed terms in $\widehat{\mathcal T}_\ell V_\ell^\star$ and $\mathcal T_\ell V_\ell^\star$ share the reward $r(s,a)$ and differ by $\gamma$ times an empirical-versus-true average of $V_\ell^\star(\theta_0(s,a),\theta_{1:\ell-1},\cdot)\in\mathcal F$, which is at most $\gamma\eta$ in absolute value on $\mathcal E_\eta$. Since the maximum over $a$ is $1$-Lipschitz,
$\|\widehat{\mathcal T}_\ell V_\ell^\star -\mathcal T_\ell V_\ell^\star\|_\infty \le \gamma\eta$.
As $V_\ell^\star=\mathcal T_\ell V_\ell^\star$, $\widehat V^\star=\widehat{\mathcal T}_\ell\widehat V^\star$, and $\widehat{\mathcal T}_\ell$ is a $\gamma$-contraction,
\begin{align*}
\|\widehat V^\star-V_\ell^\star\|_\infty
&\le
\|\widehat{\mathcal T}_\ell\widehat V^\star
-\widehat{\mathcal T}_\ell V_\ell^\star\|_\infty
+
\|\widehat{\mathcal T}_\ell V_\ell^\star
-\mathcal T_\ell V_\ell^\star\|_\infty
\le
\gamma\|\widehat V^\star-V_\ell^\star\|_\infty
+
\gamma\eta,
\end{align*}
and rearranging proves the claim.
\end{proof}

\begin{lemma}[Action-value perturbation]
\label{lem:Q-perturbation}
On $\mathcal E_\eta$,
\begin{equation}
\sup_{s,\theta_{0:\ell-1},a}
\bigl|\widehat Q^\star(s,\theta_{0:\ell-1},a)-Q_\ell^\star(s,\theta_{0:\ell-1},a)\bigr| \le \frac{\gamma\eta}{1-\gamma}.
\label{eq:Q-perturbation}
\end{equation}
\end{lemma}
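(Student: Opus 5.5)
Fix $(s,\theta_{0:\ell-1},a)$ and write $u:=\theta_0(s,a)$. By \eqref{eq:true-Q} and \eqref{eq:empirical-Q}, the reward terms cancel and the difference is $\gamma$ times the gap between an empirical average of $\widehat V^\star(u,\theta_{1:\ell-1},\cdot)$ and a true expectation of $V_\ell^\star(u,\theta_{1:\ell-1},\cdot)$. We add and subtract the empirical average of the true value function:
\begin{align*}
\widehat Q^\star(s,\theta_{0:\ell-1},a)-Q_\ell^\star(s,\theta_{0:\ell-1},a)
={}&\frac{\gamma}{N}\sum_{j=1}^N\Bigl(\widehat V^\star-V_\ell^\star\Bigr)\bigl(u,\theta_{1:\ell-1},\Theta^j\bigr)\\
&+\gamma\Bigl(\frac1N\sum_{j=1}^N V_\ell^\star(u,\theta_{1:\ell-1},\Theta^j)-\mathbb E_{\Theta\sim L}\bigl[V_\ell^\star(u,\theta_{1:\ell-1},\Theta)\bigr]\Bigr).
\end{align*}

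We bound the two terms separately. Each summand of the first term is at most $\|\widehat V^\star-V_\ell^\star\|_\infty$ in absolute value. By Lemma~\ref{lemma:fixed-point-perturbation}, this is at most $\gamma\eta/(1-\gamma)$ on $\mathcal E_\eta$, so the first term is at most $\gamma^2\eta/(1-\gamma)$. The second term involves the function $\Theta\mapsto V_\ell^\star(u,\theta_{1:\ell-1},\Theta)$, which belongs to $\mathcal F$. By the definition of $\mathcal E_\eta$, the second term is therefore at most $\gamma\eta$.

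Adding the two bounds gives
\[
\frac{\gamma^2\eta}{1-\gamma}+\gamma\eta=\frac{\gamma\eta}{1-\gamma},
\]
which is exactly the claimed bound. The point is that the constants must match exactly. A cruder route would be to write $\widehat Q^\star$ as the bracket of $\widehat{\mathcal T}_\ell\widehat V^\star$ and reuse the bound from the previous lemma. That route yields $\gamma\eta+\gamma\cdot\gamma\eta/(1-\gamma)$, and simplifying this sum gives the same quantity, so both routes agree. The decomposition above is the cleanest way to write it. The bound is uniform in $(s,\theta_{0:\ell-1},a)$, so taking the supremum completes the argument.
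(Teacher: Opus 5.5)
Your proof is correct and is essentially the paper's argument: add and subtract the empirical average of $V_\ell^\star$, bound the first piece by $\gamma\|\widehat V^\star-V_\ell^\star\|_\infty\le\gamma^2\eta/(1-\gamma)$ via Lemma~\ref{lemma:fixed-point-perturbation}, bound the second by $\gamma\eta$ on $\mathcal E_\eta$, and sum to $\gamma\eta/(1-\gamma)$. The ``cruder route'' you mention is the same decomposition written through the Bellman brackets, so that remark can be dropped.
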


\begin{proof}
Adding and subtracting the empirical average of $V_\ell^\star$,
\begin{align*}
&\bigl|\widehat Q^\star(s,\theta_{0:\ell-1},a)-Q_\ell^\star(s,\theta_{0:\ell-1},a)\bigr|
\\&\le \frac{\gamma}{N}\sum_{j=1}^N \bigl| \widehat V^\star\bigl(\theta_0(s,a),\theta_{1:\ell-1},\Theta^j\bigr) - V_\ell^\star\bigl(\theta_0(s,a),\theta_{1:\ell-1},\Theta^j\bigr) \bigr|  + \gamma\eta\\
&\le \gamma\|\widehat V^\star-V_\ell^\star\|_\infty + \gamma\eta
\le \frac{\gamma\eta}{1-\gamma},
\end{align*}
where the last step uses Lemma~\ref{lemma:fixed-point-perturbation} and $\gamma\cdot\frac{\gamma\eta}{1-\gamma}+\gamma\eta = \frac{\gamma\eta}{1-\gamma}$.
\end{proof}

\paragraph{(ii)}
The function $\widehat V^\star$ is close to $V_\ell^\star$ but is still defined on the exponentially large space $\mathcal S\times\Omega^\ell$. We now show that windows composed only of dictionary tables form a finite state space preserved by $\widehat{\mathcal T}_\ell$, on which $\widehat V^\star$ can be computed by ordinary value iteration. The results of this step and of Step~3 hold for \emph{any} fixed dictionary $(\Theta^1,\dots,\Theta^N)\in\Omega^N$, whatever its law.

Recall the proxy state space $\mathcal D_N:=\mathcal S\times[N]^\ell$ of Section~\ref{subsec:rptas-offline-online-complexity} and define the embedding
\begin{equation}
\iota:\mathcal D_N\to\mathcal S\times\Omega^\ell,
\qquad
\iota(s,i_0,\dots,i_{\ell-1}):=(s,\Theta^{i_0},\dots,\Theta^{i_{\ell-1}}),
\label{eq:dictionary-embedding}
\end{equation}
which reads an index tuple as the window of dictionary tables it encodes, the index $i_m$ specifying the table at position $m$ of the window. For $V:\mathcal S\times\Omega^\ell\to\mathbb R$, let $R_NV:=V\circ\iota:\mathcal D_N\to\mathbb R$ denote its restriction to dictionary windows. The proxy Bellman operator acts on arrays $v:\mathcal D_N\to\mathbb R$ by
\begin{align}
(\widehat{\mathcal T}_Nv)(s,i_0,\dots,i_{\ell-1}) := \max_{a\in\mathcal A} \Bigl\{ r(s,a) +\frac\gamma N\sum_{j=1}^N v\bigl(\Theta^{i_0}(s,a),i_1,\dots,i_{\ell-1},j\bigr)
\Bigr\};
\label{eq:core-Bellman}
\end{align}
this is exactly the update \eqref{eq:vi-finite-statespace} of Algorithm~\ref{alg:rptas-preprocessing}. It is a $\gamma$-contraction on $\mathbb R^{\mathcal D_N}$; let $V^\star_N$ denote its unique fixed point.

\begin{lemma}[Exact restriction to the dictionary]
\label{lem:core-restriction}
$V^\star_N=R_N\widehat V^\star$, that is, for every $(s,i_0,\ldots,i_{\ell-1})\in\mathcal{D}_N$,
\begin{equation}
V^\star_N(s,i_0,\ldots,i_{\ell-1}) = \widehat V^\star(s,\Theta^{i_0},\ldots,\Theta^{i_{\ell-1}}).
\label{eq:core-restriction}
\end{equation}
\end{lemma}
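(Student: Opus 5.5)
The plan is to show that the restriction $R_N\widehat V^\star$ is itself a fixed point of the proxy operator $\widehat{\mathcal T}_N$. Since $\widehat{\mathcal T}_N$ is a $\gamma$-contraction on $\mathbb R^{\mathcal D_N}$, its fixed point is unique, so this forces $V^\star_N=R_N\widehat V^\star$. The core of the argument is a commutation identity
\begin{equation*}
R_N\,\widehat{\mathcal T}_\ell V=\widehat{\mathcal T}_N\,R_N V \qquad\text{for every bounded } V:\mathcal S\times\Omega^\ell\to\mathbb R,
\end{equation*}
which says that dictionary windows form a set closed under the empirical dynamics.

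To prove the identity, I would fix $(s,i_0,\ldots,i_{\ell-1})\in\mathcal D_N$ and evaluate $\widehat{\mathcal T}_\ell V$ at $\iota(s,i_0,\ldots,i_{\ell-1})=(s,\Theta^{i_0},\ldots,\Theta^{i_{\ell-1}})$ using \eqref{eq:empirical-Bellman}. For each action $a$, the bracket becomes
\begin{equation*}
r(s,a)+\frac\gamma N\sum_{j=1}^N V\bigl(\Theta^{i_0}(s,a),\Theta^{i_1},\ldots,\Theta^{i_{\ell-1}},\Theta^j\bigr).
\end{equation*}
The argument of $V$ here is exactly $\iota\bigl(\Theta^{i_0}(s,a),i_1,\ldots,i_{\ell-1},j\bigr)$, so it equals $(R_NV)(\Theta^{i_0}(s,a),i_1,\ldots,i_{\ell-1},j)$. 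Taking the maximum over $a$ then reproduces \eqref{eq:core-Bellman} applied to $R_NV$. The step that needs care is that the new table appended to the window is always some $\Theta^j$ from the dictionary, which holds because $\widehat L_N$ is supported on the dictionary. This closure property is what makes the restriction well defined.

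Applying the identity with $V=\widehat V^\star$ gives $\widehat{\mathcal T}_N R_N\widehat V^\star=R_N\widehat{\mathcal T}_\ell\widehat V^\star=R_N\widehat V^\star$, so $R_N\widehat V^\star$ is a fixed point of $\widehat{\mathcal T}_N$. Uniqueness of that fixed point yields \eqref{eq:core-restriction}.

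One subtlety remains: several indices may correspond to the same table, i.e.\ $\Theta^i=\Theta^{i'}$ with $i\neq i'$, so $\iota$ need not be injective. This causes no problem, because $R_N$ is defined by composition with $\iota$ and every step of the computation is pointwise on index tuples.
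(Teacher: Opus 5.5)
Your proposal is correct and follows essentially the same route as the paper: the commutation identity $R_N\widehat{\mathcal T}_\ell V=\widehat{\mathcal T}_N R_N V$, obtained by checking that the shifted window with an appended dictionary table is again the image under $\iota$ of an index tuple, then applied to $V=\widehat V^\star$ and combined with uniqueness of the fixed point of the $\gamma$-contraction $\widehat{\mathcal T}_N$. Your remark that $\iota$ need not be injective is accurate and harmless, since $R_N V=V\circ\iota$ is well defined regardless.
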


\begin{proof}
We first show that $R_N\widehat{\mathcal T}_\ell V=\widehat{\mathcal T}_N R_NV$ for every $V:\mathcal S\times\Omega^\ell\to\mathbb R$. Fix $(s,i_0,\ldots,i_{\ell-1})\in\mathcal D_N$. By \eqref{eq:empirical-Bellman} and \eqref{eq:dictionary-embedding},
\begin{align*}
(R_N\widehat{\mathcal T}_\ell V) (s,i_0,\ldots,i_{\ell-1})
&= (\widehat{\mathcal T}_\ell V) (s,\Theta^{i_0},\ldots,\Theta^{i_{\ell-1}})
\\
&= \max_{a\in\mathcal A} \Bigl\{ r(s,a) + \frac{\gamma}{N} \sum_{j=1}^N V\bigl( \Theta^{i_0}(s,a), \Theta^{i_1},\ldots,\Theta^{i_{\ell-1}},\Theta^j \bigr)
\Bigr\}
\\
&=
\max_{a\in\mathcal A} \Bigl\{ r(s,a) + \frac{\gamma}{N} \sum_{j=1}^N (R_NV)\bigl( \Theta^{i_0}(s,a),i_1,\ldots,i_{\ell-1},j \bigr) \Bigr\}
\\&= (\widehat{\mathcal T}_N R_NV) (s,i_0,\ldots,i_{\ell-1}),
\end{align*}
where the third equality holds because the window $(\Theta^{i_1},\ldots,\Theta^{i_{\ell-1}},\Theta^j)$ consists of dictionary tables. Applying this identity to $V=\widehat V^\star$ and using $\widehat{\mathcal T}_\ell\widehat V^\star=\widehat V^\star$ gives
$\widehat{\mathcal T}_N(R_N\widehat V^\star) = R_N(\widehat{\mathcal T}_\ell\widehat V^\star) = R_N\widehat V^\star$.
Thus $R_N\widehat V^\star$ is a fixed point of the $\gamma$-contraction $\widehat{\mathcal T}_N$, whose fixed point is unique; hence $R_N\widehat V^\star=V^\star_N$.
\end{proof}

Algorithm~\ref{alg:rptas-preprocessing} computes $V^\star_N$ approximately by value iteration,
\begin{equation}
V^0:=0, \qquad V^{k+1}:=\widehat{\mathcal T}_N V^k,\qquad k=0,\dots,K-1,
\label{eq:value-iteration}
\end{equation}
and by contraction $\|V^K-V^\star_N\|_\infty\le\gamma^KV_{\max}$.

\begin{remark}[Cost and memory of one iteration]
\label{rem:sweep-cost}
For an array $v:\mathcal D_N\to\mathbb R$ and every $(u,i_1,\dots,i_{\ell-1})\in\mathcal S\times[N]^{\ell-1}$, first compute
\begin{equation}
\mathrm{Mean}_v(u,i_1,\dots,i_{\ell-1}) :=\frac1N\sum_{j=1}^N v(u,i_1,\dots,i_{\ell-1},j),
\label{eq:Ax}
\end{equation}
which costs $O(|\mathcal{S}|N^\ell)$ in total; then each of the $|\mathcal S|N^\ell$ maxima in \eqref{eq:core-Bellman} costs $O(|\mathcal A|)$ look-ups into $\mathrm{Mean}_v$. One iteration of \eqref{eq:core-Bellman} therefore costs
\begin{equation}
O\bigl(|\mathcal{S}|(|\mathcal{A}|+1)N^\ell\bigr),
\label{eq:sweep-cost}
\end{equation}
and the arrays $v$ and $\mathrm{Mean}_v$ occupy $O(|\mathcal S|N^\ell)$ memory.
\end{remark}

\paragraph{(iii)}
$V^\star_N$ is only defined on dictionary windows, whereas the window observed at deployment is an arbitrary $c=(\theta_0,\ldots,\theta_{\ell-1})\in\Omega^\ell$. The key observation is that after $\ell$ transitions every table of $c$ has been shifted out and replaced by a fresh table, which the empirical model draws from the dictionary. Algorithm~\ref{alg:rptas-query} therefore solves an $\ell$-step planning problem with terminal values on $\mathcal D_N$, reintroducing the observed tables $\theta_{\ell-1},\ldots,\theta_0$ by backward recursion. We define this recursion for an arbitrary terminal array so that we can analyse both the exact array $V^\star_N$ and the computed array $V^K$.

Fix a window $c=(\theta_0,\ldots,\theta_{\ell-1})\in\Omega^\ell$ and an array $v:\mathcal D_N\to\mathbb R$. Define $U^v_{m,c}:\mathcal S\times[N]^m\to\mathbb R$ for $m=\ell,\ell-1,\dots,1$ by the terminal condition
\begin{equation}
U^v_{\ell,c}(u,i_0,\ldots,i_{\ell-1}) := v(u,i_0,\ldots,i_{\ell-1}),
\label{eq:extension-terminal}
\end{equation}
and, for $m=\ell-1,\ldots,1$,
\begin{align}
U^v_{m,c}(u,i_0,\ldots,i_{m-1}) := \max_{a'\in\mathcal A} \Bigl\{ r(u,a') +\frac{\gamma}{N}\sum_{i_m=1}^N U^v_{m+1,c}\bigl(\theta_m(u,a'),i_0,\ldots,i_m\bigr) \Bigr\}.
\label{eq:extension-recursion}
\end{align}
Here $U^v_{m,c}(u,i_0,\dots,i_{m-1})$ is the continuation value at step $m$ of the query, when the remaining observed tables are $\theta_{m:\ell-1}$ and the $m$ tables revealed since the query are represented by the dictionary indices $i_0,\dots,i_{m-1}$, in order of arrival. Finally, the action scores at the root are
\begin{equation}
\widetilde Q_v(s,c,a) := r(s,a) + \frac{\gamma}{N}\sum_{i_0=1}^N U^v_{1,c}\bigl(\theta_0(s,a),i_0\bigr),
\qquad a\in\mathcal A.
\label{eq:approx-Q-extension}
\end{equation}
With $v=V^K$, \eqref{eq:extension-terminal}--\eqref{eq:approx-Q-extension} are exactly the computations of Algorithm~\ref{alg:rptas-query}, and $\pi_D(s,c)\in\arg\max_a\widetilde Q_{V^K}(s,c,a)$.

\begin{lemma}[Exact extension identity]
\label{lem:extension-identity}
For every $(s,c,a)\in\mathcal S\times\Omega^\ell\times\mathcal A$,
\begin{equation}
\widetilde Q_{V^\star_N}(s,c,a) = \widehat Q^\star(s,c,a).
\label{eq:extension-main-identity}
\end{equation}
\end{lemma}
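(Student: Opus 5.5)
The plan is to prove, by backward induction on the level $m$, an intermediate identity that identifies each $U^{V^\star_N}_{m,c}$ with the empirical optimal value at a mixed window. The claim is that for every $m\in\{1,\dots,\ell\}$, $u\in\mathcal S$ and $i_0,\dots,i_{m-1}\in[N]$,
\begin{equation*}
U^{V^\star_N}_{m,c}(u,i_0,\ldots,i_{m-1})=\widehat V^\star\bigl(u,\theta_{m:\ell-1},\Theta^{i_0},\ldots,\Theta^{i_{m-1}}\bigr).
\end{equation*}
The right-hand side is the ideal empirical value at the window whose first $\ell-m$ tables are the remaining observed ones and whose last $m$ are dictionary tables, in order of arrival. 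This matches the interpretation of $U^v_{m,c}$ given after \eqref{eq:extension-recursion}.

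The base case $m=\ell$ is immediate. The left-hand side is $V^\star_N(u,i_0,\dots,i_{\ell-1})$ by \eqref{eq:extension-terminal}. The right-hand side is $\widehat V^\star(u,\Theta^{i_0},\dots,\Theta^{i_{\ell-1}})$, since $\theta_{\ell:\ell-1}$ is empty. These agree by Lemma~\ref{lem:core-restriction}.

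For the inductive step, assume the identity at level $m+1\le\ell$. Write the fixed-point equation $\widehat V^\star=\widehat{\mathcal T}_\ell\widehat V^\star$ from \eqref{eq:empirical-Bellman} at the window $(\theta_m,\dots,\theta_{\ell-1},\Theta^{i_0},\dots,\Theta^{i_{m-1}})$. Its first table is $\theta_m$, so action $a'$ leads to $\theta_m(u,a')$. The shifted window is $(\theta_{m+1:\ell-1},\Theta^{i_0},\dots,\Theta^{i_{m-1}},\Theta^{i_m})$, averaged over $i_m\in[N]$. By the inductive hypothesis, each summand equals $U^{V^\star_N}_{m+1,c}(\theta_m(u,a'),i_0,\dots,i_m)$. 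The resulting expression is exactly the right side of \eqref{eq:extension-recursion}, which closes the induction.

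Finally, apply the definition \eqref{eq:empirical-Q} of $\widehat Q^\star(s,c,a)$. It averages $\widehat V^\star(\theta_0(s,a),\theta_{1:\ell-1},\Theta^{i_0})$ over $i_0$. By the identity at $m=1$, this term equals $U^{V^\star_N}_{1,c}(\theta_0(s,a),i_0)$. Comparing with \eqref{eq:approx-Q-extension} then gives \eqref{eq:extension-main-identity}.

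The only real obstacle is index bookkeeping. One must check that the shift in $\widehat{\mathcal T}_\ell$ appends the new table at the end, consistent with the ordering $i_0,\dots,i_m$ in $U_{m+1,c}$. One must also handle the edge case $\ell=1$: there the recursion is empty, $U_{1,c}=V^\star_N$, and the argument reduces directly to Lemma~\ref{lem:core-restriction}.
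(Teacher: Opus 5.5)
Your proposal is correct and follows the paper's proof essentially step for step. You use the same intermediate identity, $U_m(u,i_0,\ldots,i_{m-1})=\widehat V^\star(u,\theta_{m:\ell-1},\Theta^{i_0},\ldots,\Theta^{i_{m-1}})$, with the base case from Lemma~\ref{lem:core-restriction}, the inductive step from the fixed-point equation $\widehat{\mathcal T}_\ell\widehat V^\star=\widehat V^\star$ at the mixed window, and the conclusion at $m=1$ via the definition of $\widehat Q^\star$. Your separate treatment of $\ell=1$ is harmless, since the paper's induction already covers it, the base case being $m=\ell=1$.
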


\begin{proof}
Fix $c=(\theta_0,\ldots,\theta_{\ell-1})\in\Omega^\ell$ and write $U_m:=U^{V^\star_N}_{m,c}$. We first prove, by backward induction on $m\in\{1,\ldots,\ell\}$, that for all $u\in\mathcal S$ and $i_0,\dots,i_{m-1}\in[N]$,
\begin{equation}
U_m(u,i_0,\ldots,i_{m-1}) = \widehat V^\star \bigl( u,\theta_{m:\ell-1}, \Theta^{i_0},\ldots,\Theta^{i_{m-1}} \bigr).
\label{eq:extension-intermediate-identity}
\end{equation}
For $m=\ell$, the terminal condition \eqref{eq:extension-terminal} and Lemma~\ref{lem:core-restriction} give
$U_\ell(u,i_0,\ldots,i_{\ell-1})=V^\star_N(u,i_0,\ldots,i_{\ell-1})=\widehat V^\star(u,\Theta^{i_0},\ldots,\Theta^{i_{\ell-1}})$, which is \eqref{eq:extension-intermediate-identity} at level $\ell$ (the list $\theta_{\ell:\ell-1}$ being empty). Suppose \eqref{eq:extension-intermediate-identity} holds at level $m+1$ for some $m\in\{1,\ldots,\ell-1\}$. By \eqref{eq:extension-recursion} and the induction hypothesis,
\begin{align*}
U_m(u,i_0,\ldots,i_{m-1})
&= \max_{a'\in\mathcal A} \Bigl\{ r(u,a') + \frac{\gamma}{N} \sum_{i_m=1}^N \widehat V^\star \bigl( \theta_m(u,a'), \theta_{m+1:\ell-1}, \Theta^{i_0},\ldots,\Theta^{i_m} \bigr) \Bigr\}\\
&= (\widehat{\mathcal T}_\ell\widehat V^\star) \bigl( u,\theta_{m:\ell-1}, \Theta^{i_0},\ldots,\Theta^{i_{m-1}} \bigr)
= \widehat V^\star \bigl( u,\theta_{m:\ell-1}, \Theta^{i_0},\ldots,\Theta^{i_{m-1}} \bigr),
\end{align*}
where the second equality is the definition \eqref{eq:empirical-Bellman} applied to the window $(\theta_m,\theta_{m+1:\ell-1},\Theta^{i_0},\dots,\Theta^{i_{m-1}})\in\Omega^\ell$, whose first table is $\theta_m$ and whose appended table is $\Theta^{i_m}$, and the last equality uses $\widehat{\mathcal T}_\ell\widehat V^\star=\widehat V^\star$. This proves \eqref{eq:extension-intermediate-identity} for every $m$. Finally, applying \eqref{eq:extension-intermediate-identity} at level $m=1$ in \eqref{eq:approx-Q-extension},
\begin{align*}
\widetilde Q_{V^\star_N}(s,c,a)
= r(s,a) + \frac{\gamma}{N} \sum_{i_0=1}^N \widehat V^\star \bigl( \theta_0(s,a), \theta_{1:\ell-1}, \Theta^{i_0} \bigr)
= \widehat Q^\star(s,c,a),
\end{align*}
by the definition \eqref{eq:empirical-Q}.
\end{proof}

\begin{lemma}[Stability of the extension]
\label{lem:extension-Lipschitz}
For all arrays $v_1,v_2:\mathcal{D}_N\to\mathbb R$ and every level $m\in\{1,\ldots,\ell\}$,
\begin{equation}
\sup_{c,u,i_0,\ldots,i_{m-1}}
\bigl|
U^{v_1}_{m,c}(u,i_0,\ldots,i_{m-1})
-
U^{v_2}_{m,c}(u,i_0,\ldots,i_{m-1})
\bigr|
\le
\gamma^{\ell-m}\|v_1-v_2\|_\infty.
\label{eq:extension-Lipschitz-all-k}
\end{equation}
Consequently,
\begin{equation}
\sup_{s,c,a}\bigl|\widetilde Q_{v_1}(s,c,a)-\widetilde Q_{v_2}(s,c,a)\bigr|
\le
\gamma^\ell\|v_1-v_2\|_\infty.
\label{eq:extension-Q-Lipschitz}
\end{equation}
\end{lemma}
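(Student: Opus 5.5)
The plan is a backward induction on the level $m$, starting at $m=\ell$ and descending to $m=1$, followed by one more step for the root scores. Fix arbitrary arrays $v_1,v_2:\mathcal D_N\to\mathbb R$ and set $\Delta_m:=\sup_{c,u,i_0,\ldots,i_{m-1}}|U^{v_1}_{m,c}-U^{v_2}_{m,c}|$.

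\emph{Base case.} At $m=\ell$ the terminal condition \eqref{eq:extension-terminal} gives $U^{v_i}_{\ell,c}=v_i$ for every window $c$. Hence $\Delta_\ell=\|v_1-v_2\|_\infty=\gamma^{0}\|v_1-v_2\|_\infty$.

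\emph{Inductive step.} Assume $\Delta_{m+1}\le\gamma^{\ell-m-1}\|v_1-v_2\|_\infty$ for some $m\in\{1,\ldots,\ell-1\}$. Fix $c,u,i_0,\ldots,i_{m-1}$. Compare the two bracketed expressions in \eqref{eq:extension-recursion} for the same action $a'$. The reward $r(u,a')$ cancels. The next state $\theta_m(u,a')$ is the same in both, since it depends only on $c$ and not on $v$. The remaining difference is $\frac{\gamma}{N}\sum_{i_m}\bigl(U^{v_1}_{m+1,c}-U^{v_2}_{m+1,c}\bigr)$ evaluated at identical arguments, and its absolute value is at most $\gamma\Delta_{m+1}$. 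Because $\max_{a'}$ is $1$-Lipschitz in the sup norm over actions, $|U^{v_1}_{m,c}-U^{v_2}_{m,c}|\le\gamma\Delta_{m+1}\le\gamma^{\ell-m}\|v_1-v_2\|_\infty$. The bound is uniform in $c$ and in all other arguments, so taking the supremum yields \eqref{eq:extension-Lipschitz-all-k}.

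\emph{Root scores.} Apply the same comparison to \eqref{eq:approx-Q-extension}. The term $r(s,a)$ cancels, the state $\theta_0(s,a)$ is common to both, and there is no maximum left to take. This gives $|\widetilde Q_{v_1}-\widetilde Q_{v_2}|\le\gamma\Delta_1\le\gamma\cdot\gamma^{\ell-1}\|v_1-v_2\|_\infty=\gamma^\ell\|v_1-v_2\|_\infty$, which is \eqref{eq:extension-Q-Lipschitz}.

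There is no real obstacle; the argument is routine. The only point that needs care is that the transition tables $\theta_m$ come from the fixed window $c$ and do not depend on $v$. This is what makes the arguments of $U^{v_1}_{m+1,c}$ and $U^{v_2}_{m+1,c}$ coincide, so the differences can be compared pointwise. When $\ell=1$ the recursion is empty and the result follows directly from the root step with $U^v_{1,c}=v$.
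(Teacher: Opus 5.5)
Your proof is correct and follows the paper's argument essentially verbatim: a backward induction on $m$ starting from the terminal condition, cancellation of the shared reward, the fact that $\theta_m(u,a')$ comes from the fixed window $c$ so the level-$(m+1)$ arguments coincide, the $1$-Lipschitz property of the maximum, and one final averaging step (with no maximum) for the root scores. Your separate remark on the case $\ell=1$ is a harmless addition that the paper leaves implicit.
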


\begin{proof}
Fix $c$; the recursion \eqref{eq:extension-recursion} never modifies it, and all bounds below are uniform in $c$. We prove \eqref{eq:extension-Lipschitz-all-k} by backward induction on $m$. For $m=\ell$, \eqref{eq:extension-terminal} gives $U^{v_1}_{\ell,c}-U^{v_2}_{\ell,c}=v_1-v_2$ on $\mathcal D_N$, so \eqref{eq:extension-Lipschitz-all-k} holds with equality. Assume it at level $m+1$ and fix $u,i_0,\ldots,i_{m-1}$. In \eqref{eq:extension-recursion}, the two braces associated with $v_1$ and $v_2$ share the term $r(u,a')$, so for every $a'$ their difference is
\begin{equation*}
\frac{\gamma}{N}\sum_{i_m=1}^{N}
\bigl[U^{v_1}_{m+1,c}-U^{v_2}_{m+1,c}\bigr]
\bigl(\theta_m(u,a'),i_0,\ldots,i_m\bigr),
\end{equation*}
whose absolute value is at most $\gamma\cdot\gamma^{\ell-m-1}\|v_1-v_2\|_\infty$ by the induction hypothesis. Since maximizing over $a'$ is $1$-Lipschitz, the same bound holds for $|U^{v_1}_{m,c}-U^{v_2}_{m,c}|(u,i_0,\ldots,i_{m-1})$. For \eqref{eq:extension-Q-Lipschitz}, the term $r(s,a)$ cancels in \eqref{eq:approx-Q-extension}, so $|\widetilde Q_{v_1}-\widetilde Q_{v_2}|(s,c,a)$ is $\gamma$ times an average of differences at level $1$, hence at most $\gamma\cdot\gamma^{\ell-1}\|v_1-v_2\|_\infty$.
\end{proof}

\begin{remark}[Deployment cost]
\label{rem:deployment-cost}
Proceeding level by level as in \eqref{eq:Ax}, computing $U^v_{m,c}$ from $U^v_{m+1,c}$ costs $O(|\mathcal S|N^{m+1})$ for the averages over $i_m$ and $O(|\mathcal S||\mathcal A|N^{m})$ for the maxima, and \eqref{eq:approx-Q-extension} costs $O(|\mathcal S|N+|\mathcal A|)$. The cost of computing all action scores for one queried pair $(s,c)$ is therefore
\begin{equation}
O\Bigl( \sum_{m=0}^{\ell-1} \bigl(|\mathcal{S}|N^{m+1}+|\mathcal{S}||\mathcal{A}|N^m\bigr) \Bigr)
= O\bigl(\ell|\mathcal S|N^\ell+\ell|\mathcal S||\mathcal A|N^{\ell-1}\bigr)
\subseteq O\bigl(\ell |\mathcal{S}|(|\mathcal{A}|+1)N^\ell\bigr).
\label{eq:extension-cost}
\end{equation}
The terminal array $v$, of size $|\mathcal S|N^\ell$, is already stored. During a query, the largest temporary array is $U^v_{\ell-1,c}$, with $|\mathcal S|N^{\ell-1}$ entries, and $U^v_{m+1,c}$ can be discarded once $U^v_{m,c}$ has been computed. The query thus needs only $O(|\mathcal S|N^{\ell-1})$ additional working memory.
\end{remark}

\paragraph{Conclusion.}
Run Algorithm~\ref{alg:rptas-preprocessing} with
\begin{equation}
N := \left\lceil \frac{8}{\varepsilon^2(1-\gamma)^4}  \Bigl( \log\frac{2}{\delta} + \bigl(1+(\ell-1)|\mathcal{S}||\mathcal{A}|\bigr)\log |\mathcal{S}| \Bigr) \right\rceil,
\qquad
K := \left\lceil \frac{1}{1-\gamma} \log\frac{4}{\varepsilon(1-\gamma)} \right\rceil.
\label{eq:rptas-parameters}
\end{equation}
It returns $D=(\Theta^1,\ldots,\Theta^N,V^K)$, and Algorithm~\ref{alg:rptas-query} defines the policy $\pi_D$, greedy with respect to $\widetilde Q_{V^K}$.

\emph{Model error.} Set
\begin{equation}
    \eta := \frac{\varepsilon(1-\gamma)^2V_{\max}}{4}=\frac{\varepsilon(1-\gamma)}{4}.
\label{eq:eta-choice}
\end{equation}
Since $V_{\max}^2/(2\eta^2)=8/(\varepsilon^2(1-\gamma)^4)$, the choice of $N$ in \eqref{eq:rptas-parameters} makes the right-hand side of \eqref{eq:uniform-Hoeffding-bound} at most $\delta$, so $\mathbb P(\mathcal E_\eta)\ge1-\delta$. On $\mathcal E_\eta$, Lemma~\ref{lem:Q-perturbation} gives
\begin{equation}
\sup_{s,c,a}
\bigl| \widehat Q^\star(s,c,a)-Q_\ell^\star(s,c,a) \bigr|
\leq \frac{\gamma\eta}{1-\gamma}
\leq \frac{\varepsilon(1-\gamma)V_{\max}}{4}.
\label{eq:final-model-error}
\end{equation}

\emph{Computation error.} By contraction of $\widehat{\mathcal T}_N$ and the choice of $K$,
\begin{equation*}
\|V^K-V^\star_N\|_\infty \leq \gamma^K V_{\max}\leq e^{-(1-\gamma)K}V_{\max}\leq \frac{\varepsilon(1-\gamma)V_{\max}}{4}.
\end{equation*}
Lemma~\ref{lem:extension-identity} identifies $\widetilde Q_{V^\star_N}$ with $\widehat Q^\star$, and Lemma~\ref{lem:extension-Lipschitz} then gives
\begin{align}
    \sup_{s,c,a} \bigl| \widetilde Q_{V^K}(s,c,a)-\widehat Q^\star(s,c,a) \bigr|
&= \sup_{s,c,a} \bigl| \widetilde Q_{V^K}(s,c,a)-\widetilde Q_{V^\star_N}(s,c,a) \bigr|
\\&\leq \gamma^\ell\|V^K-V^\star_N\|_\infty
\\& \leq \frac{\varepsilon(1-\gamma)V_{\max}}{4}.
\label{eq:final-computation-error}
\end{align}
This bound is deterministic. Combining \eqref{eq:final-model-error} and \eqref{eq:final-computation-error}, on $\mathcal E_\eta$,
\begin{equation}
\sup_{s,c,a} \bigl| \widetilde Q_{V^K}(s,c,a)-Q_\ell^\star(s,c,a) \bigr| \leq \frac{\varepsilon(1-\gamma)V_{\max}}{2}.
\label{eq:final-score-error}
\end{equation}

\emph{From scores to values.} Fix $(s,c)$, let $a^\star$ maximize $Q_\ell^\star(s,c,\cdot)$, and recall that $\pi_D(s,c)$ maximizes $\widetilde Q_{V^K}(s,c,\cdot)$. By \eqref{eq:final-score-error},
\begin{align*}
V_\ell^\star(s,c) = Q_\ell^\star(s,c,a^\star)
&\leq \widetilde Q_{V^K}(s,c,a^\star) + \frac{\varepsilon(1-\gamma)V_{\max}}{2}
\\& \leq \widetilde Q_{V^K}(s,c,\pi_D(s,c)) + \frac{\varepsilon(1-\gamma)V_{\max}}{2}\\
&\leq Q_\ell^\star(s,c,\pi_D(s,c)) + \varepsilon(1-\gamma)V_{\max}.
\end{align*}
Let $\mathcal T_\ell^{\pi_D}$ denote the Bellman evaluation operator of $\pi_D$ in the true augmented MDP. Since $Q^\star_\ell(s,c,\pi_D(s,c))=(\mathcal T_\ell^{\pi_D}V^\star_\ell)(s,c)$, the preceding inequality reads
$V_\ell^\star\leq\mathcal T_\ell^{\pi_D}V_\ell^\star+\varepsilon(1-\gamma)V_{\max}\mathbf 1$.
As $V_\ell^{\pi_D} =\mathcal T_\ell^{\pi_D}V_\ell^{\pi_D}$ and $\mathcal T_\ell^{\pi_D}$ is a monotone $\gamma$-contraction,
\begin{equation*}
\|V_\ell^\star-V_\ell^{\pi_D}\|_\infty
\leq
\gamma\|V_\ell^\star-V_\ell^{\pi_D}\|_\infty + \varepsilon(1-\gamma)V_{\max},
\qquad\text{hence}\qquad
\|V_\ell^\star-V_\ell^{\pi_D}\|_\infty\leq\varepsilon V_{\max}.
\end{equation*}
This proves \eqref{eq:main-policy-guarantee} simultaneously for every $(s,c)\in\mathcal S\times\Omega^\ell$, on the event $\mathcal E_\eta$ of probability at least $1-\delta$.

\emph{Computational guarantees.} Sampling the dictionary costs $O(N|\mathcal{S}||\mathcal{A}|)$ and, by Remark~\ref{rem:sweep-cost}, the $K$ value-iteration steps cost $O(K|\mathcal{S}|(|\mathcal{A}|+1)N^\ell)$, so preprocessing costs
$O\bigl(N|\mathcal{S}||\mathcal{A}|+K|\mathcal{S}|(|\mathcal{A}|+1)N^\ell\bigr)$.
The representation $D=(\Theta^1,\ldots,\Theta^N,V^K)$ occupies $O(N|\mathcal{S}||\mathcal{A}|+|\mathcal{S}|N^\ell)$ memory locations. By Remark~\ref{rem:deployment-cost}, selecting an action at a queried pair $(s,c)$ costs $O(\ell|\mathcal{S}|(|\mathcal{A}|+1)N^\ell)$ time and $O(|\mathcal S|N^{\ell-1})$ working memory, the observed window itself occupying $O(\ell|\mathcal S||\mathcal A|)$; the total memory during online action selection is therefore
$O\bigl(N|\mathcal{S}||\mathcal{A}|+|\mathcal{S}|N^\ell+\ell |\mathcal{S}||\mathcal{A}|\bigr)$.
Finally, by \eqref{eq:rptas-parameters},
\begin{equation*}
N
= O\left( \frac{ \log(1/\delta) + \bigl(1+(\ell-1)|\mathcal{S}||\mathcal{A}|\bigr)\log |\mathcal{S}| }{ \varepsilon^2(1-\gamma)^4 } \right), \qquad K = O\left( \frac{ \log\bigl(1/[\varepsilon(1-\gamma)]\bigr) }{ 1-\gamma } \right),
\end{equation*}
so for every fixed $\ell$, preprocessing time, representation size, and per-decision computation are polynomial in $|\mathcal{S}|$, $|\mathcal{A}|$, $\varepsilon^{-1}$, $\log(1/\delta)$, and $(1-\gamma)^{-1}$. This completes the proof. \hfill$\square$

\subsection{Proof of theorem \ref{thm:learning}}
\label{proof:regret}

Throughout this section, $M$ denotes the number of observed tables used at a planner update, $\Theta_0,\ldots,\Theta_{M-1}$ these tables, and $\widehat P_M$, $\widehat L_M=\bigotimes_{(s,a)}\widehat P_M(\cdot\mid s,a)$ the empirical model of Section~\ref{sec:regret-minimization}, equation~\eqref{eq:online-product-model}.
Note that $\widehat L_M$ is a \emph{product} law, not the empirical distribution of the dictionary used in Appendix~\ref{proof:main-RPTAS}. We write $V_{\max}:=(1-\gamma)^{-1}$ and use the shorthand $\theta_{k:k'}=(\theta_k,\ldots,\theta_{k'})$.

Let $\widehat V^\star_M$ and $\widehat Q^\star_M$ denote the optimal value and action-value functions of the augmented MDP on $\mathcal S\times\Omega^\ell$ when each incoming table has law $\widehat L_M$. At an update, RPTAS-\textsc{offline} draws
\begin{equation*}
Z^1,\ldots,Z^{N_M}\overset{\text{i.i.d.}}{\sim}\widehat L_M,
\qquad
N_M:=\bigl\lceil MV_{\max}^3\bigr\rceil,
\end{equation*}
runs $K_M:=\lceil V_{\max}\log(V_{\max}M^2)\rceil$ value-iteration steps from zero on $\mathcal S\times[N_M]^\ell$, and returns $D_M=(Z^1,\ldots,Z^{N_M},v^{K_M})$. Given a queried pair $(s,c)$, RPTAS-\textsc{online} returns the action scores $\widetilde Q_M(s,c,\cdot)$ obtained from \eqref{eq:extension-recursion}--\eqref{eq:approx-Q-extension} with dictionary $Z^1,\ldots,Z^{N_M}$ and terminal array $v^{K_M}$. The deployed policy $\pi_M$ is greedy with respect to $\widetilde Q_M$ with a fixed tie-breaking rule; it is a deterministic stationary policy on the augmented state space. At time $0$, the algorithm executes a fixed action $A_0$; we let $\pi_0$ denote the constant policy that always plays $A_0$.

For $t\geq1$, let $M_t:=2^{\lfloor\log_2 t\rfloor}$ be the number of observed tables at the most recent update before time $t$, and set $W_t:=V_\ell^{\pi_{M_t}}$ for $t\geq1$, $W_0:=V_\ell^{\pi_0}$. Thus $W_t$ is the true infinite-horizon value of the policy deployed at time $t$, and is constant as a function between two updates. The epochs are $\{0\}$ and the nonempty sets $\{M,\ldots,\min(2M-1,T-1)\}$ for dyadic $M<T$; their number is at most
\begin{equation*}
J:=2+\lceil\log_2T\rceil.
\end{equation*}
We abbreviate the two logarithmic factors that appear below as
\begin{equation}
\Lambda:=\log\frac{16J|\mathcal S||\Omega|^{\ell-1}}{\delta},
\qquad
\Lambda_0:=|\mathcal S||\mathcal A|(|\mathcal S|-1)\log 2+\log\frac{16J}{\delta},
\label{eq:lazy-log-factors}
\end{equation}
and define the burn-in size
\begin{equation}
M_0:=\bigl\lceil 16V_{\max}^2\Lambda_0\bigr\rceil.
\label{eq:lazy-M0}
\end{equation}
Since $|\Omega|=|\mathcal S|^{|\mathcal S||\mathcal A|}$, we have $\Lambda=\log(16J/\delta)+\bigl(1+(\ell-1)|\mathcal S||\mathcal A|\bigr)\log|\mathcal S|$.

Finally, for a deterministic stationary policy $\pi$ and an augmented state $x$, we write $\mathbb E^\pi_x$ for the expectation along the auxiliary augmented chain $(\bar X_h)_{h\geq0}$ with $\bar X_0=x$, actions chosen by $\pi$, and each incoming table drawn independently from the true law $L$. We write $\bar X_h=(\bar S_h,\bar\Theta_{h:h+\ell-1})$, so that the tables $\bar\Theta_{0:\ell-1}$ are those of $x$ and $\bar\Theta_{h+\ell}\sim L$ is the table appended at step $h$. This chain is a device for the analysis and is distinct from the trajectory $(X_t)_{t\geq0}$ generated by the algorithm.

Adding and subtracting $(1-\gamma)W_t(X_t)$ in the definition of regret gives
\begin{align}
\operatorname{Reg}(T)
&=
\sum_{t=0}^{T-1}
\bigl[(1-\gamma)V_\ell^\star(X_t)-r_t\bigr]
\nonumber\\
&=
\underbrace{
(1-\gamma)\sum_{t=0}^{T-1}
\bigl[V_\ell^\star(X_t)-W_t(X_t)\bigr]
}_{\text{policy loss}}
+
\underbrace{
\sum_{t=0}^{T-1}
\bigl[(1-\gamma)W_t(X_t)-r_t\bigr]
}_{\text{value-to-reward error}}.
\label{eq:lazy-final-decomposition}
\end{align}
We bound the two terms in turn. The policy loss is controlled by Lemmas~\ref{lem:lazy-product-bernstein}--\ref{lem:lazy-dictionary}, the value-to-reward error by Lemma~\ref{lem:lazy-epoch-martingale}.

Fix an update based on $M$ observed tables. Adding and subtracting $\widehat V^\star_M$ gives
\begin{equation}
V_\ell^\star-V_\ell^{\pi_M} = (V_\ell^\star-\widehat V^\star_M) + (\widehat V^\star_M-V_\ell^{\pi_M}).
\label{eq:lazy-policy-value-split}
\end{equation}
For $x=(s,\theta_{0:\ell-1})$, the Bellman equations give
\begin{align*}
\widehat Q^\star_M(x,\pi_M(x)) &= r(s,\pi_M(x)) +\gamma\,\mathbb E_{\Theta\sim\widehat L_M} \bigl[ \widehat V^\star_M \bigl(\theta_0(s,\pi_M(x)),\theta_{1:\ell-1},\Theta\bigr) \bigr],\\
V_\ell^{\pi_M}(x)
&= r(s,\pi_M(x)) +\gamma\,\mathbb E_{\Theta\sim L} \bigl[ V_\ell^{\pi_M} \bigl(\theta_0(s,\pi_M(x)),\theta_{1:\ell-1},\Theta\bigr) \bigr].
\end{align*}
Subtracting the second equality from the first, adding $\widehat V^\star_M(x)-\widehat Q^\star_M(x,\pi_M(x))$ to both sides, adding and subtracting $\gamma\,\mathbb E_{\Theta\sim L}[\widehat V^\star_M(\cdot)]$, and writing $\widehat V^\star_M=V_\ell^\star+(\widehat V^\star_M-V_\ell^\star)$ in the difference of expectations yields
\begin{align}
\widehat V^\star_M(x)-V_\ell^{\pi_M}(x)
&=
\widehat V^\star_M(x)-\widehat Q^\star_M(x,\pi_M(x))
\nonumber\\
&\quad+
\gamma\,\mathbb E_{\Theta\sim L}
\bigl[
(\widehat V^\star_M-V_\ell^{\pi_M})
\bigl(\theta_0(s,\pi_M(x)),\theta_{1:\ell-1},\Theta\bigr)
\bigr]
\nonumber\\
&\quad+
\gamma
\bigl(
\mathbb E_{\Theta\sim\widehat L_M}
-\mathbb E_{\Theta\sim L}
\bigr)
\bigl[
V_\ell^\star
\bigl(\theta_0(s,\pi_M(x)),\theta_{1:\ell-1},\Theta\bigr)
\bigr]
\nonumber\\
&\quad+
\gamma
\bigl(
\mathbb E_{\Theta\sim\widehat L_M}
-\mathbb E_{\Theta\sim L}
\bigr)
\bigl[
(\widehat V^\star_M-V_\ell^\star)
\bigl(\theta_0(s,\pi_M(x)),\theta_{1:\ell-1},\Theta\bigr)
\bigr].
\label{eq:lazy-policy-one-step}
\end{align}
Fix the observed tables and the planner's randomization, so that $\widehat L_M$, $\widehat V^\star_M$ and $\pi_M$ are fixed. Repeatedly substituting \eqref{eq:lazy-policy-one-step} into its own continuation term along the auxiliary chain $(\bar X_h)$ under $\mathbb E^{\pi_M}_x$, and applying the tower property, gives for every $H\geq1$,
\begin{align}
\widehat V^\star_M(x)-V_\ell^{\pi_M}(x)
&= \sum_{h=0}^{H-1}\gamma^h\, \mathbb E^{\pi_M}_x\bigl[ \widehat V^\star_M(\bar X_h) -\widehat Q^\star_M(\bar X_h,\pi_M(\bar X_h)) \bigr] \nonumber\\
&\quad+ \sum_{h=0}^{H-1}\gamma^{h+1}\, \mathbb E^{\pi_M}_x\Bigl[ \bigl( \mathbb E_{\Theta\sim\widehat L_M} -\mathbb E_{\Theta\sim L} \bigr) \bigl[ V_\ell^\star\bigl( \bar\Theta_h(\bar S_h,\pi_M(\bar X_h)), \bar\Theta_{h+1:h+\ell-1},\Theta \bigr) \bigr] \Bigr] \nonumber\\
&\quad+ \sum_{h=0}^{H-1}\gamma^{h+1}\, \mathbb E^{\pi_M}_x\Bigl[ \bigl( \mathbb E_{\Theta\sim\widehat L_M} -\mathbb E_{\Theta\sim L} \bigr) \bigl[ (\widehat V^\star_M-V_\ell^\star)\bigl( \bar\Theta_h(\bar S_h,\pi_M(\bar X_h)), \bar\Theta_{h+1:h+\ell-1},\Theta \bigr) \bigr] \Bigr] \nonumber\\
&\quad+ \gamma^H\,\mathbb E^{\pi_M}_x\bigl[ (\widehat V^\star_M-V_\ell^{\pi_M})(\bar X_H) \bigr].
\label{eq:lazy-policy-unrolled}
\end{align}
The last term is bounded by $\gamma^HV_{\max}$ in absolute value and vanishes as $H\to\infty$. For the first sum, $\pi_M$ is greedy with respect to $\widetilde Q_M$, so at every augmented state (see the proof of \eqref{eq:lazy-deployed-policy-loss} below for the two-line argument)
\begin{equation}
0\leq
\widehat V^\star_M(x)-\widehat Q^\star_M(x,\pi_M(x))
\leq
2\|\widetilde Q_M-\widehat Q^\star_M\|_\infty.
\label{eq:lazy-greedy-residual}
\end{equation}
For the third sum, at every augmented state,
\begin{equation*}
\Bigl| \bigl( \mathbb E_{\Theta\sim\widehat L_M} -\mathbb E_{\Theta\sim L} \bigr) \bigl[ (\widehat V^\star_M-V_\ell^\star) \bigl(\theta_0(s,\pi_M(x)),\theta_{1:\ell-1},\Theta\bigr) \bigr] \Bigr|
\leq \|\widehat L_M-L\|_1 \|\widehat V^\star_M-V_\ell^\star\|_\infty.
\end{equation*}
Combining these bounds with \eqref{eq:lazy-policy-value-split}, $\sum_{h\geq0}\gamma^h=V_{\max}$, and \eqref{eq:lazy-policy-unrolled} as $H\to\infty$ yields
\begin{align}
\|V_\ell^\star-V_\ell^{\pi_M}\|_\infty
&\leq
\underbrace{ \|V_\ell^\star-\widehat V^\star_M\|_\infty }_{\text{empirical value error}}
\nonumber\\
&\quad+
\underbrace{ \sup_x\Bigl| \sum_{h=0}^{\infty}\gamma^{h+1}\, \mathbb E^{\pi_M}_x\Bigl[ \bigl( \mathbb E_{\Theta\sim\widehat L_M} -\mathbb E_{\Theta\sim L} \bigr) \bigl[ V_\ell^\star\bigl( \bar\Theta_h(\bar S_h,\pi_M(\bar X_h)), \bar\Theta_{h+1:h+\ell-1},\Theta \bigr) \bigr] \Bigr] \Bigr| }_{\text{fixed-function transition error}}
\nonumber\\
&\quad+
\underbrace{ \gamma V_{\max}\|\widehat L_M-L\|_1 \|\widehat V^\star_M-V_\ell^\star\|_\infty }_{\text{data-dependent transition error}}
+
\underbrace{ 2V_{\max} \|\widetilde Q_M-\widehat Q^\star_M\|_\infty }_{\text{finite-planner error}}.
\label{eq:lazy-policy-loss-components}
\end{align}

We first control the fixed-function transition error. The difficulty is that $\widehat L_M$ is a product of empirical row distributions, not the empirical distribution of $M$ complete tables. The following lemma shows that, for a fixed function, its error nevertheless obeys a Bernstein inequality with the variance under $L$.

\begin{lemma}[Product-estimator Bernstein inequality]
\label{lem:lazy-product-bernstein}
For any deterministic $f:\Omega\to[0,V_{\max}]$, any $M\geq1$, and any $w>0$,
\begin{equation}
 \mathbb{P}\left(
 \bigl|\mathbb E_{\widehat L_M}[f]-\mathbb E_L[f]\bigr|>
 \sqrt{\frac{2\operatorname{Var}_L(f)\,w}{M}}+\frac{2V_{\max}w}{3M}
 \right)\leq2e^{-w}.
 \label{eq:lazy-product-bernstein}
\end{equation}
\end{lemma}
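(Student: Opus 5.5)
The plan is to realize $\mathbb E_{\widehat L_M}[f]$ as a conditional expectation of an empirical mean of $M$ i.i.d.\ draws from $L$, and then apply the classical Bernstein inequality together with Jensen. The sketch of Theorem~\ref{thm:dolar-regret} already points to this as "independently rematching the samples within each row."

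\textbf{Construction.} For each pair $(s,a)$, draw an independent uniform permutation $\sigma_{s,a}$ of $\{0,\dots,M-1\}$, independent of the tables. Define rematched tables
\begin{equation*}
\widetilde\Theta_j(s,a):=\Theta_{\sigma_{s,a}(j)}(s,a),\qquad j=0,\dots,M-1.
\end{equation*}
Conditionally on the observed tables, a single $\widetilde\Theta_j$ has coordinates that are independent uniform picks from the $M$ entries in each row. Hence its conditional law is exactly $\widehat L_M$, and
\begin{equation*}
\mathbb E_{\widehat L_M}[f]=\mathbb E\Bigl[\tfrac1M\textstyle\sum_j f(\widetilde\Theta_j)\,\Big|\,\Theta_{0:M-1}\Bigr].
\end{equation*}
Unconditionally, the rows $(\Theta_{\sigma_{s,a}(j)}(s,a))_j$ are i.i.d.\ $P(\cdot\mid s,a)$, because permuting an i.i.d.\ sequence preserves its law. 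Rows are independent across $(s,a)$, so $\widetilde\Theta_0,\dots,\widetilde\Theta_{M-1}$ are i.i.d.\ $\sim L$.

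\textbf{Deviation bound.} Set $Z:=\frac1M\sum_j f(\widetilde\Theta_j)-\mathbb E_L f$. This is a centered i.i.d.\ average with summands in $[0,V_{\max}]$ and variance $\operatorname{Var}_L(f)$. It therefore satisfies the Bernstein moment-generating-function bound
\begin{equation*}
\mathbb E e^{\lambda Z}\le \exp\Bigl(\frac{\lambda^2\operatorname{Var}_L(f)/M}{2(1-\lambda V_{\max}/(3M))}\Bigr),\qquad 0<\lambda<3M/V_{\max}.
\end{equation*}
The deviation we need to control is $\mathbb E_{\widehat L_M}[f]-\mathbb E_L f=\mathbb E[Z\mid\Theta_{0:M-1}]$. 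By conditional Jensen, $e^{\lambda\mathbb E[Z\mid\cdot]}\le \mathbb E[e^{\lambda Z}\mid\cdot]$, so the same MGF bound holds for $\mathbb E[Z\mid\Theta_{0:M-1}]$. The symmetric statement for $-Z$ holds in the same way.

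Chernoff optimization then gives the one-sided tail
\begin{equation*}
\mathbb P\Bigl(\mathbb E[Z\mid\Theta_{0:M-1}]>\sqrt{2\operatorname{Var}_L(f)w/M}+V_{\max}w/(3M)\Bigr)\le e^{-w}.
\end{equation*}
The constant $V_{\max}/(3M)$ is sharper than the stated $2V_{\max}/(3M)$, so the claimed tail follows a fortiori; in fact the centering $f-\mathbb E_Lf\in[-V_{\max},V_{\max}]$ still yields the range constant with room to spare. A union over both signs gives the factor $2e^{-w}$.

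\textbf{Main obstacle.} The delicate step is verifying that the rematched tables are genuinely i.i.d.\ from $L$ unconditionally, while their conditional one-sample law is exactly the product $\widehat L_M$. This needs care on two points. First, the permutations must be independent across rows. Second, the conditional law of a single index $j$ under a uniform permutation is uniform over the $M$ entries, with independence across rows coming from the independence of the $\sigma_{s,a}$. Once this coupling is established, everything else is the standard MGF argument plus Jensen, which is exactly what makes the variance under $L$ (rather than under $\widehat L_M$) appear.
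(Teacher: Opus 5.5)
Your proof is correct and follows essentially the same route as the paper: independent per-row uniform permutations yield rematched tables whose conditional one-sample law is $\widehat L_M$ and which are i.i.d.\ with law $L$, after which conditional Jensen transfers the Bernstein moment-generating-function bound to $\mathbb E_{\widehat L_M}[f]-\mathbb E_L[f]$, and Chernoff plus a union over both signs gives the factor $2e^{-w}$. Your remark that the linear term can be sharpened to $V_{\max}w/(3M)$ is also correct, because $f-\mathbb E_L[f]\leq V_{\max}$ and $\mathbb E_L[f]-f\leq V_{\max}$; the stated constant $2V_{\max}w/(3M)$ therefore holds with room to spare.
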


\begin{proof}
For each $(s,a)$, draw a uniform random permutation $\sigma_{s,a}$ of $\{0,\ldots,M-1\}$, independently across $(s,a)$ and independently of the observed tables; write $\mathbb E_\sigma$ for the expectation over these permutations. For $j\in\{0,\ldots,M-1\}$, form the table
\begin{equation*}
\Theta^{(j)}:=\bigl(\Theta_{\sigma_{s,a}(j)}(s,a)\bigr)_{(s,a)\in\mathcal S\times\mathcal A}\in\Omega.
\end{equation*}
Conditional on $\Theta_0,\ldots,\Theta_{M-1}$, for fixed $j$ the entries of $\Theta^{(j)}$ are independent across $(s,a)$, each with distribution $\widehat P_M(\cdot\mid s,a)$; hence $\Theta^{(j)}$ has conditional law $\widehat L_M$, and averaging over $j$ gives
\begin{equation}
    \mathbb E_\sigma\!\left[ \frac1M\sum_{j=0}^{M-1}  f\bigl(\Theta^{(j)}\bigr) \,\middle|\,   \Theta_0,\ldots,\Theta_{M-1} \right] = \mathbb E_{\widehat L_M}[f].
\label{eq:lazy-permutation-identity}
\end{equation}
Now fix the permutations. For each $(s,a)$ the indices $\sigma_{s,a}(0),\ldots,\sigma_{s,a}(M-1)$ are distinct, so the tables $\Theta^{(0)},\ldots,\Theta^{(M-1)}$ use distinct observations from every row. Independence across dates and rows implies that, for fixed permutations, $\Theta^{(0)},\ldots,\Theta^{(M-1)}$ are i.i.d.\ with law $L$. Conditional Jensen's inequality applied to \eqref{eq:lazy-permutation-identity} gives, for every $\lambda\in\mathbb R$,
\begin{align*}
\mathbb E\exp\!\bigl( \lambda\bigl[ \mathbb E_{\widehat L_M}[f]-\mathbb E_L[f] \bigr] \bigr)
&\leq
\mathbb E\,\mathbb E_\sigma \exp\!\left(  \frac{\lambda}{M}\sum_{j=0}^{M-1} \bigl[f\bigl(\Theta^{(j)}\bigr) -\mathbb E_L[f]\bigr] \right)
\\&=
\mathbb E\exp\!\left(  \frac{\lambda}{M} \sum_{j=0}^{M-1}  \bigl[f(\Theta_j)-\mathbb E_L[f]\bigr] \right).
\end{align*}
Thus the moment-generating function of the product-estimator error is bounded by that of the average of $M$ i.i.d.\ copies of $f(\Theta)-\mathbb E_L[f]$, which take values in an interval of length $V_{\max}$. Bernstein's inequality applied to the positive and negative tails proves \eqref{eq:lazy-product-bernstein}.
\end{proof}

\begin{lemma}[Model confidence bounds]
\label{lem:lazy-confidence}
There is an event $\mathcal E_{\mathrm{model}}$ of probability at least $1-3\delta/16$ on which the following three inequalities hold simultaneously for every dyadic update $M<T$.
\begin{enumerate}
\item For every $s\in\mathcal S$, $a\in\mathcal A$, and $\theta_{0:\ell-1}\in\Omega^\ell$,
\begin{align}
&\Bigl| \mathbb E_{\Theta\sim\widehat L_M} \bigl[ V_\ell^\star\bigl( \theta_0(s,a),\theta_{1:\ell-1},\Theta \bigr) \bigr] - \mathbb E_{\Theta\sim L} \bigl[ V_\ell^\star\bigl( \theta_0(s,a),\theta_{1:\ell-1},\Theta \bigr) \bigr] \Bigr|
\nonumber\\
&\qquad\leq \sqrt{ \frac{ 2\Lambda\operatorname{Var}_{\Theta\sim L} \bigl( V_\ell^\star( \theta_0(s,a),\theta_{1:\ell-1},\Theta ) \bigr) }{M} } + \frac{2V_{\max}\Lambda}{3M}.
\label{eq:lazy-fixed-confidence}
\end{align}
\item The laws of transition tables satisfy
\begin{equation}
\|\widehat L_M-L\|_1 \leq 2\sqrt{\frac{\Lambda_0}{M}}.
\label{eq:lazy-l1-confidence}
\end{equation}
\item Every bounded $g:\mathcal S\times\Omega^\ell\to\mathbb R$, including one chosen after observing the tables, satisfies
\begin{equation}
\sup_{s\in\mathcal S,\ a\in\mathcal A,\ \theta_{0:\ell-1}\in\Omega^\ell}
\Bigl|
\bigl(\mathbb E_{\Theta\sim\widehat L_M}-\mathbb E_{\Theta\sim L}\bigr)
\bigl[
g\bigl(\theta_0(s,a),\theta_{1:\ell-1},\Theta\bigr)
\bigr]
\Bigr|
\leq
2\sqrt{\frac{\Lambda_0}{M}}\,\|g\|_\infty.
\label{eq:lazy-uniform-confidence}
\end{equation}
\end{enumerate}
\end{lemma}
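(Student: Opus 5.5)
I will build $\mathcal E_{\mathrm{model}}$ as the intersection of three events, one per item, each with failure probability at most $\delta/16$, and union bound over the at most $J$ dyadic updates. Within each update $M$ the tables $\Theta_0,\ldots,\Theta_{M-1}$ are i.i.d.\ with law $L$, because the update times are deterministic. So for fixed $M$ every statement is about a fixed-size i.i.d.\ sample.

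\emph{Item 1.} The functions $\Theta\mapsto V_\ell^\star(\theta_0(s,a),\theta_{1:\ell-1},\Theta)$ depend only on the pair $(u,\theta_{1:\ell-1})$ with $u=\theta_0(s,a)$. They are deterministic, take values in $[0,V_{\max}]$, and there are at most $|\mathcal S||\Omega|^{\ell-1}$ of them. I will apply Lemma~\ref{lem:lazy-product-bernstein} to each with $w=\Lambda$, so each fails with probability at most $2e^{-\Lambda}=\delta/(8J|\mathcal S||\Omega|^{\ell-1})$. A union bound over the class and over the $J$ updates gives total failure probability at most $\delta/8$; the constant $16$ inside $\Lambda$ leaves room for this. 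The bound is then uniform over all $(s,a,\theta_{0:\ell-1})$, since these only enter through $(u,\theta_{1:\ell-1})$.

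\emph{Item 2.} I first reduce to row-wise bounds. For product measures, $\|\widehat L_M-L\|_1\le\sum_{s,a}\|\widehat P_M(\cdot\mid s,a)-P(\cdot\mid s,a)\|_1$, by telescoping the product one coordinate at a time. Summing per-row bounds this way would, however, cost a factor $|\mathcal S||\mathcal A|$ rather than the square root of it in $\Lambda_0$. Instead I will use the KL route suggested by the proof sketch: since KL divergence tensorizes over independent coordinates, $\mathrm{KL}(\widehat L_M\|L)=\sum_{s,a}\mathrm{KL}(\widehat P_M(\cdot\mid s,a)\|P(\cdot\mid s,a))$. I then need a joint deviation bound for the sum of empirical KLs of $|\mathcal S||\mathcal A|$ independent multinomials, each with $|\mathcal S|$ categories. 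The method-of-types bound gives $\mathbb P(\sum_{s,a} \mathrm{KL}>\epsilon)\le (\text{number of joint types})\,e^{-M\epsilon}$. Alternatively, a finite-class argument over events gives a count of $2^{|\mathcal S||\mathcal A|(|\mathcal S|-1)}$, which is exactly the $\log 2$ term in $\Lambda_0$. Taking $\epsilon=\Lambda_0/M$ with the remaining $\log(16J/\delta)$ budget and applying Pinsker, $\|\cdot\|_1\le\sqrt{2\mathrm{KL}}$, yields $\|\widehat L_M-L\|_1\le\sqrt{2\Lambda_0/M}\le 2\sqrt{\Lambda_0/M}$. A union bound over the $J$ updates finishes the item.

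\emph{Item 3.} This follows deterministically from item 2 on the same event. For any fixed arguments, the difference of expectations of a bounded function under two laws is at most $\|\widehat L_M-L\|_1\|g\|_\infty$, and this holds for any $g$, including data-dependent ones, because no probability is involved.

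The main obstacle is item 2: getting the combinatorial count right so that the union or types bound produces exactly $\Lambda_0$ with the $(|\mathcal S|-1)\log 2$ per row. I would first try the cleanest route, a dual formulation $\|\widehat L_M-L\|_1=2\sup_A(\widehat L_M(A)-L(A))$ combined with the per-row $\ell_1$ bound $\|\widehat P-P\|_1\le\sqrt{2(|\mathcal S|\log2+w)/M}$ (Weissman et al.). This still requires the tensorized KL argument to avoid losing a $|\mathcal S||\mathcal A|$ factor, and I expect the proof to rely on a Chernoff bound for the summed empirical KL.
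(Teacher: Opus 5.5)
Your items 1 and 3 match the paper's proof. Item 1 applies Lemma~\ref{lem:lazy-product-bernstein} with $w=\Lambda$ and a union bound over the at most $|\mathcal S||\Omega|^{\ell-1}$ fixed functions and the $J$ updates, for failure probability $\delta/8$. Item 3 is a deterministic consequence of item 2. Your opening sentence allots $\delta/16$ to each event, but item 1 costs $\delta/8$; the total $\delta/8+\delta/16=3\delta/16$ is still correct.

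The gap is item 2. You flag it as the obstacle but do not close it, and the tail bound your computation relies on is not valid. Setting $\epsilon=\Lambda_0/M$ and concluding $\|\widehat L_M-L\|_1\le\sqrt{2\Lambda_0/M}$ requires $\mathbb P\bigl(\operatorname{KL}(\widehat L_M\Vert L)>\epsilon\bigr)\le 2^{d}e^{-M\epsilon}$, where $d:=|\mathcal S||\mathcal A|(|\mathcal S|-1)$. The method of types does not give this. The number of joint types is $\prod_{s,a}\binom{M+|\mathcal S|-1}{|\mathcal S|-1}$, of order $M^{d}$, so it would replace the $M$-independent term $d\log 2$ in $\Lambda_0$ by roughly $d\log M$. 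No counting argument can give it either, because the inequality is false in general. For full-support rows, $2M\operatorname{KL}(\widehat L_M\Vert L)$ converges in law to $\chi^2_d$ as $M\to\infty$. Its tail satisfies $\mathbb P(\chi^2_d>2t)\sim t^{d/2-1}e^{-t}/\Gamma(d/2)$, which eventually exceeds $2^de^{-t}$ whenever $d\geq 3$, that is, for small $\delta$.

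The missing ingredient is a moment-generating-function bound for the empirical KL at parameter $M/2$. The paper takes it from \citet[Theorem~I.3]{Agrawal_2020}: for every row, $\mathbb E\exp\bigl[\tfrac M2\operatorname{KL}(\widehat P_M(\cdot\mid s,a)\Vert P(\cdot\mid s,a))\bigr]\le 2^{|\mathcal S|-1}$. The proof then runs as follows.
\begin{enumerate}
\item Independence of the rows and additivity of KL over products give $\mathbb E\exp\bigl[\tfrac M2\operatorname{KL}(\widehat L_M\Vert L)\bigr]\le 2^{d}$. This is the ``Chernoff bound for the summed empirical KL'' you anticipated.
\item Markov's inequality gives $\mathbb P\bigl(\operatorname{KL}(\widehat L_M\Vert L)>2\Lambda_0/M\bigr)\le 2^de^{-\Lambda_0}=\delta/(16J)$.
\item A union bound over the $J$ updates gives failure probability $\delta/16$.
\item Pinsker's inequality yields $\|\widehat L_M-L\|_1\le\sqrt{2\cdot 2\Lambda_0/M}=2\sqrt{\Lambda_0/M}$.
\end{enumerate}
The factor $\tfrac12$ in the exponent uses up exactly the slack between your $\sqrt{2\Lambda_0/M}$ and the stated $2\sqrt{\Lambda_0/M}$. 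The KL threshold must therefore be $2\Lambda_0/M$, not $\Lambda_0/M$.
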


\begin{proof}
\emph{(i).} For fixed $u\in\mathcal S$ and $\theta_{1:\ell-1}\in\Omega^{\ell-1}$, the function $\Theta\mapsto V_\ell^\star(u,\theta_{1:\ell-1},\Theta)$ is deterministic and does not depend on the observed tables. Apply Lemma~\ref{lem:lazy-product-bernstein} to it with $w=\Lambda$. There are at most $|\mathcal S||\Omega|^{\ell-1}$ such functions and at most $J$ update sizes, so a union bound gives failure probability at most $2e^{-\Lambda}\,|\mathcal S||\Omega|^{\ell-1}J=\delta/8$. Taking $u=\theta_0(s,a)$ yields \eqref{eq:lazy-fixed-confidence}.

\emph{(ii).} By \citet[Theorem~I.3]{Agrawal_2020}, applied with exponential parameter $M/2$, each empirical transition row satisfies
\begin{equation*}
\mathbb E\exp\!\Bigl[
\tfrac M2\operatorname{KL}\bigl(
\widehat P_M(\cdot\mid s,a)\,\big\Vert\, P(\cdot\mid s,a)
\bigr)
\Bigr]
\leq 2^{|\mathcal S|-1}.
\end{equation*}
The row estimates are independent and the KL divergence is additive over product distributions, hence
\begin{equation*}
\mathbb E\exp\!\Bigl[
\tfrac M2\operatorname{KL}(\widehat L_M\Vert L)
\Bigr]
\leq 2^{|\mathcal S||\mathcal A|(|\mathcal S|-1)}.
\end{equation*}
Markov's inequality gives
\begin{equation*}
\mathbb P\Bigl(
\operatorname{KL}(\widehat L_M\Vert L)
>
\tfrac{2\Lambda_0}{M}
\Bigr)
\leq
2^{|\mathcal S||\mathcal A|(|\mathcal S|-1)}e^{-\Lambda_0}
=\frac{\delta}{16J}.
\end{equation*}
A union bound over the at most $J$ updates shows that this KL bound holds at every update with probability at least $1-\delta/16$, and Pinsker's inequality then yields $\|\widehat L_M-L\|_1\leq\sqrt{2\operatorname{KL}(\widehat L_M\Vert L)}\leq2\sqrt{\Lambda_0/M}$.

\emph{(iii).} Fix the observed tables, a bounded $g$, and $(s,a,\theta_{0:\ell-1})$. The difference of expectations in \eqref{eq:lazy-uniform-confidence} equals $\sum_{\Theta\in\Omega}\bigl(\widehat L_M(\Theta)-L(\Theta)\bigr)g(\theta_0(s,a),\theta_{1:\ell-1},\Theta)$, whose absolute value is at most $\|\widehat L_M-L\|_1\|g\|_\infty$. This holds for all $g$ and all $(s,a,\theta_{0:\ell-1})$ simultaneously, so (iii) is a deterministic consequence of (ii).

The total failure probability is $\delta/8+\delta/16=3\delta/16$.
\end{proof}

The fixed-function term in \eqref{eq:lazy-policy-loss-components} accumulates the one-step errors \eqref{eq:lazy-fixed-confidence} along a discounted trajectory. To retain the variance dependence of the Bernstein bound, we control the discounted sum of conditional standard deviations of $V_\ell^\star$.

\begin{lemma}[Discounted variance bound]
\label{lem:lazy-resolvent}
Let $\pi$ be a deterministic stationary policy. For every augmented state $x$,
\begin{equation}
\sum_{h=0}^{\infty}\gamma^{h+1}\,
\mathbb E^\pi_x\!\left[
\sqrt{
\operatorname{Var}\bigl(
V_\ell^\star(\bar X_{h+1})\mid \bar X_h
\bigr)
}
\right]
\leq
\sqrt{2V_{\max}^3},
\label{eq:lazy-resolvent-variance}
\end{equation}
where the conditional variance is over the appended table $\bar\Theta_{h+\ell}\sim L$ given $\bar X_h$. Consequently, on $\mathcal E_{\mathrm{model}}$, every deterministic stationary policy $\pi$, including one selected using the observed tables, satisfies
\begin{align}
&\sup_x\Bigl|
\sum_{h=0}^{\infty}\gamma^{h+1}\,
\mathbb E^\pi_x\Bigl[
\bigl(\mathbb E_{\Theta\sim\widehat L_M}-\mathbb E_{\Theta\sim L}\bigr)
\bigl[
V_\ell^\star\bigl(
\bar\Theta_h(\bar S_h,\pi(\bar X_h)),
\bar\Theta_{h+1:h+\ell-1},\Theta
\bigr)
\bigr]
\Bigr]
\Bigr|
\\& \leq
2\sqrt{\frac{V_{\max}^3\Lambda}{M}}
+
\frac{2V_{\max}^2\Lambda}{3M}.
\label{eq:lazy-resolvent-error}
\end{align}
For $\ell=1$, the list $\bar\Theta_{h+1:h+\ell-1}$ is empty.
\end{lemma}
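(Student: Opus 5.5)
The plan is to prove \eqref{eq:lazy-resolvent-variance} by the standard discounted total-variance argument of \citet{azar2012samplecomplexityreinforcementlearning}. The policy $\pi$ used for the trajectory need not be optimal for $V_\ell^\star$, so we work with $V_\ell^\star$ directly rather than with $V^\pi$.

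\emph{Step 1: reduce to a discounted sum of conditional variances.} Write $\sigma_h^2:=\operatorname{Var}(V_\ell^\star(\bar X_{h+1})\mid\bar X_h)$. By Cauchy--Schwarz with weights $\gamma^{h+1}$,
\[
\sum_h\gamma^{h+1}\mathbb E^\pi_x[\sigma_h]\le\Bigl(\sum_h\gamma^{h+1}\Bigr)^{1/2}\Bigl(\sum_h\gamma^{h+1}\mathbb E^\pi_x[\sigma_h^2]\Bigr)^{1/2}.
\]
The first factor is at most $\sqrt{V_{\max}}$, so it suffices to show $\sum_h\gamma^{2(h+1)}\mathbb E[\sigma_h^2]$-type sums are at most $2V_{\max}^2$. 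More precisely, the target is $\sum_h\gamma^{h+1}\mathbb E[\sigma_h^2]\le 2V_{\max}^2$.

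\emph{Step 2: telescope using the Bellman inequality.} Let $D_h:=\gamma V_\ell^\star(\bar X_{h+1})-V_\ell^\star(\bar X_h)+r_h$, where $r_h=r(\bar S_h,\pi(\bar X_h))$. Because the first table of the window determines the next physical state, the only randomness in $\bar X_{h+1}$ given $\bar X_h$ is the appended table. Hence
\[
\gamma^2\sigma_h^2=\operatorname{Var}(D_h\mid\bar X_h)\le\mathbb E[D_h^2\mid\bar X_h].
\]
Set $G_h:=V_\ell^\star(\bar X_h)-r_h-\gamma\mathbb E[V_\ell^\star(\bar X_{h+1})\mid\bar X_h]$. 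We have $G_h\ge0$ since $V_\ell^\star\ge Q_\ell^\star(\cdot,\pi(\cdot))$, and $G_h\le V_\ell^\star(\bar X_h)$. We will show the telescoping bound
\[
\sum_h\gamma^{2h}\mathbb E\bigl[\gamma^2 V_\ell^\star(\bar X_{h+1})^2-V_\ell^\star(\bar X_h)^2\bigr]\ge -V_{\max}^2.
\]
To do so, expand $\gamma^2 V(\bar X_{h+1})^2$ as
\[
\operatorname{Var}(\gamma V(\bar X_{h+1})\mid\bar X_h)+\bigl(V(\bar X_h)-r_h-G_h\bigr)^2 .
\]
Here the conditional mean of $\gamma V(\bar X_{h+1})$ lies in $[0,V(\bar X_h)]$ with $V(\bar X_h)\le V_{\max}$. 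This gives
\[
\gamma^2\sigma_h^2\le \mathbb E\bigl[\gamma^2V(\bar X_{h+1})^2\mid\bar X_h\bigr]-V(\bar X_h)^2+2V(\bar X_h)(r_h+G_h).
\]
Summing with weights $\gamma^{2h}$, the first two terms telescope to at most $V_{\max}^2$ in absolute value. For the cross term we use $V\le V_{\max}$ and
\[
\sum_h\gamma^{2h}\mathbb E[r_h+G_h]\le\sum_h\gamma^h\mathbb E\bigl[V(\bar X_h)-\gamma V(\bar X_{h+1})\bigr]\le V_{\max},
\]
which itself telescopes. Together these yield $\sum_h\gamma^{2h+2}\mathbb E\sigma_h^2\le 3V_{\max}^2$, or $2V_{\max}^2$ with careful bookkeeping.

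\emph{Main obstacle: discount mismatch.} Step 2 naturally produces weights $\gamma^{2h}$, while Step 1 needs weights $\gamma^{h}$. To handle this, I would split the horizon into blocks of length $V_{\max}$, as in \citet{azar2012samplecomplexityreinforcementlearning}. Alternatively, I would apply Step 2 to the $\gamma^{h}$-weighted sum directly and use the monotonicity $\gamma^{h}\ge\gamma^{2h}$ in the reverse direction only where the sign permits. I expect this to cost a constant factor. If the constant $2$ fails, I would accept $\sqrt{cV_{\max}^3}$ with a slightly larger $c$.

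\emph{Step 3: the consequence \eqref{eq:lazy-resolvent-error}.} On $\mathcal E_{\mathrm{model}}$, apply \eqref{eq:lazy-fixed-confidence} at each $\bar X_h$ with $a=\pi(\bar X_h)$. Its variance term is exactly $\sigma_h^2$, since $V_\ell^\star(\bar X_{h+1})$ is the function evaluated at the appended table. This bound holds for all windows simultaneously, so it remains valid for data-dependent $\pi$. Moving the absolute value inside and summing gives
\[
\sqrt{2\Lambda/M}\,\sqrt{2V_{\max}^3}+\frac{2V_{\max}\Lambda}{3M}\,V_{\max},
\]
which is the claimed bound.
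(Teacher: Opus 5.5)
Your Step 3 matches the paper, and your one-step inequality in Step 2 is correct. However, \eqref{eq:lazy-resolvent-variance} is not actually proved. You leave the discount mismatch you flag as the main obstacle open, sketch two remedies without carrying out either, and concede the constant $2$ may be lost. You missed that the mismatch comes only from choosing weights $\gamma^{2h}$. Before summing, apply $\gamma^2\le\gamma$ to the nonnegative second-moment term, so the one-step bound reads
\[
\gamma^2\sigma_h^2\le\gamma\,\mathbb E\bigl[V_\ell^\star(\bar X_{h+1})^2\mid\bar X_h\bigr]-V_\ell^\star(\bar X_h)^2+2V_{\max}(r_h+G_h),
\]
and sum with weights $\gamma^h$. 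Two sums then telescope exactly:
\[
\sum_h\gamma^h\mathbb E^\pi_x\bigl[\gamma V_\ell^\star(\bar X_{h+1})^2-V_\ell^\star(\bar X_h)^2\bigr]=-V_\ell^\star(x)^2,
\qquad
\sum_h\gamma^h\mathbb E^\pi_x[r_h+G_h]=V_\ell^\star(x).
\]
The second holds because $r_h+G_h=V_\ell^\star(\bar X_h)-\gamma\,\mathbb E[V_\ell^\star(\bar X_{h+1})\mid\bar X_h]$. Hence $\sum_h\gamma^h\mathbb E^\pi_x[\gamma^2\sigma_h^2]\le 2V_{\max}V_\ell^\star(x)-V_\ell^\star(x)^2\le V_{\max}^2$. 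Now apply Cauchy--Schwarz with the split $\gamma^{h+1}\sigma_h=\gamma^{h/2}\cdot\gamma^{h/2}(\gamma\sigma_h)$. This gives $\sqrt{V_{\max}\cdot V_{\max}^2}\le\sqrt{2V_{\max}^3}$ with no horizon blocks and no constant loss. This is the paper's proof. Your own split $\gamma^{(h+1)/2}\cdot\gamma^{(h+1)/2}\sigma_h$ discards a factor $\gamma$. It leaves a target $\sum_h\gamma^{h+1}\mathbb E\sigma_h^2\le 2V_{\max}^2$ that is off by $1/\gamma$ from what the telescoping delivers, which is the same bookkeeping issue.

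If you prefer to keep your $\gamma^{2h}$-weighted bound, the useful tool from Azar et al.\ is not horizon blocking but a resolvent identity. Let $P^\pi$ be the transition operator of the augmented chain under $\pi$. Then $(I-\gamma P^\pi)^{-1}=\bigl[(1-\gamma)(I-\gamma P^\pi)^{-1}+\gamma I\bigr](I-\gamma^2P^\pi)^{-1}$. The bracket is a nonnegative operator with row sums $1+\gamma\le 2$. Apply it to the nonnegative function $y\mapsto\gamma^2\operatorname{Var}(V_\ell^\star(\bar X_1)\mid\bar X_0=y)$. Your $\gamma^{2h}$-weighted sum is in fact at most $2V_{\max}V_\ell^\star(x)-V_\ell^\star(x)^2\le V_{\max}^2$ uniformly in $x$, not $3V_{\max}^2$. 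So this route also yields $2V_{\max}^2$ and hence the stated constant.
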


\begin{proof}
\emph{Proof of \eqref{eq:lazy-resolvent-variance}.}
Fix $x$. Bellman optimality and nonnegative rewards imply, for every augmented state $y$,
\begin{equation}
V_\ell^\star(y)-\gamma\,\mathbb E^\pi_y\bigl[V_\ell^\star(\bar X_1)\bigr]
\geq r(y,\pi(y))\geq0.
\label{eq:lazy-nonnegative-residual}
\end{equation}
Also $0\leq V_\ell^\star\leq V_{\max}$. Expanding the conditional variance,
\begin{align}
\gamma^2\operatorname{Var}\bigl(V_\ell^\star(\bar X_1)\mid \bar X_0=y\bigr)
&=
\gamma^2\mathbb E^\pi_y\bigl[V_\ell^\star(\bar X_1)^2\bigr]
-\bigl(\gamma\,\mathbb E^\pi_y[V_\ell^\star(\bar X_1)]\bigr)^2
\nonumber\\
&\leq
\gamma\,\mathbb E^\pi_y\bigl[V_\ell^\star(\bar X_1)^2\bigr]
-V_\ell^\star(y)^2
+2V_{\max}\bigl(V_\ell^\star(y)-\gamma\,\mathbb E^\pi_y[V_\ell^\star(\bar X_1)]\bigr).
\label{eq:lazy-one-step-variance}
\end{align}
Here $\gamma^2\leq\gamma$ bounds the first term; for the squared mean, write $\gamma\,\mathbb E^\pi_y[V_\ell^\star(\bar X_1)]$ as $V_\ell^\star(y)$ minus the nonnegative quantity in \eqref{eq:lazy-nonnegative-residual}, expand the square, and drop its nonpositive quadratic term.

Apply \eqref{eq:lazy-one-step-variance} at $y=\bar X_h$, multiply by $\gamma^h$, take $\mathbb E^\pi_x$, and sum over $h\geq0$. Both terms on the right telescope, and the terminal terms vanish because $V_\ell^\star$ is bounded and $\gamma<1$:
\begin{equation*}
\sum_{h=0}^{\infty}\gamma^h\,
\mathbb E^\pi_x\bigl[
\gamma^2\operatorname{Var}\bigl(V_\ell^\star(\bar X_{h+1})\mid \bar X_h\bigr)
\bigr]
\leq
-V_\ell^\star(x)^2+2V_{\max}V_\ell^\star(x)
\leq 2V_{\max}^2.
\end{equation*}
Jensen's inequality ($\mathbb E\sqrt{\cdot}\leq\sqrt{\mathbb E\,\cdot}$), Cauchy--Schwarz, and $\sum_{h\geq0}\gamma^h=V_{\max}$ give
\begin{align*}
&\sum_{h=0}^{\infty}\gamma^{h+1}\,
\mathbb E^\pi_x\Bigl[\sqrt{\operatorname{Var}\bigl(V_\ell^\star(\bar X_{h+1})\mid \bar X_h\bigr)}\Bigr]
\\&\leq
\sqrt{
\Bigl(\sum_{h=0}^{\infty}\gamma^h\Bigr)
\Bigl(\sum_{h=0}^{\infty}\gamma^h\,\mathbb E^\pi_x\bigl[\gamma^2\operatorname{Var}\bigl(V_\ell^\star(\bar X_{h+1})\mid \bar X_h\bigr)\bigr]\Bigr)
}\\&
\leq\sqrt{2V_{\max}^3}.
\end{align*}

\emph{Proof of \eqref{eq:lazy-resolvent-error}.}
Conditional on $\bar X_h$, the successor $\bar\Theta_h(\bar S_h,\pi(\bar X_h))$ and the tables $\bar\Theta_{h+1:h+\ell-1}$ are fixed; only the appended table is averaged over $L$ or $\widehat L_M$. Hence \eqref{eq:lazy-fixed-confidence} bounds the difference of the two expectations by
\begin{equation*}
\sqrt{\frac{2\Lambda}{M}}\,
\sqrt{\operatorname{Var}\bigl(V_\ell^\star(\bar X_{h+1})\mid \bar X_h\bigr)}
+\frac{2V_{\max}\Lambda}{3M}.
\end{equation*}
Multiply by $\gamma^{h+1}$, take $\mathbb E^\pi_x$, and sum over $h$. The variance terms are bounded by \eqref{eq:lazy-resolvent-variance}, which gives $2\sqrt{V_{\max}^3\Lambda/M}$; the constant terms sum to at most $\frac{2V_{\max}\Lambda}{3M}\sum_{h\geq0}\gamma^{h+1}\leq\frac{2V_{\max}^2\Lambda}{3M}$. The argument applies to a data-dependent policy because \eqref{eq:lazy-fixed-confidence} holds simultaneously at every augmented state and action; no union bound over policies is needed.
\end{proof}

The remaining statistical quantity in \eqref{eq:lazy-policy-loss-components} is $\|\widehat V^\star_M-V_\ell^\star\|_\infty$. Comparing the empirical and true Bellman optimality equations yields an inequality in which this quantity appears on both sides, with coefficient $\gamma V_{\max}\|\widehat L_M-L\|_1$ on the right. For $M\geq M_0$ this coefficient is at most $1/2$ and the term can be absorbed. The next lemma makes this explicit and also bounds the loss of any policy with a small empirical Bellman residual.

\begin{lemma}[Stability]
\label{lem:lazy-policy-stability}
Suppose $M\geq M_0$. On $\mathcal E_{\mathrm{model}}$,
\begin{equation}
\|\widehat V^\star_M-V_\ell^\star\|_\infty
\leq 4\sqrt{\frac{V_{\max}^3\Lambda}{M}}
+\frac{4V_{\max}^2\Lambda}{3M}.
\label{eq:lazy-value-stability}
\end{equation}
Moreover, if a deterministic stationary policy $\pi$ has empirical Bellman residual at most $b\geq0$, i.e.\ for every $x=(s,\theta_{0:\ell-1})$,
\begin{equation}
0\leq
\widehat V^\star_M(x)-r(s,\pi(x))
-\gamma\,\mathbb E_{\Theta\sim\widehat L_M}
\bigl[
\widehat V^\star_M\bigl(
\theta_0(s,\pi(x)),\theta_{1:\ell-1},\Theta
\bigr)
\bigr]
\leq b,
\label{eq:lazy-empirical-residual}
\end{equation}
then its true value satisfies
\begin{equation}
\|V_\ell^\star-V_\ell^\pi\|_\infty
\leq 8\sqrt{\frac{V_{\max}^3\Lambda}{M}}
+\frac{8V_{\max}^2\Lambda}{3M}
+V_{\max}b.
\label{eq:lazy-policy-stability}
\end{equation}
\end{lemma}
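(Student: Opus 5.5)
For \eqref{eq:lazy-value-stability}, I will compare the two Bellman optimality equations directly, using the same contraction scheme as Lemma~\ref{lemma:fixed-point-perturbation}, but splitting the model perturbation into a fixed-function part and a data-dependent part. Write $\widehat{\mathcal T}_M$ for the optimality operator with incoming tables drawn from $\widehat L_M$. At every $x=(s,\theta_{0:\ell-1})$, I bound $|\widehat{\mathcal T}_M\widehat V^\star_M(x)-\mathcal T_\ell V_\ell^\star(x)|$ by the maximum over $a$ of $\gamma$ times two terms:
\[
\bigl|(\mathbb E_{\widehat L_M}-\mathbb E_L)[V_\ell^\star(\theta_0(s,a),\theta_{1:\ell-1},\cdot)]\bigr|
\quad\text{and}\quad
\bigl|\mathbb E_{\widehat L_M}[(\widehat V^\star_M-V_\ell^\star)(\theta_0(s,a),\theta_{1:\ell-1},\cdot)]\bigr|.
\]
This pointwise argument alone would only give a bound of order $V_{\max}\sqrt{V_{\max}^2\Lambda/M}$, losing the variance structure. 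To keep the $V_{\max}^{3/2}$ rate I will instead unroll along trajectories, as in \eqref{eq:lazy-policy-unrolled}.

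For the upper bound on $\widehat V^\star_M-V_\ell^\star$, let $\widehat\pi$ be greedy for $\widehat V^\star_M$. Using $V_\ell^\star\ge V_\ell^{\widehat\pi}$, I write
\[
\widehat V^\star_M-V_\ell^\star\le \widehat V^\star_M-V_\ell^{\widehat\pi}.
\]
I then apply the decomposition \eqref{eq:lazy-policy-one-step} with $\pi_M$ replaced by $\widehat\pi$; the greedy residual term vanishes. This yields
\[
\|\widehat V^\star_M-V_\ell^{\widehat\pi}\|\le \text{(fixed-function term)}+\gamma V_{\max}\|\widehat L_M-L\|_1\,\|\widehat V^\star_M-V_\ell^\star\|_\infty .
\]
For the lower bound, let $\pi^\star$ be optimal in the true model. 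Then $V_\ell^\star-\widehat V^\star_M\le V_\ell^{\pi^\star}-\widehat V^{\pi^\star}_M$. I unroll the analogous identity along the empirical chain, or equivalently the true chain after swapping roles, and obtain the same two terms. In the empirical-chain version the perturbation is evaluated at $\widehat V^{\pi^\star}_M$, so I will write it as $V_\ell^\star+(\widehat V^{\pi^\star}_M-V_\ell^\star)$ and route it through the true-chain form to use Lemma~\ref{lem:lazy-resolvent}.

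In both directions, Lemma~\ref{lem:lazy-resolvent} bounds the fixed-function term by $2\sqrt{V_{\max}^3\Lambda/M}+2V_{\max}^2\Lambda/(3M)$. By \eqref{eq:lazy-l1-confidence} and $M\ge M_0=\lceil16V_{\max}^2\Lambda_0\rceil$, the coefficient satisfies
\[
\gamma V_{\max}\|\widehat L_M-L\|_1\le 2V_{\max}\sqrt{\Lambda_0/M}\le 1/2 .
\]
Absorbing $\tfrac12\|\widehat V^\star_M-V_\ell^\star\|_\infty$ into the left-hand side doubles the constants and gives \eqref{eq:lazy-value-stability}. The main obstacle is the lower direction. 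There the natural unrolling runs under $\widehat L_M$, so Lemma~\ref{lem:lazy-resolvent}, stated for the true chain, does not apply directly. I would first try to follow the true chain under $\pi^\star$: write
\[
V_\ell^\star-\widehat V^\star_M\le V_\ell^\star-\widehat{\mathcal T}^{\pi^\star}_M\widehat V^\star_M ,
\]
expand $\widehat{\mathcal T}^{\pi^\star}_M\widehat V^\star_M$ with $\mathbb E_{\widehat L_M}=\mathbb E_L+(\mathbb E_{\widehat L_M}-\mathbb E_L)$, and iterate with $\mathbb E_L$. This produces exactly the true-chain fixed-function term plus the data-dependent term.

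For \eqref{eq:lazy-policy-stability}, I repeat the derivation of \eqref{eq:lazy-policy-loss-components} verbatim for a general $\pi$. The first sum in \eqref{eq:lazy-policy-unrolled} is now controlled by the residual assumption \eqref{eq:lazy-empirical-residual}, contributing at most $V_{\max}b$. The fixed-function term again comes from Lemma~\ref{lem:lazy-resolvent}, which holds for data-dependent policies. The data-dependent term is at most $\tfrac12\|\widehat V^\star_M-V_\ell^\star\|_\infty$ by the same coefficient bound. Finally I add the empirical value error, so the total is $\tfrac32$ times \eqref{eq:lazy-value-stability} plus the fixed-function term plus $V_{\max}b$. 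This is at most $8\sqrt{V_{\max}^3\Lambda/M}+8V_{\max}^2\Lambda/(3M)+V_{\max}b$, since $6+2\le8$ and $2+\tfrac23\le\tfrac83$.
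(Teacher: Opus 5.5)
Your proposal is correct and follows essentially the same route as the paper. Both arguments use one-sided Bellman comparisons under the empirical-optimal policy (upper direction) and under the true-optimal policy via $V_\ell^\star-\widehat{\mathcal T}^{\pi^\star}_M\widehat V^\star_M$ (lower direction), iterated along the true chain, with Lemma~\ref{lem:lazy-resolvent} for the fixed-function part, \eqref{eq:lazy-uniform-confidence} for the data-dependent part, absorption via $M\geq M_0$, and the same $\tfrac32$-combination for \eqref{eq:lazy-policy-stability}. Your passage through $V_\ell^{\widehat\pi}$ is only a repackaging of the paper's direct inequality, and the preliminary detour via $V_\ell^{\pi^\star}-\widehat V^{\pi^\star}_M$ becomes unnecessary once you adopt the $\widehat{\mathcal T}^{\pi^\star}_M$ comparison, which is exactly what the paper does.
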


\begin{proof}
By \eqref{eq:lazy-l1-confidence} and $M\geq M_0\geq16V_{\max}^2\Lambda_0$,
\begin{equation}
\gamma V_{\max}\|\widehat L_M-L\|_1\leq 2V_{\max}\sqrt{\Lambda_0/M}\leq\tfrac12.
\label{eq:lazy-absorption-coefficient}
\end{equation}

\emph{Proof of \eqref{eq:lazy-value-stability}.}
Let $\widehat\pi^\star_M$ be an optimal policy for the empirical model. Fix $x=(s,\theta_{0:\ell-1})$. The empirical Bellman equation for $\widehat\pi^\star_M$ and the true Bellman inequality for the same action give
\begin{align*}
\widehat V^\star_M(x)-V_\ell^\star(x)
&\leq
\gamma\,\mathbb E_{\Theta\sim L}
\bigl[
(\widehat V^\star_M-V_\ell^\star)
\bigl(\theta_0(s,\widehat\pi^\star_M(x)),\theta_{1:\ell-1},\Theta\bigr)
\bigr]\\
&\quad+
\gamma\bigl(\mathbb E_{\Theta\sim\widehat L_M}-\mathbb E_{\Theta\sim L}\bigr)
\bigl[
V_\ell^\star\bigl(\theta_0(s,\widehat\pi^\star_M(x)),\theta_{1:\ell-1},\Theta\bigr)
\bigr]\\
&\quad+
\gamma\bigl(\mathbb E_{\Theta\sim\widehat L_M}-\mathbb E_{\Theta\sim L}\bigr)
\bigl[
(\widehat V^\star_M-V_\ell^\star)
\bigl(\theta_0(s,\widehat\pi^\star_M(x)),\theta_{1:\ell-1},\Theta\bigr)
\bigr].
\end{align*}
Iterate this inequality along the auxiliary chain under $\mathbb E^{\widehat\pi^\star_M}_x$. The discounted sum of the second term is bounded by \eqref{eq:lazy-resolvent-error}. For the third term, \eqref{eq:lazy-uniform-confidence} gives at every state the bound $\|\widehat L_M-L\|_1\|\widehat V^\star_M-V_\ell^\star\|_\infty$, whose discounted sum is at most $\gamma V_{\max}\|\widehat L_M-L\|_1\|\widehat V^\star_M-V_\ell^\star\|_\infty$.

For the opposite difference $V_\ell^\star-\widehat V^\star_M$, repeat the argument with a true optimal policy $\pi^\star_\ell$: the true Bellman equation holds for this policy and the empirical Bellman optimality inequality holds for its action; the terms involving $V_\ell^\star$ change sign but their absolute values obey the same bounds. Combining the two directions,
\begin{equation}
\|\widehat V^\star_M-V_\ell^\star\|_\infty
\leq
2\sqrt{\frac{V_{\max}^3\Lambda}{M}}
+\frac{2V_{\max}^2\Lambda}{3M}
+\gamma V_{\max}\|\widehat L_M-L\|_1\|\widehat V^\star_M-V_\ell^\star\|_\infty.
\label{eq:lazy-stability-absorption}
\end{equation}
By \eqref{eq:lazy-absorption-coefficient}, the last term is at most $\tfrac12\|\widehat V^\star_M-V_\ell^\star\|_\infty$; moving it to the left proves \eqref{eq:lazy-value-stability}.

\emph{Proof of \eqref{eq:lazy-policy-stability}.}
Subtract the true Bellman equation of $\pi$ from \eqref{eq:lazy-empirical-residual} and iterate along the auxiliary chain under $\mathbb E^\pi_x$. The discounted sum of the empirical residual is at most $V_{\max}b$. Splitting $\widehat V^\star_M=V_\ell^\star+(\widehat V^\star_M-V_\ell^\star)$ in the remaining model error gives two terms: by \eqref{eq:lazy-resolvent-error}, the part involving $V_\ell^\star$ contributes at most $2\sqrt{V_{\max}^3\Lambda/M}+2V_{\max}^2\Lambda/(3M)$; by \eqref{eq:lazy-uniform-confidence} and \eqref{eq:lazy-absorption-coefficient}, the other part contributes at most $\tfrac12\|\widehat V^\star_M-V_\ell^\star\|_\infty$. Hence
\begin{equation*}
\|\widehat V^\star_M-V_\ell^\pi\|_\infty
\leq
V_{\max}b
+2\sqrt{\frac{V_{\max}^3\Lambda}{M}}
+\frac{2V_{\max}^2\Lambda}{3M}
+\frac12\|\widehat V^\star_M-V_\ell^\star\|_\infty.
\end{equation*}
Finally, 

\begin{align*}
    &\|V_\ell^\star-V_\ell^\pi\|_\infty\\&\leq\|V_\ell^\star-\widehat V^\star_M\|_\infty+\|\widehat V^\star_M-V_\ell^\pi\|_\infty\\& \leq\tfrac32\|\widehat V^\star_M-V_\ell^\star\|_\infty+V_{\max}b+2\sqrt{V_{\max}^3\Lambda/M}+2V_{\max}^2\Lambda/(3M)
\end{align*}, and \eqref{eq:lazy-value-stability} gives \eqref{eq:lazy-policy-stability}.
\end{proof}

It remains to control the finite-planner error $2V_{\max}\|\widetilde Q_M-\widehat Q^\star_M\|_\infty$ in \eqref{eq:lazy-policy-loss-components}. Conditionally on the observed tables, $\widehat V^\star_M$ is fixed and the dictionary entries $Z^1,\ldots,Z^{N_M}$ are independent draws from $\widehat L_M$. We bound the dictionary sampling error by conditional concentration, then account for the finite number of value-iteration steps, exactly as in the known-model analysis of Appendix~\ref{proof:main-RPTAS}.

\begin{lemma}[Uniform action-score accuracy]
\label{lem:lazy-dictionary}
There is an event $\mathcal E_{\mathrm{dict}}$ of probability at least $1-\delta/8$ on which, at every dyadic update $M<T$,
\begin{equation}
\|\widetilde Q_M-\widehat Q^\star_M\|_\infty
\leq
\sqrt{\frac{V_{\max}\Lambda}{2M}}
+M^{-2},
\label{eq:lazy-dictionary-accuracy}
\end{equation}
where the norm ranges over all augmented state--action pairs, including windows outside the sampled dictionary. Consequently, on $\mathcal E_{\mathrm{model}}\cap\mathcal E_{\mathrm{dict}}$, every update $M\geq M_0$ satisfies
\begin{equation}
\|V_\ell^\star-V_\ell^{\pi_M}\|_\infty
\leq
8\sqrt{\frac{V_{\max}^3\Lambda}{M}}
+\frac{8V_{\max}^2\Lambda}{3M}
+2V_{\max}\left[
\sqrt{\frac{V_{\max}\Lambda}{2M}}
+M^{-2}
\right].
\label{eq:lazy-deployed-policy-loss}
\end{equation}
\end{lemma}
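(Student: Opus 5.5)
The argument conditions on the observed tables at a fixed dyadic update $M$ and reruns the known-model RPTAS analysis of Appendix~\ref{proof:main-RPTAS}, with $L$ replaced by $\widehat L_M$ and the dictionary $Z^1,\ldots,Z^{N_M}$ in place of $\Theta^1,\ldots,\Theta^N$. Given the observed tables, $\widehat V^\star_M$ is deterministic with values in $[0,V_{\max}]$, and the $Z^j$ are i.i.d.\ $\widehat L_M$. Let $\widehat V^{\mathrm{dict}}$ and $\widehat Q^{\mathrm{dict}}$ be the optimal value and action-value functions of the ideal empirical model built from the dictionary, as in step (i) of that appendix. Steps (ii)--(iii) there, Lemmas~\ref{lem:core-restriction}, \ref{lem:extension-identity} and \ref{lem:extension-Lipschitz}, hold for any fixed dictionary. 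They give
\begin{equation*}
\|\widetilde Q_M-\widehat Q^{\mathrm{dict}}\|_\infty\le\gamma^\ell\|v^{K_M}-V^\star_{N_M}\|_\infty\le\gamma^{K_M}V_{\max},
\end{equation*}
uniformly over all windows, including those outside the dictionary. With $K_M=\lceil V_{\max}\log(V_{\max}M^2)\rceil$, we get $\gamma^{K_M}V_{\max}\le e^{-K_M/V_{\max}}V_{\max}\le M^{-2}$. This is the $M^{-2}$ term.

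For the sampling term, apply Hoeffding's inequality, conditionally on the observed tables, to each function $\Theta\mapsto\widehat V^\star_M(u,\theta_{1:\ell-1},\Theta)$. These are fixed given the data, and there are at most $|\mathcal S||\Omega|^{\ell-1}$ of them. Take a union bound over them and over the at most $J$ updates, with failure budget $\delta/8$. The logarithm $\log(16J|\mathcal S||\Omega|^{\ell-1}/\delta)=\Lambda$ appears, and the deviation is at most $\eta:=V_{\max}\sqrt{\Lambda/(2N_M)}$. Lemmas~\ref{lemma:fixed-point-perturbation} and~\ref{lem:Q-perturbation} with $L\to\widehat L_M$ then give $\|\widehat Q^{\mathrm{dict}}-\widehat Q^\star_M\|_\infty\le\gamma\eta/(1-\gamma)\le V_{\max}^2\sqrt{\Lambda/(2N_M)}$. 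Since $N_M\ge MV_{\max}^3$, this is at most $\sqrt{V_{\max}\Lambda/(2M)}$. Integrating the conditional bound over the observed tables gives $\mathbb P(\mathcal E_{\mathrm{dict}})\ge1-\delta/8$. The triangle inequality then yields \eqref{eq:lazy-dictionary-accuracy}.

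For \eqref{eq:lazy-deployed-policy-loss}, I would check the greedy residual \eqref{eq:lazy-greedy-residual}. Let $a^\star$ be an empirical optimal action and $\epsilon:=\|\widetilde Q_M-\widehat Q^\star_M\|_\infty$. Then
\begin{equation*}
\widehat Q^\star_M(x,\pi_M(x))\ge\widetilde Q_M(x,\pi_M(x))-\epsilon\ge\widetilde Q_M(x,a^\star)-\epsilon\ge\widehat V^\star_M(x)-2\epsilon,
\end{equation*}
and the lower bound $0$ follows from optimality. So $\pi_M$ satisfies \eqref{eq:lazy-empirical-residual} with $b=2\epsilon$. Lemma~\ref{lem:lazy-policy-stability}, valid on $\mathcal E_{\mathrm{model}}$ for $M\ge M_0$, then gives the bound with $V_{\max}b=2V_{\max}\epsilon$.

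The main obstacle is making the conditional argument rigorous. The functions in the union bound depend on the observed data, so Hoeffding must be applied conditionally, which works because the dictionary draws are fresh given the data. The union-bound count must also be checked so that it produces exactly the factor $\Lambda$ with budget $\delta/8$.
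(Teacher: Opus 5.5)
Your proposal is correct and follows the paper's proof essentially step for step. You apply Hoeffding conditionally over the $|\mathcal S||\Omega|^{\ell-1}$ continuation functions of $\widehat V^\star_M$ with budget $\delta/(8J)$ per update. A contraction comparison between the uniform-dictionary model and $\widehat L_M$ then gives $\gamma V_{\max}^2\sqrt{\Lambda/(2N_M)}\le\sqrt{V_{\max}\Lambda/(2M)}$. The exact extension identity with $\gamma^{\ell}$-Lipschitz stability gives the $M^{-2}$ term, and the greedy residual $b=2\epsilon$ is fed into Lemma~\ref{lem:lazy-policy-stability}. The only cosmetic difference is that you invoke Lemmas~\ref{lemma:fixed-point-perturbation}--\ref{lem:Q-perturbation} with $L$ replaced by $\widehat L_M$, whereas the paper redoes that short contraction comparison inline.
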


\begin{proof}
\emph{Proof of \eqref{eq:lazy-dictionary-accuracy}.}
Fix an update and condition on the $M$ observed tables; then $\widehat L_M$ and $\widehat V^\star_M$ are fixed and $Z^1,\ldots,Z^{N_M}$ are i.i.d.\ with law $\widehat L_M$. For each $u\in\mathcal S$ and $\theta_{1:\ell-1}\in\Omega^{\ell-1}$, Hoeffding's inequality applies to the function $Z\mapsto\widehat V^\star_M(u,\theta_{1:\ell-1},Z)\in[0,V_{\max}]$. There are at most $|\mathcal S||\Omega|^{\ell-1}$ such functions, so a union bound shows that, except with conditional probability at most $\delta/(8J)$,
\begin{equation}
\sup_{u,\theta_{1:\ell-1}}
\Bigl|
\frac{1}{N_M}\sum_{j=1}^{N_M}
\widehat V^\star_M(u,\theta_{1:\ell-1},Z^j)
-
\mathbb E_{\Theta\sim\widehat L_M}\bigl[\widehat V^\star_M(u,\theta_{1:\ell-1},\Theta)\bigr]
\Bigr|
\leq
V_{\max}\sqrt{\frac{\Lambda}{2N_M}}.
\label{eq:lazy-dictionary-hoeffding}
\end{equation}
For this proof only, let $V^{\mathrm{dict}}_M$ and $Q^{\mathrm{dict}}_M$ be the exact optimal value and action scores of the augmented MDP when each incoming table is drawn uniformly from $\{Z^1,\ldots,Z^{N_M}\}$. The Bellman operators for this law and for $\widehat L_M$ are $\gamma$-contractions; comparing them at $\widehat V^\star_M$ using \eqref{eq:lazy-dictionary-hoeffding} gives
\begin{align*}
\|V^{\mathrm{dict}}_M-\widehat V^\star_M\|_\infty
&\leq
\gamma V_{\max}^2\sqrt{\frac{\Lambda}{2N_M}},\\
\|Q^{\mathrm{dict}}_M-\widehat Q^\star_M\|_\infty
&\leq
\gamma\|V^{\mathrm{dict}}_M-\widehat V^\star_M\|_\infty
+\gamma V_{\max}\sqrt{\frac{\Lambda}{2N_M}}
\\& \leq
\gamma V_{\max}^2\sqrt{\frac{\Lambda}{2N_M}},
\end{align*}
where the last step uses $1+\gamma V_{\max}=V_{\max}$.

On dictionary windows, the Bellman equation of $V^{\mathrm{dict}}_M$ is the value-iteration update \eqref{eq:vi-finite-statespace} with dictionary size $N_M$ and tables $Z^1,\ldots,Z^{N_M}$; thus $K_M$ iterations from zero approximate its restriction with error at most $\gamma^{K_M}V_{\max}$. For an arbitrary queried window, after $\ell$ transitions every table in the window belongs to the dictionary, so the backward recursion \eqref{eq:extension-recursion} and the score formula \eqref{eq:approx-Q-extension}, run with dictionary $Z^1,\ldots,Z^{N_M}$, return $Q^{\mathrm{dict}}_M$ exactly when initialized with the exact dictionary values (Lemma~\ref{lem:extension-identity}), and each backward step multiplies a terminal error by at most $\gamma$ (Lemma~\ref{lem:extension-Lipschitz}). Consequently, uniformly over all queried windows and actions,
\begin{equation*}
\|\widetilde Q_M-Q^{\mathrm{dict}}_M\|_\infty \leq\gamma^{K_M+\ell}V_{\max}\leq e^{-(1-\gamma)K_M}V_{\max}\leq M^{-2},
\end{equation*}
by the choice $K_M=\lceil V_{\max}\log(V_{\max}M^2)\rceil$. Since $N_M\geq MV_{\max}^3$,
\begin{equation*}
\gamma V_{\max}^2\sqrt{\frac{\Lambda}{2N_M}}\leq\sqrt{\frac{V_{\max}\Lambda}{2M}}.
\end{equation*}
The triangle inequality proves \eqref{eq:lazy-dictionary-accuracy} at this update, conditionally on the observed tables. Removing the conditioning and taking a union bound over at most $J$ updates gives the stated probability.

\emph{Proof of \eqref{eq:lazy-deployed-policy-loss}.}
Fix $x$ and let $a^\star$ maximize $\widehat Q^\star_M(x,\cdot)$. Since $\pi_M(x)$ maximizes $\widetilde Q_M(x,\cdot)$,
\begin{align*}
0\leq
\widehat V^\star_M(x)-\widehat Q^\star_M(x,\pi_M(x))
&\leq
\bigl[\widehat Q^\star_M(x,a^\star)-\widetilde Q_M(x,a^\star)\bigr]
+\bigl[\widetilde Q_M(x,\pi_M(x))-\widehat Q^\star_M(x,\pi_M(x))\bigr]\\
&\leq 2\|\widetilde Q_M-\widehat Q^\star_M\|_\infty,
\end{align*}
which is \eqref{eq:lazy-greedy-residual}, and by \eqref{eq:lazy-dictionary-accuracy} this is at most $2\bigl[\sqrt{V_{\max}\Lambda/(2M)}+M^{-2}\bigr]$. This is the empirical Bellman residual of $\pi_M$ at $x$; it holds at every augmented state, so \eqref{eq:lazy-policy-stability} with this residual $b$ gives \eqref{eq:lazy-deployed-policy-loss}.
\end{proof}

The Bellman equation of the active policy decomposes each summand $(1-\gamma)W_t(X_t)-r_t$ into a difference of successive values and a martingale difference. Since $W_t$ is fixed within each epoch, the value differences telescope; the following lemma controls the epoch boundaries and the martingale fluctuations.

\begin{lemma}[Epochwise value-to-reward comparison]
\label{lem:lazy-epoch-martingale}
Suppose $\{0,\ldots,T-1\}$ is partitioned into at most $J$ deterministic consecutive epochs. At the start of each epoch, a deterministic stationary policy is selected using the information then available (possibly depending on previous observations and on the algorithm's randomization) and executed throughout the epoch. Let $W_t$ be the true infinite-horizon value of the policy active at time $t$. For every $w>0$, with probability at least $1-2e^{-w}$,
\begin{equation}
\sum_{t=0}^{T-1}
\bigl[(1-\gamma)W_t(X_t)-r_t\bigr]
\leq
V_{\max}J
+\sqrt{2\bigl(4V_{\max}T+2V_{\max}^2J+16V_{\max}^2w\bigr)w}
+\tfrac{2}{3}V_{\max}w.
\label{eq:lazy-epoch-bound}
\end{equation}
\end{lemma}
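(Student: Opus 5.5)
We use the Bellman evaluation equation of the active policy to rewrite each summand as a value difference plus a martingale difference, telescope within epochs, and control the martingale with Freedman's inequality.

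\textbf{Decomposition.} Let $\pi_t$ denote the policy active at time $t$. Since $W_t=V_\ell^{\pi_t}$, its Bellman equation at $X_t$ reads
\begin{equation*}
W_t(X_t)=r_t+\gamma\,\mathbb E\bigl[W_t(X_{t+1})\mid\mathcal F_t\bigr],
\end{equation*}
where $\mathcal F_t$ contains everything up to the choice of $A_t$. The action $A_t=\pi_t(X_t)$ is $\mathcal F_t$-measurable, and the appended table is a fresh draw from $L$, so the conditional law of $X_{t+1}$ is exactly the one in the Bellman equation. Define
\begin{equation*}
\xi_t:=\gamma\bigl(W_t(X_{t+1})-\mathbb E[W_t(X_{t+1})\mid\mathcal F_t]\bigr).
\end{equation*}
Then
\begin{equation*}
(1-\gamma)W_t(X_t)-r_t=\gamma\bigl(W_t(X_{t+1})-W_t(X_t)\bigr)-\xi_t+\bigl(W_t(X_t)-W_t(X_{t+1})\bigr)\cdot 0+\ldots
\end{equation*}
More simply, $(1-\gamma)W_t(X_t)-r_t=\bigl[W_t(X_t)-W_t(X_{t+1})\bigr]+\bigl[W_t(X_{t+1})-\gamma\mathbb E[W_t(X_{t+1})\mid\mathcal F_t]\bigr]-r_t+\ldots$. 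We therefore organize the sum as
\begin{equation*}
\sum_t\bigl[W_t(X_t)-W_t(X_{t+1})\bigr]+\sum_t\bigl[W_t(X_{t+1})-\mathbb E[W_t(X_{t+1})\mid\mathcal F_t]\bigr]\cdot\gamma+\sum_t(\gamma-1)\bigl(W_t(X_{t+1})-W_t(X_t)\bigr).
\end{equation*}
The cleanest route is to write $(1-\gamma)W_t(X_t)-r_t=\gamma\bigl(W_t(X_{t+1})-W_t(X_t)\bigr)-\xi_t$. Within an epoch $W_t=W$ is fixed, so $\sum\gamma(W(X_{t+1})-W(X_t))$ telescopes to $\gamma(W(X_{\text{end}+1})-W(X_{\text{start}}))\le V_{\max}$. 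Summing over at most $J$ epochs gives the term $V_{\max}J$.

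\textbf{Martingale term.} The $-\xi_t$ form a martingale difference sequence with respect to $(\mathcal F_{t+1})$. This holds because the epochs are deterministic and $\pi_t$ is chosen from information available at the epoch start, so $W_t$ is $\mathcal F_t$-measurable. Moreover $|\xi_t|\le V_{\max}$, since $W_t\in[0,V_{\max}]$. Freedman's inequality, applied with a deterministic upper bound on the predictable quadratic variation via a standard peeling or a self-normalized version, gives
\begin{equation*}
\sum_t(-\xi_t)\le\sqrt{2\Sigma w}+\tfrac23V_{\max}w
\end{equation*}
with probability $1-e^{-w}$, where $\Sigma$ bounds $\sum_t\operatorname{Var}(\xi_t\mid\mathcal F_t)$.

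\textbf{Main obstacle: bounding $\Sigma$ by a data-independent linear quantity.} We reuse the argument of Lemma~\ref{lem:lazy-resolvent}, inequality \eqref{eq:lazy-one-step-variance}, now with $W_t=V_\ell^{\pi_t}$, whose Bellman residual equals $r\ge0$ exactly. This gives
\begin{equation*}
\gamma^2\operatorname{Var}(W_t(X_{t+1})\mid\mathcal F_t)\le\gamma\,\mathbb E[W_t(X_{t+1})^2\mid\mathcal F_t]-W_t(X_t)^2+2V_{\max}r_t.
\end{equation*}
We rewrite $\gamma\mathbb E[W^2(X_{t+1})\mid\mathcal F_t]-W^2(X_t)$ as a telescoping difference $W^2(X_{t+1})-W^2(X_t)$, plus a second martingale difference bounded by $V_{\max}^2$, plus a term that is nonpositive up to $(1-\gamma)$ factors. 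The telescoped pieces contribute at most $V_{\max}^2$ per epoch, hence $2V_{\max}^2J$ after accounting for boundaries. The reward part contributes $2V_{\max}\sum r_t\le 2V_{\max}T$, and similar $(1-\gamma)W^2$ leftovers contribute at most $V_{\max}T$ each, giving roughly $4V_{\max}T$. The second martingale, with increments of magnitude at most $V_{\max}^2$, is controlled by Azuma/Freedman, which yields the $16V_{\max}^2w$ term with another $e^{-w}$ failure probability. Combining the two events gives $\Sigma\le4V_{\max}T+2V_{\max}^2J+16V_{\max}^2w$ and the total probability $1-2e^{-w}$. Careful bookkeeping of the constants is the delicate part; I would first try the crude version of the second-martingale bound, $\sqrt{T}V_{\max}^2\sqrt{w}$, and then absorb it into the linear terms via AM-GM.
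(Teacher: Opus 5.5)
Your decomposition $(1-\gamma)W_t(X_t)-r_t=\gamma\bigl(W_t(X_{t+1})-W_t(X_t)\bigr)-\xi_t$ matches the paper, as do the epochwise telescoping giving $V_{\max}J$, the one-step variance inequality, and Freedman's inequality on the event $\{\mathcal V_T\le\Sigma\}$. The gap is in the step you flag yourself: controlling the second martingale $\sum_t D_t$, with $D_t:=\gamma\bigl(\mathbb E[W_t(X_{t+1})^2\mid\mathcal F_t]-W_t(X_{t+1})^2\bigr)$.

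Your plan (Azuma, giving order $V_{\max}^2\sqrt{Tw}$, then AM--GM against the linear term) cannot produce the stated $16V_{\max}^2w$. Since $V_{\max}^2\sqrt{Tw}=\sqrt{(V_{\max}T)(V_{\max}^3w)}$, any AM--GM split that keeps the $T$-term of order $V_{\max}T$ leaves a term of order $V_{\max}^3w$. Then $\sqrt{2\Sigma w}$ contains $V_{\max}^{3/2}w$ instead of $\sqrt{32}\,V_{\max}w$. This route therefore proves a strictly weaker inequality than the lemma. That weaker version would still suffice for the regret rate, but it is not what is claimed.

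Your constants $4V_{\max}T$ and $2V_{\max}^2J$ are also not actually derived. The leftover $-(1-\gamma)W_t(X_{t+1})^2$ is nonpositive and should simply be dropped, and the squared-value telescoping gives $V_{\max}^2J$. So before any concentration you have $\mathcal V_T\le 2V_{\max}T+V_{\max}^2J+\sum_tD_t$.

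The missing idea is self-bounding. The map $z\mapsto z^2$ is $2V_{\max}$-Lipschitz on $[0,V_{\max}]$, so $\mathbb E[D_t^2\mid\mathcal F_t]\le 4V_{\max}^2\gamma^2\operatorname{Var}(W_t(X_{t+1})\mid\mathcal F_t)$. Hence the predictable variance of the second martingale is at most $4V_{\max}^2\mathcal V_T$, the very quantity you are bounding.

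Take the fixed parameter $\lambda=1/(8V_{\max}^2)$, so that $|\lambda D_t|\le1$. The inequality $e^z\le1+z+z^2$ then makes $\exp\bigl(\lambda\sum_tD_t-\lambda^2\sum_t\mathbb E[D_t^2\mid\mathcal F_t]\bigr)$ a supermartingale. Markov's inequality gives $\sum_tD_t\le\mathcal V_T/2+8V_{\max}^2w$ with probability at least $1-e^{-w}$.

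Substituting this and moving $\mathcal V_T/2$ to the left yields exactly $\mathcal V_T\le4V_{\max}T+2V_{\max}^2J+16V_{\max}^2w$. All three doublings come from this absorption step. Freedman's inequality applied to $\sum_tD_t$ would also work, but only combined with this same self-referential variance bound, which your proposal never identifies.
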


\begin{proof}
Let $\mathcal F_t$ be the $\sigma$-field of the information available after choosing $A_t$ and before revealing the new table $\Theta_{t+\ell}$; in particular $W_t$ and $X_t$ are $\mathcal F_t$-measurable and $S_{t+1}=\Theta_t(S_t,A_t)$ is already determined. Since the active policy is executed at time $t$, its Bellman equation gives $W_t(X_t)=r_t+\gamma\,\mathbb E[W_t(X_{t+1})\mid\mathcal F_t]$, hence
\begin{align}
(1-\gamma)W_t(X_t)-r_t
&=
\gamma\bigl[W_t(X_{t+1})-W_t(X_t)\bigr]
+\gamma\bigl[\mathbb E[W_t(X_{t+1})\mid\mathcal F_t]-W_t(X_{t+1})\bigr].
\label{eq:lazy-epoch-decomposition}
\end{align}
The first bracket telescopes within each epoch, because $W_t$ is the same function throughout the epoch; as $0\leq W_t\leq V_{\max}$, its total contribution is at most $V_{\max}J$.

The second bracket defines martingale differences with conditional mean zero and absolute value at most $V_{\max}$. Their total conditional variance is
\begin{equation}
\mathcal V_T:=\sum_{t=0}^{T-1}\gamma^2\operatorname{Var}\bigl(W_t(X_{t+1})\mid\mathcal F_t\bigr).
\label{eq:lazy-predictable-variance}
\end{equation}
We first bound $\mathcal V_T$. Using the Bellman equation, $0\leq r_t\leq1$ and $0\leq W_t\leq V_{\max}$,
\begin{align*}
\gamma^2\operatorname{Var}\bigl(W_t(X_{t+1})\mid\mathcal F_t\bigr)
&=
\gamma^2\mathbb E[W_t(X_{t+1})^2\mid\mathcal F_t]-\bigl(W_t(X_t)-r_t\bigr)^2\\
&\leq
\gamma^2\bigl[W_t(X_{t+1})^2-W_t(X_t)^2\bigr]
+2V_{\max}
\\& +\gamma^2\bigl[\mathbb E[W_t(X_{t+1})^2\mid\mathcal F_t]-W_t(X_{t+1})^2\bigr].
\end{align*}
The squared-value differences telescope within each epoch and contribute at most $V_{\max}^2J$, so
\begin{equation}
\mathcal V_T
\leq
2V_{\max}T+V_{\max}^2J
+\sum_{t=0}^{T-1}\gamma^2\bigl[\mathbb E[W_t(X_{t+1})^2\mid\mathcal F_t]-W_t(X_{t+1})^2\bigr].
\label{eq:lazy-variance-recursion}
\end{equation}
The last sum is a martingale with increments bounded by $V_{\max}^2$; since $z\mapsto z^2$ is $2V_{\max}$-Lipschitz on $[0,V_{\max}]$, its total conditional variance is at most $4V_{\max}^2\mathcal V_T$. Applying $e^z\leq1+z+z^2$ for $|z|\leq1$ to the increments divided by $8V_{\max}^2$, followed by the exponential supermartingale inequality, gives
\begin{equation*}
\mathbb P\left\{
\sum_{t=0}^{T-1}\gamma^2\bigl[\mathbb E[W_t(X_{t+1})^2\mid\mathcal F_t]-W_t(X_{t+1})^2\bigr]
>\frac{\mathcal V_T}{2}+8V_{\max}^2w
\right\}
\leq e^{-w}.
\end{equation*}
Combined with \eqref{eq:lazy-variance-recursion}, this shows that, except with probability $e^{-w}$,
\begin{equation}
\mathcal V_T\leq4V_{\max}T+2V_{\max}^2J+16V_{\max}^2w.
\label{eq:lazy-variance-total}
\end{equation}
Finally, Freedman's inequality for martingales with increments bounded by $V_{\max}$ shows that, on the event \eqref{eq:lazy-variance-total}, the sum of the second brackets in \eqref{eq:lazy-epoch-decomposition} is at most $\sqrt{2(4V_{\max}T+2V_{\max}^2J+16V_{\max}^2w)w}+\tfrac23V_{\max}w$ except with probability $e^{-w}$. Adding the telescoping contribution proves \eqref{eq:lazy-epoch-bound}.
\end{proof}

We sum the policy losses in \eqref{eq:lazy-final-decomposition}. On $\mathcal E_{\mathrm{model}}\cap\mathcal E_{\mathrm{dict}}$, \eqref{eq:lazy-deployed-policy-loss} and $(1-\gamma)V_{\max}=1$ imply, whenever $M_t\geq M_0$,
\begin{equation}
(1-\gamma)\bigl[V_\ell^\star(X_t)-W_t(X_t)\bigr]
\leq
(8+\sqrt2)\sqrt{\frac{V_{\max}\Lambda}{M_t}}
+\frac{8V_{\max}\Lambda}{3M_t}
+\frac{2}{M_t^2}.
\label{eq:lazy-pointwise-policy-loss}
\end{equation}
At $t=0$ and at times with $M_t<M_0$, the normalized policy loss is at most $1$; since $M_t\geq t/2$, there are at most $2M_0$ such times. For $t\geq1$, $M_t\geq t/2$ and an epoch starting at $M$ contains at most $M$ steps, so
\begin{equation}
\sum_{t=1}^{T-1}\frac{1}{\sqrt{M_t}}\leq2\sqrt{2T},
\qquad
\sum_{t=1}^{T-1}\frac{1}{M_t}\leq J,
\qquad
\sum_{t=1}^{T-1}\frac{1}{M_t^2}\leq2.
\label{eq:lazy-dyadic-sums}
\end{equation}
Summing \eqref{eq:lazy-pointwise-policy-loss} therefore gives
\begin{equation}
(1-\gamma)\sum_{t=0}^{T-1}\bigl[V_\ell^\star(X_t)-W_t(X_t)\bigr]
\leq
2M_0+4
+(16\sqrt2+4)\sqrt{V_{\max}T\Lambda}
+\frac{8}{3}V_{\max}J\Lambda.
\label{eq:lazy-policy-loss-sum}
\end{equation}

For the value-to-reward error, the epochs are deterministic and the policy is fixed within each of them, so Lemma~\ref{lem:lazy-epoch-martingale} applies. Taking $w=\log(16/\delta)$ gives, with probability at least $1-\delta/8$,
\begin{equation}
\sum_{t=0}^{T-1}\bigl[(1-\gamma)W_t(X_t)-r_t\bigr]
\leq
V_{\max}J
+\sqrt{2\Bigl(4V_{\max}T+2V_{\max}^2J+16V_{\max}^2\log\tfrac{16}{\delta}\Bigr)\log\tfrac{16}{\delta}}
+\frac{2}{3}V_{\max}\log\frac{16}{\delta}.
\label{eq:lazy-value-to-reward-sum}
\end{equation}
This bound holds under the original probability law, without conditioning on $\mathcal E_{\mathrm{model}}$ or $\mathcal E_{\mathrm{dict}}$.

Substituting \eqref{eq:lazy-policy-loss-sum} and \eqref{eq:lazy-value-to-reward-sum} into \eqref{eq:lazy-final-decomposition}, and using $\sqrt{2(a+b+c)w}\leq\sqrt{2aw}+\sqrt{2bw}+\sqrt{2cw}$, we obtain on an event of probability at least
\begin{equation*}
1-\Bigl(\frac{3\delta}{16}+\frac{\delta}{8}+\frac{\delta}{8}\Bigr)=1-\frac{7\delta}{16}\geq1-\delta
\end{equation*}
the bound
\begin{align*}
\operatorname{Reg}(T)
&\leq
2M_0+4
+(16\sqrt2+4)\sqrt{V_{\max}T\Lambda}
+\frac{8}{3}V_{\max}J\Lambda
+V_{\max}J\\
&\quad+\sqrt{8V_{\max}T\log\tfrac{16}{\delta}}
+2V_{\max}\sqrt{J\log\tfrac{16}{\delta}}
+\bigl(\sqrt{32}+\tfrac23\bigr)V_{\max}\log\tfrac{16}{\delta}.
\end{align*}
Recalling $\Lambda=\log(16J/\delta)+\bigl(1+(\ell-1)|\mathcal S||\mathcal A|\bigr)\log|\mathcal S|$, $M_0=\lceil16V_{\max}^2\Lambda_0\rceil$ with $\Lambda_0\leq|\mathcal S|^2|\mathcal A|\log2+\log(16J/\delta)$, $J=O(\log(T+1))$, and $V_{\max}=(1-\gamma)^{-1}$, this gives, for fixed $\ell$,
\begin{equation*}
\operatorname{Reg}(T)
\leq
\widetilde O_\ell\!\left(
\sqrt{\frac{|\mathcal S||\mathcal A|\,T}{1-\gamma}}
+\frac{|\mathcal S|^2|\mathcal A|}{(1-\gamma)^2}
\right),
\end{equation*}
which is \eqref{thm:learning}. 
This proves the theorem. \hfill$\square$

\begin{remark}[Conversion to the undiscounted return deficit]
\label{rem:lazy-deficit-conversion}
Let $R_t:=\sum_{k\geq0}\gamma^kr_{t+k}$ denote the realised discounted return from time $t$, and $\mathcal R_\ell(T):=\sum_{t=0}^{T-1}[V_\ell^\star(X_t)-R_t]$. The identity $r_t=R_t-\gamma R_{t+1}$ gives $\operatorname{Reg}(T)=(1-\gamma)\mathcal R_\ell(T)+\gamma(R_T-R_0)$, and since $0\leq R_t\leq V_{\max}$,
\begin{equation}
\mathcal R_\ell(T)\leq V_{\max}\operatorname{Reg}(T)+\gamma V_{\max}^2.
\label{eq:lazy-deficit-conversion}
\end{equation}
\end{remark}

\section{Experimental Protocol}
\label{app:wind-experiments}
We evaluate RPTAS on the wind-farm storage-control benchmark of \citet{lu2025reinforcementlearningimperfecttransition}. We use the wind, price, and mismatch arrays produced by their preprocessing, together with the same state and action discretisations, battery dynamics, and evaluation interval. Code is available \href{https://anonymous.4open.science/r/Look-ahead-C795/README.md}{here}

\paragraph{Control objective.}
At each time-step, the wind farm announces how much electricity it will supply to the grid. If its realised production is lower than the announced amount, it pays a penalty for the resulting shortfall. The battery can store energy when production is abundant and release it to compensate for future shortages. Since its capacity is limited, the agent must decide when to store or use energy based on the predicted prices and wind mismatches. Its objective is to minimise the expected discounted sum of imbalance penalties.The planning problem is therefore to allocate the limited battery capacity over time so as to minimise the expected discounted
imbalance cost.
\paragraph{State and action spaces.}
In this benchmark, the committed generation is fixed in advance and is not a decision variable of the battery controller. Consequently, the price $p_t$ and mismatch are unaffected by the battery action. We therefore group them into the uncontrolled state $z_t=(p_t,D_t)$.
\begin{equation*}
        e_t=10\,b_p(p_t)+b_D(D_t)\in\{0,\ldots,99\}.
\end{equation*}
The battery state of charge belongs to $\mathcal X=\{0,0.5,\ldots,10\},$ and the action set is $\mathcal A=\{-2,-1.5,\ldots,2\}\ \text{kWh}. $ Consequently, the complete MDP contains $100\times21=2{,}100$ states and nine actions. Given a battery level $x$ and nominal action $a$, the feasible action and the next battery state are
\begin{equation}
        \widetilde a(x,a)  =\operatorname{clip}(a,-x,10-x), \qquad  x^+=x+\widetilde a(x,a).
\end{equation}

Because the battery grid and action increments both have spacing $0.5$ kWh, the next battery state always belongs to $\mathcal X$ and no additional projection error is introduced. The realised one-step cost is
\begin{equation*}
    c(p,D,x,a) =  p\max\{0,-(D+\widetilde a(x,a))\}.
    \label{eq:experimental-wind-cost}
\end{equation*}
All planning recursions use discount factor $\gamma=0.95$ and are implemented directly in cost-minimisation form. Performance is evaluated using the undiscounted cumulative realised cost in the original units of
\eqref{eq:experimental-wind-cost}.

\paragraph{Evaluation trajectory.}
As in the BOLA experiment, policies are evaluated for $T=3{,}000$ time steps, using indices $t=1952,\ldots,4951,$ and an initial battery level of $x_{1952}=5$ kWh. All methods start from the same battery level and face the same price and wind sequence. However, their states of charge may evolve differently because they choose different charging and discharging actions.

\subsection{Forecast}
\label{app:exp-forecast-model}

At each time step $u$, the forecast is generated as
\begin{equation}
\widehat p_u=p_u(1+\sigma\varepsilon^p_u), \qquad \widehat D_u=D_u(1+\sigma\varepsilon^D_u), \qquad \varepsilon^p_u,\varepsilon^D_u \overset{\mathrm{i.i.d.}}{\sim}\mathcal N(0,1).
    \label{eq:experimental-forecast-noise}
\end{equation}
The case $\sigma=0$ corresponds to perfect predictions. We consider $    \sigma\in
    \{0,.05,.10,.20,.30,.40,.50,.60,.80,1\}.$

Following the convention of \citet{lu2025reinforcementlearningimperfecttransition}, figures report the relative noise level as $100\sigma\%$. For each seed, we first generate one sequence of noise variables $\{(\varepsilon^p_u,\varepsilon^D_u)\}_u$, with one pair for each date $u$. This same sequence is used by every method and every look-ahead depth. For a noise level $\sigma$, the forecast at date $u$ is obtained by multiplying these fixed noise variables by $\sigma$ as in \eqref{eq:experimental-forecast-noise}. Hence, two methods evaluated with the same seed and $\sigma$ receive exactly the same forecast for any given date. Differences in performance therefore cannot be attributed to different random forecast errors.

\paragraph{Information convention.}
Throughout the experiments, $\ell$ denotes the number of exogenous observations available before acting. At date $t$, every method observes $(p_t,D_t)$ exactly and receives predictions for $t+1,\ldots,t+\ell.$ In particular, $\ell=0$ is the classical MDP without look-ahead. The cost of the current action is always evaluated using the exact current price and mismatch, $    c_t(x,a) =  p_t\max\{0,-(D_t+\widetilde a(x,a))\}.$ 

\paragraph{Posterior kernels.}
RPTAS and \textsc{Bola-Bayes} convert each noisy forecast into a posterior
transition kernel. To approximate the continuous distribution within each
bin, we sample without replacement $4{,}000$ historical pairs
$(p_i,D_i)$ from the training interval, using seed $7$. These pairs,
referred to as atoms, retain both their exogenous bin $e_i$ and their exact
cost under each feasible battery action.

\subsection{Exogenous model and posterior kernels}
\label{app:exp-exogenous-model}

The exogenous transition model is estimated on a historical interval that starts $1{,}000$ observations after the end of the evaluation trajectory, so the training and evaluation intervals are disjoint. Price and mismatch transitions are estimated separately. For $q\in\{p,D\}$, we use
\begin{equation}
    \widehat P_q(j\mid i) = \frac{ 1+ \sum_{t\in\mathcal I_{\mathrm{train}}} \mathbf 1\{b_q(q_t)=i,\ b_q(q_{t+1})=j\} }{ 10+ \sum_{t\in\mathcal I_{\mathrm{train}}} \mathbf 1\{b_q(q_t)=i\} },
    \label{eq:experimental-marginal-kernel}
\end{equation}
where the additive terms correspond to one pseudo-count for each possible successor bin. The joint transition kernel is then
\begin{equation}
    \widehat P_{\mathrm{exo}} \bigl((i',j')\mid(i,j)\bigr) =  \widehat P_p(i'\mid i)\widehat P_D(j'\mid j).
    \label{eq:experimental-exogenous-kernel}
\end{equation}
To convert a noisy forecast into a posterior over the next exogenous state, we sample without replacement $4{,}000$ historical pairs $(p_r,D_r)$ from the training interval. Each pair, called an atom, is associated with its discrete bin $e_r$ and its exact cost under every battery action. Given a forecast $\widehat z_u=(\widehat p_u,\widehat D_u)$, the likelihood weight of atom $r$ is
\begin{equation}
    w_{u,r} = \frac{1}{s^p_r s^D_r} \exp\left( -\frac{(\widehat p_u-p_r)^2}{2(s^p_r)^2}  -\frac{(\widehat D_u-D_r)^2}{2(s^D_r)^2} \right),
    \label{eq:experimental-atom-weight}
\end{equation}
where $    s^p_r=\max\{\sigma|p_r|,10^{-3}\Delta_p\}, \ s^D_r=\max\{\sigma|D_r|,10^{-3}\Delta_D\},$ and $\Delta_p,\Delta_D$ are the corresponding bin widths. The likelihood of a bin $e'$ is the average weight of its atoms:
\begin{equation}
    L_u(e')
    =
    \frac{1}{n_{e'}}
    \sum_{r:e_r=e'}w_{u,r},
    \label{eq:experimental-bin-likelihood}
\end{equation}
where $n_{e'}$ is the number of atoms in that bin. We use the average rather than the sum because the empirical frequency of the bin is already accounted for by the prior kernel $\widehat P_{\mathrm{exo}}$. Combining this likelihood with the transition prior gives
\begin{equation}
    \kappa_u(e,e')  =  \frac{    \widehat P_{\mathrm{exo}}(e,e')L_u(e')   }{     \sum_j\widehat P_{\mathrm{exo}}(e,j)L_u(j)   }.
    \label{eq:experimental-posterior-kernel}
\end{equation}
Future costs are estimated by averaging the exact atom costs using the same weights. For bins containing no atom, we use the cost evaluated at the bin centre. When $\sigma=0$, the future exogenous bin is observed exactly and $\kappa_u$ reduces to a point mass on that bin.

RPTAS and \textsc{Bola-Bayes} use the same estimated transition kernel, posterior kernels, and posterior costs. The value of $\sigma$ is assumed known when computing these quantities. The certainty-equivalent \textsc{Mpc} and \textsc{Bola} controllers do not construct a posterior; they plan directly with the point forecasts $(\widehat p_u,\widehat D_u)$.

\subsection{Compared methods}
\label{app:exp-methods}

We compare four controllers:

\paragraph{\textsc{Rptas}.}
For each pair $(\ell,\sigma)$, we draw a dictionary of $N=8$ posterior kernels using the model in \eqref{eq:experimental-posterior-kernel}. For every current exogenous bin $e$, a dictionary row is generated by sampling a successor bin from $\widehat P_{\mathrm{exo}}(e,\cdot)$, sampling an atom in that bin, corrupting it according to \eqref{eq:experimental-forecast-noise}, and computing the resulting posterior row. Rows are generated independently; at $\sigma=0$, they are one-hot. We then run the cost-minimisation version of Algorithm~\ref{alg:rptas-preprocessing} on the dictionary state space $\mathcal S\times[N]^\ell$. Value iteration is initialised at zero and stopped when the sup-norm difference between consecutive iterates is below $10^{-8}$, or after $250$ iterations. At $\ell=0$, we run standard value iteration directly under $\widehat P_{\mathrm{exo}}$. At time $t$, the online recursion receives $\kappa_{t+1},\ldots,\kappa_{t+\ell}$, the associated posterior costs, and the exact current cost. RPTAS selects one action, observes the next state, and repeats the computation using the shifted forecast window. The dictionary is sampled with seed $0$ and held fixed across the ten evaluation seeds.

\paragraph{\textsc{Bola-Bayes}.}
This controller implements the two-stage BOLA principle described by \citet{lu2025reinforcementlearningimperfecttransition}. Its offline stage computes a state-only Bayesian continuation value $J_{\ell,\sigma}(e,x)$. For every $(\ell,\sigma)$, the Bellman operator is approximated using $N_2=8$ synthetic prediction windows sampled from the same forecast model as the RPTAS dictionary. Its fixed-point iteration uses seed $0$, tolerance $10^{-4}$, and at most $500$ iterations. Online, \textsc{Bola-Bayes} uses the posterior kernels and posterior costs to select an action sequence for dates $t,\ldots,t+\ell$. The unobserved transition following date $t+\ell$ is integrated using $\widehat P_{\mathrm{exo}}$, after which $J_{\ell,\sigma}$ supplies the terminal cost. The complete sequence of $\ell+1$ actions is executed before a new forecast window is requested.

\paragraph{\textsc{Bola} and \textsc{Mpc}.}
The controller denoted \textsc{Bola} is the zero-terminal, certainty-equivalent block controller used in the wind-farm implementation of \citet{lu2025reinforcementlearningimperfecttransition}, aligned with our information convention and common forecast stream. It uses the exact current cost and the point forecasts for dates $t+1,\ldots,t+\ell$, computes an $\ell+1$-action plan, and executes the entire plan.

\textsc{Mpc} uses exactly the same finite-horizon recursion and zero terminal value but executes only its first action. It then replans at the next date using the shifted forecast window. Consequently, the
\textsc{Mpc}/\textsc{Bola} comparison isolates the effect of committing to
the complete block.

Our notation differs from that of the BOLA implementation: its horizon $K$ counts the current date as part of the action block, whereas $\ell$ counts only strictly future observations. The corresponding parameters are therefore related by $K=\ell+1.$

\paragraph{Scope of the comparison.}
RPTAS and \textsc{Bola-Bayes} receive the same Bayesian forecast information and are the main algorithmic comparison. However, they differ both in execution and in their representation of future value: RPTAS uses a value indexed by the dictionary window and replans at every step, whereas \textsc{Bola-Bayes} uses the state-only value $J_{\ell,\sigma}$ and commits to a block. Their comparison should therefore not be interpreted as an ablation of block commitment alone. That effect is isolated by the \textsc{Mpc}/\textsc{Bola} pair.

\subsection{Metric and evaluation}
\label{app:exp-common-protocol}

The main sweep uses
\[
    \ell\in\{0,1,2,3\},
    \qquad
    \sigma\in
    \{0,.05,.10,.20,.30,.40,.50,.60,.80,1\},
\]
and forecast seeds $100,\ldots,109$. For a fixed seed, all methods use the same realised price and wind-mismatch sequence; initial battery level and battery constraints; exact current price and mismatch; underlying noisy forecast at every absolute future date.

The methods do not necessarily make the same number of planning calls:
RPTAS and \textsc{Mpc} replan after every transition, while the BOLA
controllers request a new window only after completing their current
block. This is the intended distinction between receding-horizon control
and block commitment. Nevertheless, whenever two methods use a prediction
for the same absolute date, that prediction is generated from the same
noise variables.

Offline quantities are recomputed for every $(\ell,\sigma)$ but are held
fixed across the ten forecast seeds. Thus, the uncertainty bands in the
main experiment quantify sensitivity to forecast noise conditional on the
sampled dictionary and Bayesian terminal-value approximation; they do not
include dictionary-sampling variability.

For a controller $A$, its final cumulative realised cost is
\begin{equation}
    C_A
    =
    \sum_{t=1952}^{4951}
    c(p_t,D_t,x_t^A,a_t^A).
    \label{eq:experimental-cumulative-cost}
\end{equation}
Following \citet{lu2025reinforcementlearningimperfecttransition}, we report
the cost reduction relative to their no-prediction reference:
\begin{equation}
    \mathrm{CR}_A=C_{\mathrm{NP}}-C_A.
    \label{eq:experimental-cost-reduction}
\end{equation}
Higher values indicate better performance, and
$\mathrm{CR}_A<0$ means that the controller performs worse than the
no-prediction reference.

The reference $C_{\mathrm{NP}}$ is computed using the no-prediction policy
from the BOLA notebook. That policy performs $100$ value-iteration updates
while keeping the price and mismatch indices fixed and propagating only
the battery state. We retain it because
\eqref{eq:experimental-cost-reduction} is the reporting convention of the
benchmark. It is distinct from the correctly modelled classical MDP at
$\ell=0$, which propagates the exogenous state using
$\widehat P_{\mathrm{exo}}$. We therefore display the two quantities as
separate reference lines.

For each $(A,\ell,\sigma)$, we report the mean over the ten common forecast
seeds. Shaded regions correspond to
\[
    \text{mean}\ \pm\
    2\frac{\widehat{\operatorname{sd}}(C_A)}{\sqrt{10}}.
\]
Since $C_{\mathrm{NP}}$ is deterministic, the same bands apply to the
reported cost reductions.

\section{Extension to noisy transition look-ahead}
\label{app:noisy-lookahead}

This section extends our results to locally corrupted transition look-ahead. We first specify the observation model and the correct comparator, then give the planning algorithm and its guarantee, and finally the learning algorithm and its regret analysis.

\paragraph{Conventions.}
A tilde marks objects of the noisy model: $\widetilde\Theta_t$, $\widetilde L$, $\widetilde P$, $\widetilde{\mathcal T}_\ell$, $\widetilde V^\star_\ell$, $\widetilde Q^\star_\ell$. As in Appendices~\ref{proof:main-RPTAS} and~\ref{proof:regret}, $N$ is a dictionary size, $M$ a number of observed transitions, $\widehat{\widetilde L}_N$ the empirical law of a dictionary of $N$ noisy tables and $\widehat{\widetilde L}_M$ the product law of empirical rows built from $M$ observed noisy tables; the two are distinct objects. Computed action scores are written $\widetilde Q_v$ (Appendix~\ref{app:noisy-planning}) and $\widetilde Q_M$ (Appendix~\ref{app:noisy-learning-simple}), following the convention of Appendices~\ref{proof:main-RPTAS}--\ref{proof:regret}; the star distinguishes them from the optimal scores $\widetilde Q^\star_\ell$. We write $V_{\max}:=(1-\gamma)^{-1}$, $|\Omega|=|\mathcal S|^{|\mathcal S||\mathcal A|}$, and $\widetilde\theta_{k:k'}:=(\widetilde\theta_k,\dots,\widetilde\theta_{k'})$.

\subsection{Noisy transition look-ahead}
\label{app:noisy-model}

Recall that $\Omega=\mathcal S^{\mathcal S\times\mathcal A}$ and that the transition tables are i.i.d.\ with product law
\begin{equation}
  L(\theta) = \prod_{(s,a)\in\mathcal S\times\mathcal A} P\bigl(\theta(s,a)\mid s,a\bigr). \label{eq:noisy-latent-product-law}
\end{equation}
For every $(s,a,s')$, let $I(\cdot\mid s,a,s')$ be a distribution on $\mathcal S$. Conditional on $\Theta_t=\theta$, the noisy table $\widetilde\Theta_t$ is sampled according to
\begin{equation}
  \mathbb P\bigl(\widetilde\Theta_t=\widetilde\theta  \mid \Theta_t=\theta \bigr) = \prod_{(s,a)\in\mathcal S\times\mathcal A} I\bigl(\widetilde\theta(s,a)\mid s,a,\theta(s,a)\bigr).
  \label{eq:noisy-local-channel}
\end{equation}
The pairs $(\Theta_t,\widetilde\Theta_t)$ are independent across time. Before choosing $A_t$, the learner observes
\begin{equation}
  \widetilde C_t^\ell := (\widetilde\Theta_t,\ldots, \widetilde\Theta_{t+\ell-1}),
  \label{eq:noisy-window}
\end{equation}
whereas the true transition remains $S_{t+1}=\Theta_t(S_t,A_t)$. After acting, the agent observes only $S_{t+1}$; in particular, it never observes the complete table $\Theta_t$.

Let $\widetilde{L}$ be the marginal law of a noisy table; it is again a product distribution on $\Omega$. The true successor conditional on the noisy entry is governed by
\begin{equation}
  \widetilde P(s'\mid s,a,z) := \mathbb P\bigl( \Theta_t(s,a)=s'  \mid \widetilde\Theta_t(s,a)=z \bigr). \label{eq:noisy-posterior}
\end{equation}
Whenever the conditioning event has positive probability, Bayes' rule gives
\begin{equation}
  \widetilde P(s'\mid s,a,z) = \frac{P(s'\mid s,a)I(z\mid s,a,s')}{\sum_{s''\in\mathcal S}P(s''\mid s,a)I(z\mid s,a,s'')},
  \label{eq:noisy-bayes}
\end{equation}
and the posterior may be chosen arbitrarily on null conditioning events. The product assumptions imply the local sufficiency identity
\begin{equation}
  \mathbb P\bigl( \Theta_t(s,a)=s' \mid \widetilde\Theta_t=\widetilde\theta \bigr) = \widetilde P\bigl( s'\mid s,a,\widetilde\theta(s,a) \bigr).
  \label{eq:noisy-local-sufficiency}
\end{equation}

For $V:\mathcal S\times\Omega^\ell\to[0,V_{\max}]$, the Bellman optimality operator of the noisy augmented MDP is
\begin{equation}
  (\widetilde{\mathcal T}_\ell V) (s,\widetilde\theta_{0:\ell-1})
  := \max_{a\in\mathcal A} \Biggl\{ r(s,a) +\gamma \sum_{s'\in\mathcal S} \widetilde P\bigl( s'\mid s,a,\widetilde\theta_0(s,a) \bigr)  \mathbb E_{\widetilde\Theta\sim\widetilde{L}} \bigl[ V(s',\widetilde\theta_{1:\ell-1}, \widetilde\Theta) \bigr] \Biggr\}.
  \label{eq:noisy-bellman}
\end{equation}
It is a monotone $\gamma$-contraction mapping $[0,V_{\max}]$-valued functions to $[0,V_{\max}]$-valued functions. Its fixed point and associated action-value function are denoted $\widetilde V_\ell^\star$ and $\widetilde Q_\ell^\star$.

\subsection{Near-optimal planning}
\label{app:noisy-planning}

Suppose first that one can sample from $\widetilde{L}$ and evaluate $\widetilde P$. Draw
\begin{equation}
  \widetilde\Theta^1,\ldots,\widetilde\Theta^N \overset{\mathrm{i.i.d.}}{\sim} \widetilde{L},
  \qquad
  \widehat{\widetilde L}_N:=\frac1N\sum_{j=1}^N\delta_{\widetilde\Theta^j}.
  \label{eq:noisy-planning-dictionary}
\end{equation}
On $\mathcal{D}_N=\mathcal S\times[N]^\ell$, define the proxy operator, acting on arrays $v:\mathcal D_N\to\mathbb R$,
\begin{equation}
  (\widetilde{\mathcal T}_Nv)(s,i_0,\ldots,i_{\ell-1})
  := \max_{a\in\mathcal A} \Biggl\{ r(s,a)  +\gamma\sum_{s'\in\mathcal S}  \widetilde P\bigl(  s'\mid s,a,\widetilde\Theta^{i_0}(s,a)  \bigr)  \frac1N\sum_{j=1}^N  v(s',i_1,\ldots,i_{\ell-1},j)  \Biggr\},
  \label{eq:noisy-planning-operator}
\end{equation}
let $\widetilde V^\star_N$ be its fixed point, and run $\widetilde V^0:=0$, $\widetilde V^{k+1}:=\widetilde{\mathcal T}_N\widetilde V^k$ for $k=0,\dots,K-1$.

For an observed window $\widetilde c=(\widetilde\theta_0,\ldots, \widetilde\theta_{\ell-1})\in\Omega^\ell$ and an array $v:\mathcal D_N\to\mathbb R$, define $\widetilde U^v_{m,\widetilde c}:\mathcal S\times[N]^m\to\mathbb R$ by
\begin{equation}
  \widetilde U_{\ell,\widetilde c}^{v}  (u,i_0,\ldots,i_{\ell-1})  :=v(u,i_0,\ldots,i_{\ell-1}),
  \label{eq:noisy-planning-extension-terminal}
\end{equation}
and, for $m=\ell-1,\ldots,1$,
\begin{equation}
  \widetilde U_{m,\widetilde c}^{v} (u,i_0,\ldots,i_{m-1})
  :=
  \max_{a'\in\mathcal A}  \Biggl\{   r(u,a')   +\gamma\sum_{s'\in\mathcal S}  \widetilde P\bigl(    s'\mid u,a',\widetilde\theta_m(u,a')   \bigr)   \frac1N\sum_{i_m=1}^N   \widetilde U_{m+1,\widetilde c}^{v}   (s',i_0,\ldots,i_m)
  \Biggr\}.
  \label{eq:noisy-planning-extension}
\end{equation}
The action scores at the root are
\begin{equation}
  \widetilde Q_v(s,\widetilde c,a):= r(s,a)+\gamma\sum_{s'\in\mathcal S}\widetilde P\bigl(s'\mid s,a,\widetilde\theta_0(s,a)\bigr)\frac1N\sum_{i_0=1}^N\widetilde U^v_{1,\widetilde c}(s',i_0),
  \qquad a\in\mathcal A,
  \label{eq:noisy-planning-scores}
\end{equation}
and the returned policy $\pi_D$ is greedy with respect to $\widetilde Q_{\widetilde V^K}$. Relative to Algorithms~\ref{alg:rptas-preprocessing}--\ref{alg:rptas-query}, the deterministic successor $\theta(s,a)$ is replaced everywhere by the posterior average $\sum_{s'}\widetilde P(s'\mid s,a,\widetilde\theta(s,a))(\cdot)$.

\begin{theorem}[RPTAS for noisy transition look-ahead]
\label{thm:noisy-RPTAS}
Fix $\ell\geq 2$ and $\varepsilon,\delta\in(0,1)$, and let $N$ and $K$ be as in \eqref{eq:rptas-parameters}. The algorithm above computes a policy $\pi_D$ such that
\begin{equation}
\mathbb P\Bigl( \widetilde V_\ell^{\pi_D}(s,\widetilde c) \geq \widetilde V_\ell^\star(s,\widetilde c)-\varepsilon V_{\max}, \quad \forall (s,\widetilde c)\in\mathcal S\times\Omega^\ell \Bigr) \geq 1-\delta.
\label{eq:noisy-planning-guarantee}
\end{equation}
For every fixed $\ell$, computing and storing the policy, as well as selecting an action at each decision time, require time and memory polynomial in $|\mathcal S|$, $|\mathcal A|$, $\varepsilon^{-1}$, $\log(1/\delta)$, and $(1-\gamma)^{-1}$.
\end{theorem}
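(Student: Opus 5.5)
The plan is to replay the three-step proof of Theorem~\ref{thm:main-RPTAS} from Appendix~\ref{proof:main-RPTAS}. Every deterministic successor evaluation $V(\theta_0(s,a),\dots)$ becomes the posterior average $\sum_{s'}\widetilde P(s'\mid s,a,\widetilde\theta_0(s,a))V(s',\dots)$. Each step only needs two properties of this map: it is a convex combination, hence monotone and $1$-Lipschitz in sup norm, and it depends on the observed window only through its first table.

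\emph{Step (i): ideal empirical model.} Define the ideal empirical operator $\widehat{\widetilde{\mathcal T}}_\ell$ by replacing $\mathbb E_{\widetilde\Theta\sim\widetilde L}$ with the average over the dictionary \eqref{eq:noisy-planning-dictionary} in \eqref{eq:noisy-bellman}. Let $\widehat{\widetilde V}^\star$ and $\widehat{\widetilde Q}^\star$ be its fixed point and action values. The function class
\[
\{\widetilde\Theta\mapsto\widetilde V^\star_\ell(s',\widetilde\theta_{1:\ell-1},\widetilde\Theta)\}
\]
is deterministic, has the same cardinality $|\mathcal S||\Omega|^{\ell-1}$, and takes values in $[0,V_{\max}]$. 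So Hoeffding and a union bound give the event $\mathcal E_\eta$ with the same $N$ as before. On $\mathcal E_\eta$, each bracket in $\widehat{\widetilde{\mathcal T}}_\ell\widetilde V^\star_\ell-\widetilde{\mathcal T}_\ell\widetilde V^\star_\ell$ is $\gamma$ times a posterior mixture over $s'$ of errors each bounded by $\eta$. Hence it is at most $\gamma\eta$, and the arguments of Lemmas~\ref{lemma:fixed-point-perturbation} and~\ref{lem:Q-perturbation} go through verbatim, giving $\|\widehat{\widetilde Q}^\star-\widetilde Q^\star_\ell\|_\infty\le\gamma\eta/(1-\gamma)$.

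\emph{Step (ii): restriction to the dictionary.} I would prove $R_N\widehat{\widetilde{\mathcal T}}_\ell=\widetilde{\mathcal T}_NR_N$ exactly as in Lemma~\ref{lem:core-restriction}. The posterior weights at a dictionary window are $\widetilde P(\cdot\mid s,a,\widetilde\Theta^{i_0}(s,a))$, which is precisely what \eqref{eq:noisy-planning-operator} uses. Every shifted window $(\widetilde\Theta^{i_1},\dots,\widetilde\Theta^{i_{\ell-1}},\widetilde\Theta^j)$ stays in the dictionary for every $s'$, so uniqueness of the fixed point gives $\widetilde V^\star_N=R_N\widehat{\widetilde V}^\star$. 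By contraction, $\|\widetilde V^K-\widetilde V^\star_N\|_\infty\le\gamma^KV_{\max}$.

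\emph{Step (iii): extension to arbitrary windows.} I would show by backward induction on $m$ that $\widetilde U^{\widetilde V^\star_N}_{m,\widetilde c}(u,i_{0:m-1})=\widehat{\widetilde V}^\star(u,\widetilde\theta_{m:\ell-1},\widetilde\Theta^{i_0},\dots,\widetilde\Theta^{i_{m-1}})$, following Lemma~\ref{lem:extension-identity}. The recursion \eqref{eq:noisy-planning-extension} is exactly one application of $\widehat{\widetilde{\mathcal T}}_\ell$ at that window: the posterior uses the first table $\widetilde\theta_m$, and the new table is averaged over the dictionary. It follows that $\widetilde Q_{\widetilde V^\star_N}=\widehat{\widetilde Q}^\star$. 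The Lipschitz bound of Lemma~\ref{lem:extension-Lipschitz} survives because the posterior mixture is nonexpansive, giving $\|\widetilde Q_{\widetilde V^K}-\widetilde Q_{\widetilde V^\star_N}\|_\infty\le\gamma^\ell\|\widetilde V^K-\widetilde V^\star_N\|_\infty$.

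\emph{Conclusion.} With $\eta$ and $K$ chosen as in \eqref{eq:eta-choice} and \eqref{eq:rptas-parameters}, the score error is at most $\varepsilon(1-\gamma)V_{\max}/2$. The greedy-to-value argument then uses the evaluation operator $\widetilde{\mathcal T}^{\pi_D}_\ell$, a monotone $\gamma$-contraction whose fixed point is $\widetilde V^{\pi_D}_\ell$, and yields the uniform bound $\varepsilon V_{\max}$. Polynomial complexity follows from the bounds of Remarks~\ref{rem:sweep-cost} and~\ref{rem:deployment-cost}, each multiplied by an extra factor $|\mathcal S|$ for the sum over $s'$. Precomputing the posterior rows $\widetilde P(\cdot\mid s,a,z)$ costs $O(|\mathcal S|^3|\mathcal A|)$.

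\emph{Main obstacle.} The step needing care is the modelling claim that $\widetilde V^{\pi}_\ell$ is the fixed point of the posterior-averaged evaluation operator. Equivalently, $(S_t,\widetilde C^\ell_t)$ must be a Markov chain with kernel $\widetilde P(\cdot\mid s,a,\widetilde\theta_0(s,a))\otimes\widetilde L$. This needs two facts. First, local sufficiency \eqref{eq:noisy-local-sufficiency}: the posterior of $\Theta_t(s,a)$ given the whole history depends only on $\widetilde\Theta_t(s,a)$. Second, since the pairs $(\Theta_t,\widetilde\Theta_t)$ are independent across $t$ and $\Theta_t$ is used only once, the past observation $S_{t+1}$ carries no information about future tables. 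I would verify this by conditioning on the full observed history and applying the product structure of \eqref{eq:noisy-local-channel}.
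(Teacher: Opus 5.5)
Your proposal is correct and follows the paper's proof essentially step by step. It uses the same Hoeffding event over the continuations of $\widetilde V^\star_\ell$ and the same restriction identity $R_N\widetilde{\mathcal T}_{\ell,N}=\widetilde{\mathcal T}_NR_N$. It uses the same backward-induction extension, with Lipschitz stability justified by the posterior average being a convex combination, the same approximate-greedy conclusion, and the same extra $|\mathcal S|$ factor in the cost. Your additional check that $(S_t,\widetilde C^\ell_t)$ is Markov with kernel $\widetilde P(\cdot\mid s,a,\widetilde\theta_0(s,a))\otimes\widetilde L$ is something the paper builds into its definition of $\widetilde{\mathcal T}_\ell$ without comment, so it is a welcome justification rather than a departure.
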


\begin{proof}
The proof follows that of Theorem~\ref{thm:main-RPTAS} step by step; we indicate the changes.

\emph{Ideal empirical model.} Let $\widetilde{\mathcal T}_{\ell,N}$ be the operator \eqref{eq:noisy-bellman} with $\mathbb E_{\widetilde\Theta\sim\widetilde L}$ replaced by $\mathbb E_{\widetilde\Theta\sim\widehat{\widetilde L}_N}$, and let $\widetilde V^\star_{\ell,N}$ and $\widetilde Q^\star_{\ell,N}$ be its fixed point and action-value function. Consider the class $\widetilde{\mathcal F}:=\{\widetilde\Theta\mapsto\widetilde V^\star_\ell(u,\widetilde\theta_{1:\ell-1},\widetilde\Theta):u\in\mathcal S,\ \widetilde\theta_{1:\ell-1}\in\Omega^{\ell-1}\}$, of cardinality at most $|\mathcal S||\Omega|^{\ell-1}$, whose members are deterministic and $[0,V_{\max}]$-valued. With $\eta:=\varepsilon(1-\gamma)^2V_{\max}/4$ as in \eqref{eq:eta-choice}, Lemma~\ref{lem:uniform-Hoeffding} applies verbatim: for the choice of $N$ in \eqref{eq:rptas-parameters}, with probability at least $1-\delta$ every empirical average over the dictionary of a function of $\widetilde{\mathcal F}$ is within $\eta$ of its expectation under $\widetilde L$. On this event, for every $(s,\widetilde\theta_{0:\ell-1},a)$, the bracketed terms of $\widetilde{\mathcal T}_{\ell,N}\widetilde V^\star_\ell$ and $\widetilde{\mathcal T}_\ell\widetilde V^\star_\ell$ differ by $\gamma$ times a $\widetilde P(\cdot\mid s,a,\widetilde\theta_0(s,a))$-average of such empirical-versus-true differences; averaging with respect to a probability distribution does not enlarge the error, so $\|\widetilde{\mathcal T}_{\ell,N}\widetilde V^\star_\ell-\widetilde{\mathcal T}_\ell\widetilde V^\star_\ell\|_\infty\le\gamma\eta$. The arguments of Lemmas~\ref{lemma:fixed-point-perturbation} and~\ref{lem:Q-perturbation} then give
\begin{equation}
      \|\widetilde V^\star_{\ell,N} -\widetilde V_\ell^\star\|_\infty \leq\frac{\gamma\eta}{1-\gamma}, \qquad \|\widetilde Q^\star_{\ell,N} -\widetilde Q_\ell^\star\|_\infty \leq\frac{\gamma\eta}{1-\gamma}\le\frac{\varepsilon(1-\gamma)V_{\max}}4.
\label{eq:noisy-model-error}
\end{equation}

\emph{Restriction and extension.} With the embedding $\iota$ of \eqref{eq:dictionary-embedding} built from $\widetilde\Theta^1,\dots,\widetilde\Theta^N$ and $R_NV:=V\circ\iota$, the identity $R_N\widetilde{\mathcal T}_{\ell,N}V=\widetilde{\mathcal T}_NR_NV$ holds for every $V$, by the computation in the proof of Lemma~\ref{lem:core-restriction} with the deterministic successor replaced by the posterior average; hence $\widetilde V^\star_N=R_N\widetilde V^\star_{\ell,N}$. The backward induction of Lemma~\ref{lem:extension-identity}, with the same replacement, shows that \eqref{eq:noisy-planning-extension}--\eqref{eq:noisy-planning-scores} initialised with $v=\widetilde V^\star_N$ return $\widetilde Q_{\widetilde V^\star_N}=\widetilde Q^\star_{\ell,N}$ on every window $\widetilde c\in\Omega^\ell$. Finally, Lemma~\ref{lem:extension-Lipschitz} holds unchanged because posterior averaging is a convex combination, hence $1$-Lipschitz. Consequently, with $\|\widetilde V^K-\widetilde V^\star_N\|_\infty\le\gamma^KV_{\max}\le\varepsilon(1-\gamma)V_{\max}/4$ by the choice of $K$,
\begin{equation}
      \sup_{s,\widetilde c,a} \bigl| \widetilde Q_{\widetilde V^K}(s,\widetilde c,a)  -\widetilde Q_\ell^\star(s,\widetilde c,a) \bigr|  \leq \frac{\gamma\eta}{1-\gamma} +\gamma^{\ell}\|\widetilde V^K-\widetilde V^\star_N\|_\infty\le\frac{\varepsilon(1-\gamma)V_{\max}}2.
\label{eq:noisy-score-error}
\end{equation}
The approximate-greedy argument of the proof of Theorem~\ref{thm:main-RPTAS}, applied to $\widetilde{\mathcal T}_\ell$, proves \eqref{eq:noisy-planning-guarantee}.

\emph{Cost.} Once the averages $\frac1N\sum_jv(s',i_1,\dots,i_{\ell-1},j)$ are precomputed as in \eqref{eq:Ax}, each maximisation in \eqref{eq:noisy-planning-operator} sums over $|\mathcal S|$ successors, so one sweep costs $O(|\mathcal S|(|\mathcal S||\mathcal A|+1)N^\ell)$ and the backward extension $O(\ell|\mathcal S|(|\mathcal S||\mathcal A|+1)N^\ell)$: relative to the perfect planner, the posterior sum introduces one additional factor $|\mathcal S|$. Memory is unchanged.
\end{proof}

\subsection{Learning with noisy transition look-ahead}
\label{app:noisy-learning-simple}

We use the observation model of Appendix~\ref{app:noisy-model}, with known rewards $r(s,a)\in[0,1]$. Neither $P$ nor the corruption channel $I$ needs to be known. Write $X_t=(S_t,\widetilde C_t^\ell)$ and $r_t=r(S_t,A_t)$. The comparator observes the same noisy windows as the learner, and performance is measured by
\begin{equation}
\widetilde{\operatorname{Reg}}(T) :=\sum_{t=0}^{T-1} \bigl[(1-\gamma)\widetilde V_\ell^\star(X_t)-r_t\bigr].
\label{eq:noisy-simple-regret}
\end{equation}
Unlike perfect look-ahead, a noisy table does not reveal the physical successors, so the posterior $\widetilde P$ must be estimated from visited contexts and the planner must be optimistic. Fix a horizon $T\ge2$ and a confidence level $\delta\in(0,1)$, and abbreviate
\begin{equation}
\Lambda_{\mathrm P}:=\log\frac{8|\mathcal S|^2|\mathcal A|T}{\delta},
\qquad
\Lambda_{\mathrm D}:=\log\frac{8T|\mathcal S||\Omega|^{\ell-1}}{\delta}.
\label{eq:noisy-log-factors}
\end{equation}

\paragraph{Estimators.}
At an update based on the first $M\ge1$ observed transitions and noisy tables, define for every context $(s,a,z)\in\mathcal S\times\mathcal A\times\mathcal S$ the visit count and the empirical posterior
\begin{equation}
n_M(s,a,z) :=\sum_{t=0}^{M-1} \mathbf 1\{S_t=s,\ A_t=a,\ \widetilde\Theta_t(s,a)=z\},
\end{equation}
\begin{equation}
\widehat{\widetilde P}_M(s'\mid s,a,z) :=\frac{\sum_{t=0}^{M-1} \mathbf 1\{S_t=s,\ A_t=a,\ \widetilde\Theta_t(s,a)=z,\ S_{t+1}=s'\}} {n_M(s,a,z)},
\label{eq:noisy-simple-posterior}
\end{equation}
with an arbitrary probability vector when $n_M(s,a,z)=0$. Since the noisy tables are observed in full, set
\begin{equation}
\widehat{\widetilde L}_M :=\bigotimes_{(s,a)} \Bigl(\frac1M\sum_{t=0}^{M-1}\delta_{\widetilde\Theta_t(s,a)}\Bigr),
\label{eq:noisy-simple-table-estimator}
\end{equation}
and define the exploration bonus
\begin{equation}
 b_M(s,a,z) :=\gamma V_{\max}\sqrt{\Lambda_{\mathrm P}} \left( \sqrt{\frac{|\mathcal S|}{1\vee n_M(s,a,z)}} +\sqrt{\frac{|\mathcal S|^2|\mathcal A|}{M}} \right).
\label{eq:noisy-simple-bonus}
\end{equation}

\paragraph{Optimistic dictionary planner.}
At each update, independently of previous dictionary draws conditional on the observations, sample
\begin{equation}
Z^1,\ldots,Z^{N_M}\overset{\mathrm{i.i.d.}}{\sim}\widehat{\widetilde L}_M, \qquad N_M:=\bigl\lceil 2V_{\max}^4M^2 \Lambda_{\mathrm D}\bigr\rceil,
\qquad
K_M:=\lceil V_{\max}\log(2V_{\max}M)\rceil.
\label{eq:noisy-simple-dictionary}
\end{equation}
Starting from $v^0:=0$ on $\mathcal S\times[N_M]^\ell$, perform $K_M$ iterations of the clipped optimistic update
\begin{align}
v^{k+1}(s,i_0,\ldots,i_{\ell-1}) &:=\min\Biggl\{V_{\max},\ \max_{a\in\mathcal A}\Biggl[ r(s,a)+b_M\bigl(s,a,Z^{i_0}(s,a)\bigr) \nonumber\\ &\qquad+\gamma\sum_{s'\in\mathcal S} \widehat{\widetilde P}_M\bigl(s'\mid s,a,Z^{i_0}(s,a)\bigr) \frac1{N_M}\sum_{j=1}^{N_M} v^k(s',i_1,\ldots,i_{\ell-1},j)
\Biggr]\Biggr\}.
\label{eq:noisy-simple-finite-backup}
\end{align}
For an observed window $\widetilde c$, run the backward extension \eqref{eq:noisy-planning-extension-terminal}--\eqref{eq:noisy-planning-scores} with dictionary $Z^1,\dots,Z^{N_M}$, terminal array $v^{K_M}$, posterior $\widehat{\widetilde P}_M$, stage reward $r(u,a')+b_M(u,a',\widetilde\theta_m(u,a'))$ at level $m$, and clipping at $V_{\max}$ after each maximisation. Denote by $\widetilde Q_M(x,a)$ the resulting root score, i.e.\ the expression inside the root maximisation, before clipping, with the root action fixed. The deployed policy $\pi_M$ selects an action maximising $\widetilde Q_M(x,\cdot)$ with a fixed tie-breaking rule.

\paragraph{Schedule.}
Choose any action at time $0$. Before choosing the action at time $t\geq1$, recompute the planner with $M=t$ if $t$ is a power of two, or if some count $n_t(s,a,z)$ has just reached a power of two (including $1$); otherwise keep the previous planner. Counts are updated after every observed transition even when the planner is not recomputed. For $t\ge1$, let $M_t$ denote the most recent update time; the policy $\pi_{M_t}$ is fixed between updates.

\begin{theorem}[Regret with noisy look-ahead]
\label{thm:noisy-simple-regret}
Under the local corruption model, for every fixed $\ell\geq1$, $T\geq2$, and $\delta\in(0,1)$, the preceding algorithm satisfies, with probability at least $1-\delta$,
\begin{equation}
\widetilde{\operatorname{Reg}}(T)
\leq\widetilde O_\ell\!\left(
\frac{\sqrt{|\mathcal S|^3|\mathcal A|\,T}}{1-\gamma}
+\frac{|\mathcal S|^2|\mathcal A|}{1-\gamma}
\right),
\label{eq:noisy-simple-rate}
\end{equation}
where $\widetilde O_\ell$ hides logarithmic factors in $|\mathcal S|,|\mathcal A|,T,\delta^{-1}$ and $(1-\gamma)^{-1}$, and factors depending on $\ell$. For fixed $\ell$, the update time, memory, and per-decision computation are polynomial in $|\mathcal S|,|\mathcal A|,T,(1-\gamma)^{-1}$ and $\log(1/\delta)$.
\end{theorem}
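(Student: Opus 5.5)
We will follow an optimism-based UCRL-style analysis adapted to the augmented state $X_t=(S_t,\widetilde C^\ell_t)$. The argument reuses the epoch and martingale machinery of Appendix~\ref{proof:regret} and the dictionary analysis of Appendix~\ref{app:noisy-planning}. The only quantity that must be estimated from visited trajectories is the posterior $\widetilde P(\cdot\mid s,a,z)$; the table law $\widetilde L$ is estimated from fully observed noisy tables, exactly as in Lemma~\ref{lem:lazy-confidence}.

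\emph{Step 1 (concentration events).} First define a good event of probability at least $1-\delta$. On it, three bounds hold at every update.
\begin{enumerate}
\item For every context $(s,a,z)$ and every $V$ in the relevant value class, a Weissman-type $\ell_1$ bound gives
\[
\|\widehat{\widetilde P}_M(\cdot\mid s,a,z)-\widetilde P(\cdot\mid s,a,z)\|_1\le\sqrt{|\mathcal S|\Lambda_{\mathrm P}/(1\vee n_M)}.
\]
This is justified by a martingale/stopping argument over the sequence of successors observed in that context, using the local sufficiency identity \eqref{eq:noisy-local-sufficiency}, so that successors are conditionally i.i.d.\ from $\widetilde P$. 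A union bound over the $|\mathcal S|^2|\mathcal A|$ contexts and the possible values of $n_M\le T$ completes this item.
\item The product law satisfies $\|\widehat{\widetilde L}_M-\widetilde L\|_1\lesssim\sqrt{|\mathcal S|^2|\mathcal A|\Lambda_{\mathrm P}/M}$, via the KL argument of Lemma~\ref{lem:lazy-confidence}(ii).
\item Hoeffding over $\widetilde{\mathcal F}$ for the dictionary, with accuracy $V_{\max}\sqrt{\Lambda_{\mathrm D}/N_M}\le 1/(V_{\max}M)$ by the choice of $N_M$.
\end{enumerate}
The bonus \eqref{eq:noisy-simple-bonus} is designed so that $\gamma V_{\max}$ times the sum of the first two errors is at most $b_M$.

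\emph{Step 2 (optimism and planner accuracy).} Let $\bar V_M$ be the fixed point of the clipped optimistic operator that uses $\widehat{\widetilde P}_M$, $\widehat{\widetilde L}_M$ and the bonus. A monotonicity/induction argument shows $\bar V_M\ge\widetilde V^\star_\ell$: apply the optimistic operator to $\widetilde V^\star_\ell$ and use Step 1 to absorb both model errors into $b_M$. The identity $\gamma$-contraction, restriction and extension of Lemmas~\ref{lem:core-restriction}, \ref{lem:extension-identity} and \ref{lem:extension-Lipschitz} carry over, since clipping is $1$-Lipschitz and monotone. The dictionary error is controlled by item 3, and the value-iteration error is $\gamma^{K_M}V_{\max}\le 1/(2M)$. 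Hence the deployed score satisfies $\max_a\widetilde Q_M(x,a)\ge\widetilde V^\star_\ell(x)-O(1/M)$ at every $x$, including windows outside the dictionary.

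\emph{Step 3 (regret decomposition).} Write $(1-\gamma)\widetilde V^\star_\ell(X_t)-r_t\le(1-\gamma)\widetilde Q_M(X_t,A_t)-r_t+O(1/M_t)$. Then expand $\widetilde Q_M(X_t,A_t)$ one step along the real trajectory. The result is a sum of four pieces:
\begin{enumerate}
\item the bonus $b_{M_t}(S_t,A_t,\widetilde\Theta_t(S_t,A_t))$;
\item the model error of $\widehat{\widetilde P}$ against the next realized state, also bounded by the bonus;
\item the $\widehat{\widetilde L}$ error, bounded by item 2;
\item a martingale difference, controlled by Azuma/Freedman as in Lemma~\ref{lem:lazy-epoch-martingale}.
\end{enumerate}
The telescoping term $\gamma[\bar V(X_{t+1})-\bar V(X_t)]$ breaks only at updates. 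The doubling schedule on $t$ and on each count gives $O(|\mathcal S|^2|\mathcal A|\log T)$ epochs, which costs $V_{\max}|\mathcal S|^2|\mathcal A|\log T$; this is the additive term. Summing bonuses uses $\sum_t(1\vee n_t)^{-1/2}\le\sqrt{|\mathcal S|^2|\mathcal A|T}$, which gives $V_{\max}\sqrt{|\mathcal S|}\cdot\sqrt{|\mathcal S|^2|\mathcal A|T}=V_{\max}\sqrt{|\mathcal S|^3|\mathcal A|T}$. The $M^{-1/2}$ part sums to the same order. This matches \eqref{eq:noisy-simple-rate}. Polynomial cost follows as in Theorem~\ref{thm:noisy-RPTAS}, with $N_M^\ell=\mathrm{poly}(T,V_{\max})$.

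\emph{Main obstacle.} The delicate point is Step 2's optimism on windows that are not in the dictionary, combined with Step 3's one-step expansion. The expansion must relate $\widetilde Q_M$ at the real augmented state $X_t$ to values at $X_{t+1}$, but $\widetilde Q_M$ is defined through the backward extension rather than as a fixed point on all of $\mathcal S\times\Omega^\ell$. The approach is to work throughout with the ideal optimistic value $\bar V_M$ on the full space, which does satisfy a Bellman equation with respect to $\widehat{\widetilde L}_M$. We then show uniformly that $\|\widetilde Q_M-\bar Q_M\|_\infty=O(1/M)$, combining the extension identity with the dictionary Hoeffding bound, and pay the conversion from $\widehat{\widetilde L}_M$ to the true $\widetilde L$ through item 2 of Step 1.
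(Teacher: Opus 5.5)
Your proposal is correct and follows essentially the paper's proof. It has the same ingredients: a per-context i.i.d.-stream $\ell_1$ bound for $\widehat{\widetilde P}_M$ with a union over prefix lengths, a KL/Pinsker bound for $\widehat{\widetilde L}_M$, optimism of the full-space clipped fixed point $\overline V_M$, the bound $\|\widetilde Q_M-\overline Q_M\|_\infty=O(1/M)$ via the extension identity, and the one-step decomposition into bonuses, epoch-wise telescoping and an Azuma martingale term, summed by pigeonhole. One precision: your item 3 should not use the class $\widetilde{\mathcal F}$ built from $\widetilde V^\star_\ell$, but the sections $Z\mapsto\overline V_M(s',\widetilde\theta_{1:\ell-1},Z)$ of the data-dependent optimistic value, with Hoeffding applied conditionally on the history before each (possibly count-triggered) update; this is what your main-obstacle paragraph actually needs, and it is how the paper proceeds.
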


\begin{proof}
\emph{1. Uniform model confidence.}
We claim that, with probability at least $1-\delta/2$, simultaneously for all $1\leq M\leq T$ and all $(s,a,z)$,
\begin{equation}
\|\widehat{\widetilde P}_M(\cdot\mid s,a,z)
       -\widetilde P(\cdot\mid s,a,z)\|_1
\leq2\sqrt{\frac{|\mathcal S|\,\Lambda_{\mathrm P}}{1\vee n_M(s,a,z)}},
\qquad
\|\widehat{\widetilde L}_M-\widetilde L\|_1
\leq2\sqrt{\frac{|\mathcal S|^2|\mathcal A|\,\Lambda_{\mathrm P}}{M}}.
\label{eq:noisy-simple-confidence}
\end{equation}
For the first inequality, condition on the entire noisy-table sequence. By local sufficiency \eqref{eq:noisy-local-sufficiency} and independence across dates and entries, each visit to a context $(s,a,z)$ reveals a fresh successor with law $\widetilde P(\cdot\mid s,a,z)$, independent of previously revealed successors. Equivalently, successive visits to a context consume successive elements of an i.i.d.\ stream attached to that context, and this representation remains valid for adaptive actions because the action is chosen before its latent successor is revealed. For a fixed stream and a fixed prefix length $q\ge1$, let $\widehat p_q$ be the empirical distribution of the first $q$ elements and $p=\widetilde P(\cdot\mid s,a,z)$. Since $\|\widehat p_q-p\|_1=\max_{\varsigma\in\{-1,1\}^{\mathcal S}}\varsigma^\top(\widehat p_q-p)$ and each $\varsigma^\top(\widehat p_q-p)$ is an average of $q$ independent centred variables in $[-1,1]$, Hoeffding's inequality and a union bound over the $2^{|\mathcal S|}$ sign vectors give, for every $\zeta>0$,
\begin{equation*}
\mathbb P\{\|\widehat p_q-p\|_1>\zeta\}
\leq2^{|\mathcal S|}\exp(-q\zeta^2/2).
\end{equation*}
Taking $\zeta=2\sqrt{|\mathcal S|\Lambda_{\mathrm P}/q}$ and a union bound over the $|\mathcal S|^2|\mathcal A|$ streams and $q\leq T$ gives failure probability at most $\delta/4$; this covers the random prefix lengths $n_M(s,a,z)$ without conditioning on their values. The case $n_M(s,a,z)=0$ follows from $\|\widehat p_0-p\|_1\leq2\le2\sqrt{|\mathcal S|\Lambda_{\mathrm P}}$.

For the second inequality, the first $M$ noisy tables are i.i.d.\ with product law $\widetilde L$. For a single row, the method of types gives $\mathbb E\exp[M\operatorname{KL}(\widehat p_M\Vert p)]\le(M+1)^{|\mathcal S|}$, since the number of types of $M$ samples on $|\mathcal S|$ categories is at most $(M+1)^{|\mathcal S|}$ and each type has probability at most $e^{-M\operatorname{KL}}$. Rows are independent and KL is additive over products, so $\mathbb E\exp[M\operatorname{KL}(\widehat{\widetilde L}_M\Vert\widetilde L)]\le(M+1)^{|\mathcal S|^2|\mathcal A|}$, and Markov's inequality yields, for every $w>0$,
\begin{equation}
    \mathbb P\bigl\{ M\operatorname{KL}(\widehat{\widetilde L}_M\Vert\widetilde L)>w \bigr\} \leq(M+1)^{|\mathcal S|^2|\mathcal A|}e^{-w}.
\label{eq:noisy-kl-types}
\end{equation}
Take $w=2|\mathcal S|^2|\mathcal A|\Lambda_{\mathrm P}$. Since $\log(M+1)\le\Lambda_{\mathrm P}$, the right-hand side is at most $(\delta/(8|\mathcal S|^2|\mathcal A|T))^{|\mathcal S|^2|\mathcal A|}\le\delta/(4T)$, and a union bound over $M\le T$ gives failure probability at most $\delta/4$. Pinsker's inequality, $\|\widehat{\widetilde L}_M-\widetilde L\|_1\le\sqrt{2\operatorname{KL}}\le\sqrt{2w/M}$, proves the second inequality of \eqref{eq:noisy-simple-confidence}.

\emph{2. Optimism and finite-planner accuracy.}
For the analysis, let $\overline{\mathcal T}_M$ be the full-space version of \eqref{eq:noisy-simple-finite-backup}, acting on $V:\mathcal S\times\Omega^\ell\to[0,V_{\max}]$ with the dictionary average replaced by $\mathbb E_{\widetilde\Theta\sim\widehat{\widetilde L}_M}$, let $\overline V_M$ be its fixed point and $\overline Q_M$ its unclipped action scores. This operator is monotone, a $\gamma$-contraction, and maps $[0,V_{\max}]$-valued functions into $[0,V_{\max}]$-valued functions. For any $V$ with values in $[0,V_{\max}]$ and any context $(s,a,z)$, adding and subtracting the continuation computed with the true posterior and the empirical table law, and using $|(p-p')^\top f|\le\frac12\|p-p'\|_1\operatorname{span}(f)$, the discrepancy between the empirical and true discounted continuation values is at most
\begin{equation*}
\frac{\gamma V_{\max}}2
\Bigl(
\|\widehat{\widetilde P}_M(\cdot\mid s,a,z)
-\widetilde P(\cdot\mid s,a,z)\|_1
+\|\widehat{\widetilde L}_M-\widetilde L\|_1
\Bigr)
\leq b_M(s,a,z)
\end{equation*}
on the event \eqref{eq:noisy-simple-confidence}, uniformly in $V$, including data-dependent $V$. Consequently $\overline{\mathcal T}_M\widetilde V^\star_\ell\ge\widetilde V^\star_\ell$, and monotonicity together with contraction imply
\begin{equation}
0\leq\widetilde V_\ell^\star\leq\overline V_M\leq V_{\max}.
\label{eq:noisy-simple-optimism}
\end{equation}

Conditionally on the history before an update, $\overline V_M$ is fixed and the dictionary samples $Z^1,\dots,Z^{N_M}$ are i.i.d.\ from $\widehat{\widetilde L}_M$. Hoeffding's inequality for each of the at most $|\mathcal S||\Omega|^{\ell-1}$ functions $Z\mapsto\overline V_M(s',\widetilde\theta_{1:\ell-1},Z)$, and a union bound, give
\begin{equation}
\sup_{s'\in\mathcal S,\ \widetilde\theta_{1:\ell-1}\in\Omega^{\ell-1}}
\Bigl|
\frac1{N_M}\sum_{j=1}^{N_M}\overline V_M(s',\widetilde\theta_{1:\ell-1},Z^j)
-\mathbb E_{\widetilde\Theta\sim\widehat{\widetilde L}_M}
\bigl[\overline V_M(s',\widetilde\theta_{1:\ell-1},\widetilde\Theta)\bigr]
\Bigr|
\leq\frac1{2V_{\max}M}
\label{eq:noisy-simple-dictionary-event}
\end{equation}
except with conditional probability at most $\delta/(4T)$, by the choice of $N_M$ in \eqref{eq:noisy-simple-dictionary}. A union bound over the at most $T$ possible update times makes \eqref{eq:noisy-simple-dictionary-event} hold at every update with failure probability at most $\delta/4$.

Posterior averaging and clipping are $1$-Lipschitz, so \eqref{eq:noisy-simple-dictionary-event} bounds the difference between the full-space operator $\overline{\mathcal T}_M$ and the dictionary operator \eqref{eq:noisy-simple-finite-backup}, both evaluated at $\overline V_M$, by $\gamma/(2V_{\max}M)$; contraction bounds the difference between their fixed points by $\gamma/(2M)$, and the unclipped action scores then differ by at most $\gamma\bigl(\gamma/(2M)+1/(2V_{\max}M)\bigr)=\gamma/(2M)$. The array $v^{K_M}$ approximates the restriction of the dictionary fixed point with error at most $\gamma^{K_M}V_{\max}$; after $\ell$ transitions every table of a queried window belongs to the dictionary, so the backward extension recovers the exact dictionary scores from the exact array (Lemma~\ref{lem:extension-identity}, with posterior averaging, bonus and clipping, all $1$-Lipschitz), and with $v^{K_M}$ the additional score error is at most $\gamma^{K_M+\ell}V_{\max}\leq1/(2M)$ by the choice of $K_M$. Thus $\|\widetilde Q_M-\overline Q_M\|_\infty\le1/M$, and since $\pi_M$ maximises $\widetilde Q_M$,
\begin{equation}
\overline V_M(x)
\leq\overline Q_M(x,\pi_M(x))+\frac2M
\qquad\text{for every augmented state }x.
\label{eq:noisy-simple-greedy}
\end{equation}
This remains true when the maximal score exceeds $V_{\max}$, since $\overline V_M(x)=\min\{V_{\max},\max_a\overline Q_M(x,a)\}$.

\emph{3. Regret decomposition.}
Let $\mathcal F_t$ contain the observations and algorithmic randomisation available after choosing $A_t$, before observing $S_{t+1}$ and the new noisy table $\widetilde\Theta_{t+\ell}$. Write $z_t:=\widetilde\Theta_t(S_t,A_t)$. By \eqref{eq:noisy-simple-greedy} and the uniform discrepancy bound of step~2 applied to $\overline V_{M_t}$,
\begin{equation*}
\overline V_{M_t}(X_t)-r_t
-\gamma\,\mathbb E[\overline V_{M_t}(X_{t+1})\mid\mathcal F_t]
\leq
2b_{M_t}(S_t,A_t,z_t)+\frac2{M_t}.
\end{equation*}
Together with \eqref{eq:noisy-simple-optimism}, this yields
\begin{align}
(1-\gamma)\widetilde V_\ell^\star(X_t)-r_t
&\leq2b_{M_t}(S_t,A_t,z_t)
+\frac2{M_t}
+\gamma\bigl[\overline V_{M_t}(X_{t+1})-\overline V_{M_t}(X_t)\bigr]
\\&+\gamma\bigl[
\mathbb E[\overline V_{M_t}(X_{t+1})\mid\mathcal F_t]
-\overline V_{M_t}(X_{t+1})\bigr].
\label{eq:noisy-simple-one-step-regret}
\end{align}
There are at most $J:=1+(|\mathcal S|^2|\mathcal A|+1)(1+\lfloor\log_2T\rfloor)$ epochs, including time $0$. The value differences telescope within each epoch and contribute at most $V_{\max}$ per epoch. The last bracket defines martingale differences bounded by $\gamma V_{\max}$ in absolute value; Azuma's inequality bounds their sum by $\gamma V_{\max}\sqrt{2T\log(4/\delta)}$ with failure probability at most $\delta/4$, under the original law and without conditioning on the confidence events.

\emph{4. Summation.}
The update schedule ensures $M_t\geq t/2$ and $1\vee n_{M_t}(s,a,z)\geq\tfrac12\bigl(1\vee n_t(s,a,z)\bigr)$ for all $(s,a,z)$. Grouping visits by their $|\mathcal S|^2|\mathcal A|$ possible contexts, using $\sum_{j=0}^{q-1}(1\vee j)^{-1/2}\leq2\sqrt q$ and Cauchy--Schwarz,
\begin{equation*}
\sum_{t=1}^{T-1}
\frac1{\sqrt{1\vee n_{M_t}(S_t,A_t,z_t)}}
\leq2\sqrt{2|\mathcal S|^2|\mathcal A|\,T},
\qquad
\sum_{t=1}^{T-1}\frac1{\sqrt{M_t}}
\leq2\sqrt{2T},
\qquad
\sum_{t=1}^{T-1}\frac1{M_t}\leq2(1+\log T).
\end{equation*}
Consequently,
\begin{equation*}
\sum_{t=1}^{T-1}
b_{M_t}(S_t,A_t,z_t)
\leq2\sqrt2\,\gamma V_{\max}\sqrt{\Lambda_{\mathrm P}}
\Bigl(\sqrt{|\mathcal S|^3|\mathcal A|\,T}+\sqrt{|\mathcal S|^2|\mathcal A|\,T}\Bigr).
\end{equation*}
The regret at time $0$ is at most $1$. Summing \eqref{eq:noisy-simple-one-step-regret} therefore gives, on the intersection of the confidence events,
\begin{align*}
\widetilde{\operatorname{Reg}}(T)
&\leq1+4\sqrt2\,\gamma V_{\max}\sqrt{\Lambda_{\mathrm P}}
\Bigl(\sqrt{|\mathcal S|^3|\mathcal A|\,T}+\sqrt{|\mathcal S|^2|\mathcal A|\,T}\Bigr)
+4(1+\log T)
\\& +V_{\max}J
+\gamma V_{\max}\sqrt{2T\log(4/\delta)},
\end{align*}
which implies \eqref{eq:noisy-simple-rate}. The failure probabilities are at most $\delta/2$ for the model estimates, $\delta/4$ for the dictionary approximations, and $\delta/4$ for the martingale, for a total of at most $\delta$.

Finally, $\log|\Omega|=|\mathcal S||\mathcal A|\log|\mathcal S|$, so $\Lambda_{\mathrm D}$, $N_M$ and $K_M$ are polynomial in the stated parameters for fixed $\ell$. Each dictionary has $|\mathcal S|N_M^\ell$ states, the posterior sums have $|\mathcal S|$ terms, and the backward extension has depth $\ell$; the claimed computational bounds follow as in Theorem~\ref{thm:noisy-RPTAS}.
\end{proof}

\end{document}